\newtheorem{assumption}[definition]{Assumption}
\begin{document}

\title{Lepskii Principle for Distributed Kernel Ridge Regression}


\author{Shao-Bo Lin 
}


\institute{S. B. Lin (sblin1983@gmail.com) \at Center for Intelligent Decision-Making and Machine Learning, School of Management, Xi'an Jiaotong University, Xi'an, China}

\date{Received: date / Accepted: date}

\maketitle

\begin{abstract}
Parameter selection without communicating local data is   quite challenging  in distributed learning, exhibing an inconsistency between theoretical analysis and practical  application of it in tackling distributively stored data. Motivated by the recently developed Lepskii principle and non-privacy communication protocol  for kernel learning, we propose  a Lepskii principle to equip distributed kernel ridge regression (DKRR) and consequently develop an adaptive DKRR with Lepskii principle (Lep-AdaDKRR for  short) by using a double weighted averaging synthesization scheme. We deduce optimal learning rates for Lep-AdaDKRR and theoretically  show that Lep-AdaDKRR succeeds in adapting to the regularity of regression functions, effective dimension decaying rate of  kernels and different metrics of   generalization, which fills the gap of the mentioned inconsistency between theory and application.

\keywords{Distributed learning
 \and Lepskii principles \and kernel ridge regression \and learning rates}
\end{abstract}

%


\section{Introduction}
With the growing awareness of privacy, the issue of data silos has become increasingly severe. Data silos \cite{patel2019bridging} are isolated collections of data that are accessible only to certain local agents and cannot be communicated with each other. In this way, data silos  
prevent   efficient data sharing and collaboration, making it a recent focus in machine learning and scientific computation to address data silos.

Distributed learning based on  divide-and-conquer schemes \cite{balcan2012distributed} is an innovative approach   enables multiple agents   to collaboratively train models without sharing  private information and thus is a promising approach to tackle data silos. Though theoretical results \cite{zhang2015divide,mucke2018parallelizing,shi2019distributed,linj12020optimal}  demonstrated its feasibility and efficiency, showing that distributed learning offers a promising solution to the growing concerns about data privacy,
adaptive parameter selection without sharing data in distributed learning remains open, exhibiting an inconsistency between theoretical analysis and real application and prohibiting the direct usage of distributed learning in settling distributive stored data. 
Our purpose  in this paper is to derive an adaptive parameter selection strategy without communicating private information  to equip distributed  learning to address the data silos.

The parameter selection issue is not pretty new in the realm of distributed learning. It was discussed in \cite{zhang2015divide,guo2017learning} that  optimal parameter of the local algorithm adopted in each local agent in distributed learning should be the same as that of its non-distributed version that tackles the whole data without the privacy consideration. In particular, a generalized cross-validation scheme has been proposed in \cite{xu2019distributed} to equip distributed learning, provided the data themselves can be communicated. When the data cannot shared across  local agents, a logarithmic mechanism based on cross-validation has been proposed in \cite{liu2022enabling}. Despite its  perfect theoretical assessments on the generalization performance, an assumption that the parameter behaves algebraically with respect to $|D|$ for all types of data is imposed, which is difficult to verify in practice.
For distributed kernel ridge regression (DKRR), a well developed distributed kernel-based learning scheme \cite{zhang2015divide,lin2017distributed,chang2017distributed}, 
an adaptive parameter selection based on non-private information communication protocols and cross-validation has been developed in \cite{wang2023adaptive} and optimal learning rates of the corresponding algorithms have been presented.  However, there are still three main drawbacks of the  approach  in \cite{wang2023adaptive}, which inevitably prevent the understanding the role of parameters  in the learning process. At first, the cross-validation approach in \cite{wang2023adaptive}
generally requires that  the empirical excess risk is an accessible unbiased estimate of the population risk,   prohibiting the application of it in deriving parameters for DKRR under the reproducing kernel Hilbert space (RHKS) norm. Then, there is not an explicit relation between the global estimate derived in \cite{wang2023adaptive}  and local estimates produced by local agents, making it difficult to quantify the role of   local estimates. Finally, a   truncation (or clipped) operator available to only bounded outputs \cite{steinwart2008support} is introduced, implying that the corresponding approach is only accessible to data with bounded noise.

This paper proposes a novel parameter selection strategy for DKRR according to the non-privacy communication protocol developed in \cite{wang2023adaptive} and
the well known Lepskii principle  adopted in  \cite{lepskii1991problem,lin2024adaptive}. There are mainly three ingredients in the selection process: local approximation based on specific basis functions, global approximation via communicating non-private information and early stopping determined by differences of successive global approximation and empirical effective dimensions. Different from the classical DKRR \cite{zhang2015divide,lin2017distributed,lin2020distributed} that are built upon a single weighted averaging synthesization, DKRR associated with the proposed Lepskii principle requires double weighted averaging to reflect the differences of optimal parameters on different local agents. With these, we propose a novel distributed learning scheme named as adaptive distributed kernel ridge regression with Lepskii principle (Lep-AdaDKRR for short) succeeds in addressing the  data silos. We present optimal generalization error estimates for Lep-AdaDKRR in the framework of learning theory \cite{cucker2007learning,steinwart2008support} and theoretically show that the proposed parameter selection strategy, without communicating any private information, adapts to the regularity of regression functions \cite{cucker2007learning,steinwart2008support}, decaying property of kernels measured by effective dimension \cite{lu2020balancing}
and approximation metrics \cite{blanchard2019lepskii} and is thus  promising in distributed learning.




\section{Adaptive DKRR with Lepskii Principle}
This section devotes to introducing DKRR, Lepskii principle and Lep-AdaDKRR.  

\subsection{DKRR and parameter selection}
Let $({\mathcal H}_K, \|\cdot\|_K)$ be the   RKHS induced by a
Mercer kernel $K$ on a  compact  metric space ${\mathcal X}$ with $\kappa:=\sup_{x\in\mathcal X}\sqrt{K(x,x)}<\infty$ and $D_j:=\{(x_{i,j},y_{i,j})\}_{i=1}^{|D_j|}\subset\mathcal X\times\mathcal Y$ with  
$\mathcal Y\subseteq\mathbb R$     be the set of data stored on the $j$-th agent for $j=1,2,\dots,m$, where $|A|$ denotes the cardinality of the set $A$.
Without loss of generality, we assume $D_j\cap D_{j'}=\varnothing$, meaning that there are no common samples for any two agents. Write   $D=\cup_{j=1}^mD_j$ as the total samples for the sake of convenience, though it cannot be accessed for any agents due to  data privacy.

For a given $\lambda>0$, DKRR is defined by  \cite{zhang2015divide,lin2017distributed}
\begin{equation}\label{DKRR-1}
    \overline{f}_{D,\lambda}=\sum_{j=1}^m\frac{|D_j|}{|D|}f_{D_j,\lambda},
\end{equation}
where 
\begin{equation}\label{KRR-local}
    f_{D_j,\lambda} :=\arg\min_{f\in \mathcal{H}_{K}}
    \left\{\frac{1}{|D_j|}\sum_{(x, y)\in D_j}(f(x)-y)^2+\lambda\|f\|^2_{K}\right\} 
\end{equation}
denotes the local estimate generated by the $j$-th local agent. Let $\mathbb K_j=\{K(x_{i,j},x_{i',j})\}_{i,i'=1}^{|D_j|}$ be the kernel matrix, it can be found in \cite{smale2007learning} \eqref{KRR-local} that
\begin{equation}\label{Analytic-KRR}
    f_{D_j,\lambda}= \sum_{i=1}^{|D_j|}a_{i,j}K_{x_i},\qquad \vec{a}_j=(a_{1,j},\dots,a_{|D_j|,j})^T=(\mathbb K_j+\lambda|D_j| I)^{-1}y_{D_j},
\end{equation}
where  $K_x:=K(x,\cdot)$ and $y_{D_j}:=(y_{1,j},\dots,y_{|D_j|,j})^T$. This depicts that there are totally $\mathcal O(|D_j|^3)$ floating computations to obtain a local estimate    and consequently $\mathcal O(\sum_{j=1}^m|D_j|^3)$ floating computations to obtain a global estimate of DKRR with given regularization parameter $\lambda$. Due to this, DKRR has originally been regarded as an acceleration version \cite{zhang2015divide,lin2017distributed,lin2020distributed} of the traditional kernel ridge regression (KRR)
\begin{equation}\label{KRR}
    f_{D,\lambda} :=\arg\min_{f\in \mathcal{H}_{K}}
    \left\{\frac{1}{|D|}\sum_{(x, y)\in D}(f(x)-y)^2+\lambda\|f\|^2_{K}\right\}, 
\end{equation}
which needs   $\mathcal O(|D|^3)$ computation complexity to obtain an estimate. 

If the parameter parameter is appropriately selected, the optimal learning rates of $\overline{f}_{D,\lambda}$ has been verified in \cite{zhang2015divide,lin2017distributed,lin2020distributed,wang2023adaptive} to be similar as that of $f_{D,\lambda}$ in the framework of learning theory, in which  the  samples in $D_j$, $j=1,\dots,m$, are assumed to be  drawn identically and independently (i.i.d.) according to an unknown but definite  distribution $\rho=\rho(y|x)\times \rho_X$ with $\rho(y|x)$ the   distribution conditioned on $x$ and $\rho_X$ the marginal distribution with respect  to $x$. The target of learning is to find an estimate based on $D_j,j=1,\dots,m$ to well approximate the regression function $f_\rho:=E[y|x]$. If the whole data set $D$ is accessible, i.e., there is not any privacy and communication considerations, it is not difficult to develop strategies such as cross-validation and generalized cross-validation  to equip DKRR, just as \cite{xu2019distributed} did. However, if data in each local agent cannot be communicated with each other, parameter selection becomes quite challenging, mainly due to the following error decomposition derived in \cite{chang2017distributed,guo2017learning}.
 
\begin{lemma}\label{Lemma:classical-error-dec}
Let $\overline{f}_{D,\lambda}$ be defined by \eqref{DKRR-1}. Then
\begin{equation}\label{classical-error-dec}
    E[\|\overline{f}_{D,\lambda}-f_\rho\|_*^2]
    \leq
    \sum_{j=1}^m\frac{|D_j|^2}{|D|^2}E[\|f_{D_j,\lambda}-f_\rho\|_*^2]+
    \sum_{j=1}^m\frac{|D_j|}{|D|}E[\|f^\diamond_{D_j,\lambda}-f_\rho\|_*^2],
\end{equation}
where $\|\cdot\|_*$ denotes either $\|\cdot\|_\rho$ or $\|\cdot\|_K$ and $f^\diamond_{D_j,\lambda}$ is the noise free version of $f_{D_j,\lambda}$ given by
\begin{equation}\label{KRR:noise-free}
      f^\diamond_{D_j,\lambda} :=\arg\min_{f\in \mathcal{H}_{K}}
    \left\{\frac{1}{|D_j|}\sum_{(x, y)\in D_j}(f(x)-f_\rho(x))^2+\lambda\|f\|^2_{K}\right\}. 
\end{equation}
\end{lemma}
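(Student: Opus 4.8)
\emph{Proof strategy.} The plan is to run a bias--variance (Pythagorean) decomposition twice, once across agents and once inside each agent, and to close the two resulting pieces by an elementary convexity inequality. First I would record the two structural facts that make everything work: (i) the weights $|D_j|/|D|$ are nonnegative and sum to $1$, so $\overline f_{D,\lambda}-f_\rho=\sum_{j=1}^m\frac{|D_j|}{|D|}(f_{D_j,\lambda}-f_\rho)$; and (ii) by the analytic formula \eqref{Analytic-KRR} the map $y_{D_j}\mapsto f_{D_j,\lambda}$ is linear with coefficient matrix $(\mathbb K_j+\lambda|D_j|I)^{-1}$ depending only on the inputs of $D_j$, so that conditioning on the entire collection of input points $\mathbf x=\{x_{i,j}\}_{i,j}$ and using $E[y_{i,j}\mid x_{i,j}]=f_\rho(x_{i,j})$ gives $E[f_{D_j,\lambda}\mid\mathbf x]=f^\diamond_{D_j,\lambda}$, the noise-free estimator of \eqref{KRR:noise-free}. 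Since both candidate norms $\|\cdot\|_\rho$ and $\|\cdot\|_K$ are induced by inner products (on $L^2_{\rho_X}$ and on $\mathcal H_K$ respectively, with $\mathcal H_K\hookrightarrow L^2_{\rho_X}$ bounded because $\kappa<\infty$), all the Hilbert-space identities below are legitimate; for the RKHS norm the inequality is vacuous unless $f_\rho\in\mathcal H_K$, and under that hypothesis one simply works inside $\mathcal H_K$.

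Next I would split, with $\overline f^\diamond_{D,\lambda}:=\sum_{j=1}^m\frac{|D_j|}{|D|}f^\diamond_{D_j,\lambda}$,
\[
E\|\overline f_{D,\lambda}-f_\rho\|_*^2=E\|\overline f_{D,\lambda}-\overline f^\diamond_{D,\lambda}\|_*^2+E\|\overline f^\diamond_{D,\lambda}-f_\rho\|_*^2 ,
\]
the cross term vanishing because, conditionally on $\mathbf x$, the vector $\overline f^\diamond_{D,\lambda}-f_\rho$ is deterministic while $E[\overline f_{D,\lambda}-\overline f^\diamond_{D,\lambda}\mid\mathbf x]=0$ by fact (ii). For the first (``variance'') term I expand the squared norm of the weighted sum; conditioning on $\mathbf x$ again, the summands $f_{D_j,\lambda}-f^\diamond_{D_j,\lambda}$ are independent across $j$ (they are built from the disjoint, independent blocks $D_j$) and each has conditional mean zero, so all off-diagonal terms drop and only $\sum_j\frac{|D_j|^2}{|D|^2}E\|f_{D_j,\lambda}-f^\diamond_{D_j,\lambda}\|_*^2$ survives. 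One more within-agent Pythagorean step, $E[\|f_{D_j,\lambda}-f_\rho\|_*^2\mid\mathbf x_{D_j}]=E[\|f_{D_j,\lambda}-f^\diamond_{D_j,\lambda}\|_*^2\mid\mathbf x_{D_j}]+\|f^\diamond_{D_j,\lambda}-f_\rho\|_*^2$, then gives $E\|f_{D_j,\lambda}-f^\diamond_{D_j,\lambda}\|_*^2\le E\|f_{D_j,\lambda}-f_\rho\|_*^2$, which is exactly the first sum on the right-hand side of \eqref{classical-error-dec}.

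Finally, for the second (``approximation'') term I would apply Jensen's inequality to the convex map $\|\cdot\|_*^2$ with the probability weights $|D_j|/|D|$, obtaining $E\|\overline f^\diamond_{D,\lambda}-f_\rho\|_*^2\le\sum_j\frac{|D_j|}{|D|}E\|f^\diamond_{D_j,\lambda}-f_\rho\|_*^2$, the second sum in \eqref{classical-error-dec}; adding the two bounds concludes. The only genuinely delicate point is the bookkeeping behind $E[f_{D_j,\lambda}\mid\mathbf x]=f^\diamond_{D_j,\lambda}$ and the vanishing of the cross terms: one must condition on the full set of inputs across \emph{all} agents, observe that $f_{D_j,\lambda}$ then depends only on $\mathbf x_{D_j}$ together with the (conditionally independent) noise in $D_j$, and justify interchanging conditional expectation with the linear data-to-estimator map in $\mathcal H_K$ (immediate, since that map is finite-rank and norm-continuous). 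Note also that it is precisely the independence argument, rather than plain convexity, that produces the squared weights $|D_j|^2/|D|^2$ in the variance sum; everything else is the standard divide-and-conquer algebra.
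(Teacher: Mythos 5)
Your proposal is correct and follows essentially the same route as the standard argument the paper relies on (the lemma is quoted from \cite{chang2017distributed,guo2017learning} without a new proof): split off the noise-free estimators $f^\diamond_{D_j,\lambda}=E[f_{D_j,\lambda}\mid \mathbf x]$, kill the cross terms by conditioning on the inputs and using independence of the blocks, bound the variance part by the full local errors via the within-agent Pythagorean identity, and apply Jensen to the averaged approximation part. Your handling of the $\|\cdot\|_K$ case (implicitly requiring $f_\rho\in\mathcal H_K$, which holds under Assumption \ref{Assumption:regularity} with $r\geq 1/2$) is a fair and accurate caveat.
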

 
As shown in \eqref{classical-error-dec}, the generalization error of $\overline{f}_{D,\lambda}$ can
be divided into two terms, each of which performs better than the local estimate $f_{D_j,\lambda}$. In particular, there is an addition $\frac{|D_j|}{|D|}$ in the first term of the r.s.h of \eqref{classical-error-dec} to show the power of synthesization, while the second term, though utilize only $|D_j|$ samples, focuses only on noise-free data. To derive comparable generalization errors of $\overline{f}_{D,\lambda}$ and $f_{D,\lambda}$, it is necessary to made $\sum_{j=1}^m\frac{|D_j|}{|D|}E[\|f^\diamond_{D_j,\lambda}-f_\rho\|_*^2]$  comparable with  $E[\|f_{D,\lambda}-f_\rho\|_*^2]$, requiring  strict restrictions on $|D_j|, j=1,\dots,m$. The problem is, however, that the selection of $\lambda$ minimizing $  \|f^\diamond_{D_j,\lambda}-f_\rho\|_*$ is much smaller that  minimizing $\|f_{D_j,\lambda}-f_\rho\|_*$, and cannot be numerically found directly.
  In a word, if data in $D_j$ cannot communicated with each other, then  it is difficult to adopt   the existing parameter selection strategies such as cross-validation \cite{caponnetto2010cross} and generalized cross-validation \cite{xu2019distributed} to equip DKRR.

\subsection{Lepskii principle for DKRR}
Our purpose is to utilize the recently developed Lepskii principle (also called as the balancing principle) \cite{lu2020balancing,blanchard2019lepskii,lin2024adaptive} to equip DKRR to fill the gap between theory and applications. Lepskii principle that  selects parameter by bounding  differences of two successive estimates  was firstly adopted in \cite{de2010adaptive} for the learning purpose to determine the regularization parameter of KRR and then improved in \cite{lu2020balancing} to involve the capacity information of RKHS and \cite{blanchard2019lepskii} to adapt to different metrics of generalization. In our recent work \cite{lin2024adaptive}, we proposed a novel Lepskii principle to select regularization parameter for KRR to remove the recurrent comparisons among different parameters in the existing literature \cite{lu2020balancing,blanchard2019lepskii} and derived optimal learning rates for corresponding KRR that is a new record for selection parameter in kernel methods. 
To introduce the Lepskii principle, we need the following mild assumption abounding in the literature \cite{caponnetto2007optimal,blanchard2018optimal,blanchard2019lepskii,lin2018distributed,lin2024adaptive}. 
\begin{assumption}\label{Assumption:boundedness}
   Assume $\int_{\mathcal Y}
y^2d\rho<\infty$ and
\begin{equation}\label{Boundedness for output}
\int_{\mathcal Y}\left(e^{\frac{|y-f_\rho(x)|}M}-\frac{|y-f_\rho(x)|}M-1\right)d\rho(y|x)\leq
\frac{\gamma^2}{2M^2}, \qquad \forall x\in\mathcal X,
\end{equation}
where $M$ and $\gamma$ are positive constants.
\end{assumption}
  It can be found in   
\cite{bauer2007regularization} that (\ref{Boundedness for output}) is equivalent
to the following momentum condition (up to a change of constants)
$$
           \int_{\mathcal
           Y}|y-f_\rho(x)|^{\ell}d\rho(y|x)\leq\frac12\ell
           !\gamma^2M^{\ell-2}, \qquad \forall \ell\geq 2, x\in\mathcal X 
$$
and therefore  is satisfied when the noise is uniformly
bounded, Gaussian or sub-Gaussian \cite{raskutti2014early}. In  particular, if $|y_{i,j}|\leq M'$ almost surely for some $M'>0$,  then \eqref{Boundedness for output} holds naturally  with 
$\gamma/2 = M = M'$.

In each local agent, define  
 \begin{eqnarray}\label{Def.WD}
\mathcal W_{D_j,\lambda}:=
 \frac{1}{|D_j|\sqrt{ \lambda}}+\left( 1+ \frac{1}{\sqrt{\lambda|D_j|}}\right) \sqrt{\frac{\max\{\mathcal{N}_{D_j}(\lambda),1\}}{|D_j|}},
\end{eqnarray}
 where 
\begin{equation}\label{Definition-empi-effec}
   \mathcal N_{D_j}(\lambda):
   ={\rm Tr}[(\lambda|D_j|I+\mathbb K_j)^{-1}\mathbb K_j],\qquad \forall\ \lambda>0 
\end{equation}
denotes the   empirical effective dimension and and  ${\rm Tr}(A)$ is the trace of the matrix (or operator) $A$. 
   Denote
\begin{equation}\label{Def.function-differenction}
    g_{D_j,\lambda_{k}}:=f_{D_j,\lambda_{k}}-f_{D_j,\lambda_{k-1}} 
\end{equation}
as the difference of successive estimate with \begin{equation}\label{def:lambda-selection}
    \lambda_{k}:=\frac1{kb},\qquad b\geq 1,  \quad k=1,2, \dots.
\end{equation}
The Lepskii principle proposed in \cite{lin2024adaptive} starts with an upper bound of $k$ on the $j$-th  local agent,
$$
    K_j:=\left[\frac{|D_j|}{16b(C_1^*)^2\log^3(16|D_j|)}\right],\qquad j=1,\dots,m
$$
where $[a]$ denotes the integer part of the positive number $a$ and $C_1^*:=\max\{(\kappa^2+1)/3,2\sqrt{\kappa^2+1}\}$. Then selection $k:=K_j,K_j-1,\dots,1$ be first (or largest) integer such that 
\begin{equation}\label{stopping-LEP}
  \|(L_{K,D_j}+\lambda_{k-1} I)^{1/2}(f_{\lambda_{k},D_j}-f_{\lambda_{k-1},D_j})\|_{K}\geq
C_{LP}^* \lambda_{k-1}\mathcal W_{D_j, \lambda_{k}}
\end{equation}
for some computable constant $C_{LP}^*$ depending only on $M,\gamma,b$\footnote{Since we are concerned with estimate in expectation, the logarithmic term associated with the confidence level is removed. As a result, the constant $C_{LP}^*$ is different from $C_{US}$ in \cite{lin2024adaptive}.}, where $L_{K,D_j}:\mathcal H_K\rightarrow\mathcal H_K$ is defined by
$$
  L_{K,D_j}f:=\sum_{(x,y)\in D_j}f(x)K_x.
$$
Due to the above definition, it is easy to derive \cite{blanchard2019lepskii,lin2024adaptive}
$$
   \|(L_{K,D_j}+\lambda_k I)^{1/2}(f_{\lambda_{k},D_j}-f_{\lambda_{k-1},D_j})\|_{K}
   =\left(\|f_{D_j,\lambda_k}-f_{D_j,\lambda_{k-1}}\|_{D_j}^2+\lambda_{k}\|f_{D_j,\lambda_k}-f_{D_j,\lambda_{k-1}}\|_K^2\right)^{1/2},
$$
where $\|f\|_{D_j}^2=\frac1{|D|}\sum_{i=1}^{|D_j|}|f(x_{i,j})|^2$ and for $f_{D_j,\lambda_k}=\sum_{i=1}^{|D_j|}\alpha_{i,k}K_{x_{i,j}}$ 
$$
   \|f_{_j,\lambda_k}-f_{D_j,\lambda_{k-1}}\|_K^2
   =\sum_{i,i'=1}^{|D_j|}(\alpha_{i,k}-\alpha_{i,k-1})(\alpha_{i,k}-\alpha_{i',k-1})K(x_{i,j},x_{i',j}).
$$ 
All these demonstrate that both sides in \eqref{stopping-LEP} are computable, making the parameter selection be numerically feasible. 

The learning performance of KRR with the mentioned Lepskii principle \eqref{stopping-LEP} has been verified in \cite{lin2024adaptive} to be optimal in the framework of learning theory. However, it cannot be adopted directly to select the regularization parameter for DKRR. In fact, as shown in Lemma \ref{Lemma:classical-error-dec}, besides an upper bound for $\|f_{D_j,\lambda}-f_\rho\|_*$ whose parameter can be determined by \eqref{stopping-LEP}, it also requires to bound $\|f^\diamond_{D_j,\lambda}-f_\rho\|_*$, in which $f^\diamond_{D_j,\lambda}$ is unknown in practice and thus we cannot use  \eqref{stopping-LEP} to determine a suitable $\lambda$.  

In this paper, we borrow the ``non-parameter$\rightarrow$parameter$\rightarrow$non-parameter'' protocol from \cite{wang2023adaptive} to develop a  variant of Lepskii principle to feed DKRR.  The ``non-parameter$\rightarrow$parameter'' step is to generate the same set of   basis functions on both local agent and global agent and use the linear combination of these basis functions to approximate the local estimate $f_{D_j,\lambda}$ for given $\lambda$. Then local agents communicate the coefficients of linear combination to the global agent to synthesize a global  approximation of the global estimate $\overline{f}_{D,\lambda}$. The global agent then transmits back the coefficients of the global approximation back to each local agent. The  ``parameter$\rightarrow$non-parameter'' step is to select $\lambda$ by using the Lepskii principle like \eqref{stopping-LEP} based on the obtain parametric global approximation on each local agent.

\subsection{Adaptive DKRR with Lepskii principle} 

In this subsection, we propose a novel  adaptive DKRR algorithm without communicating data. We call it as the Adaptive DKRR with Lepskii principle (Lep-AdaDKRR for short). There are mainly five steps in Lep-AdaDKRR. 
 
$\bullet$ {\it Step 1. Center  generating and communications:} The $j$-th local agent  transmits the number of samples $|D_j|$ to the global agent and the global agent i.i.d. draws 
$
     L \geq \max_{j=1,\dots,m} |D_j|
$
points $\Xi_L:=\{\xi_\ell\}_{\ell=1}^L\subseteq \mathcal X$ according to the uniform distribution and transmits them as well as the value 
\begin{equation}\label{DefK*}
          K^*=\min_{j=1,\dots,m}\left[\frac{|D_j|}{16b(C_1^*)^2\log^3(16|D_j|)}\right].
\end{equation}
back to all local agent.  

$\bullet$ {\it Step 2. Local  approximation generating and communications:} In each local agent, a set of local estimates $\{f_{D_j,\lambda_{k}}\}_{k=1}^{K^*}$ defined by 
\eqref{KRR-local}
 with $\lambda_k$ satisfying \eqref{def:lambda-selection} and a set of quantities $\{\mathcal W_{D_j,\lambda_k}\}_{k=1}^{K^*}$ defined by \eqref{Def.WD} are generated.  Moreover, after receiving $L$ centers from the global agent, each local agent generates a set of basis functions by 
\begin{equation}\label{Basis-generation}
     B_{L,K}:=\left\{\sum_{\ell=1}^La_\ell K_{\xi_\ell}:a_\ell\in\mathbb R\right\}.
\end{equation}
Then, define the local approximation  of $g_{D_j,\lambda_{k}}$ satisfying \eqref{Def.function-differenction}  by
\begin{equation}\label{local-approximation}
g^{loc}_{D_j,\lambda_{k},\mu_j }:=\arg\min_{f\in B_{L,K}}\frac1{|D_j|}\sum_{i=1}^{|D_j|}(f(x_{i,j})
     -g_{D_j,\lambda_{k}}(x_{i,j}))^2+\mu_j \|f\|_K^2 =:\sum_{\ell=1}^La_{j,k,\ell,\mu_j }K_{\xi_\ell}
\end{equation}
with 
\begin{equation}\label{Selection-mu}
    \mu_j :=\frac{48bC_1^*\log(1+8\kappa |D_j|)}{|D_j|}  .
\end{equation}
The $j$-th local agent then
transmits the vector $\{\mathcal W_{D_j,\lambda_{k}}\}_{k=1}^{K^*}$ and  the
 coefficient matrix 
$\{a_{j,k,\ell,\mu_j }\}_{k=1,\ell=1}^{K^*,L}$
to the global agent.

$\bullet$ {\it Step 3. Global approximation generating and communications:}  The global agent  collects $j$ sets of coefficients and synthesizes them via weighted average, i.e.,
\begin{equation}\label{global-111}
    {a}^{global}_{k,\ell,\mu }
  :=\sum_{j=1}^m\frac{|D_j|}{|D|}a_{j,k,\ell,\mu_j },
  \qquad 
  \overline{\mathcal W}_{D,\lambda_k}:=\sum_{j=1}^{m}\frac{|D_j|^2}{|D|^2}
  \mathcal W_{D_j,\lambda_k}^2.
\end{equation}
Then the global agent transmits   the vector $\{\vec{\mathcal W}_{D,\lambda_k}\}_{k=1}^{K^*}$ and the coefficient matrix $\{ {a}^{global}_{k,\ell,\mu }\}_{k=1,\ell=1}^{K^*,L}$  to each local agent.

$\bullet$ {\it Step 4. Lepskii-type principle and communications:} Each local agent receives the   coefficients  $\{ {a}^{global}_{k,\ell,\mu }\}_{k=1,\ell=1}^{K^*,L}$  and generates the global approximation
 \begin{equation}\label{global-approximation-1}
    g^{global}_{D,\lambda_k,\mu}
   :=\sum_{j=1}^m\frac{|D_j|}{|D|}\sum_{\ell=1}^La_{j,k,\ell,\mu_j }K_{\xi_\ell}=\sum_{j=1}^m\frac{|D_j|}{|D|} 
   g^{loc}_{D_j,\lambda_{k},\mu_j }.
\end{equation}
For $k=K^*,K^*-1\dots,2$,
define ${k}_{j}^*$ to be the first $k$ satisfying 
\begin{equation}\label{stopping-1}
   \|(L_{K,D_j}+\lambda_k I)^{1/2}g^{global}_{D,\lambda_k,\mu}\|_K^2 
    \geq
    C_{LP}  \lambda_{k-1}^2\overline{\mathcal W}_{D,\lambda_k},
\end{equation}
where $C_{LP}:=4 b^2 (1+576\Gamma(5)(\kappa M +\gamma)^2(\sqrt{2}+4)^2 )$.
If there is not any $k$ satisfying the above inequality, define $k^*_{j}=K^*$.
Write $ {\lambda}^*_{j}=\lambda_{k^*_j}$. Given a query point $x^*$, each local agent transmits the matrix $\{f_{D_{j'},\lambda_j^*}(x^*)\}_{j,j'=1}^m$ to the global machine.  

$\bullet$ {\it Step 5. Global estimate generating:}
The global agent generates a global estimate via double weighted averaging, i.e.,
\begin{equation}\label{final-global}
    \overline{f}_{D, \vec{\lambda}^*}
    :=\sum_{j=1}^m\frac{|D_j|}{|D|}\overline{f}_{j}
    :=\sum_{j=1}^m\frac{|D_j|}{|D|}\sum_{j'=1}^m \frac{|D_{j'}|}{|D|}
      f_{D_{j'},\lambda^*_j},
\end{equation}
that is, based on the query point $x^*$, the global agent present a prediction $\overline{f}_{D, \vec{\lambda}^*}(x^*)$.
 
We then present several explanations  on the above procedures. In Step 1, a set of random variables needs to be communicated that may bring additional burden on communications. It should be highlighted that such a communication process can be avoided by using  low-discrepancy sequences such as Sobol sequences and Halton sequences, just as \cite{wang2023adaptive} did. For example, each local agent can generate the same Sobol sequences of size $L$. Our reason to use i.i.d. random variables and communicate them is only to ease the theoretical analysis. In Step 2,  we build the set of basis function $B_{L,K}$ as \eqref{Basis-generation} based on the kernel and center sets $\{\xi_\ell\}_{\ell=1}^L$. 
We believe that if the embedding space $\mathcal X$ is specified, $B_{L,K}$ can be replaced by some more simple  sets such as the set of polynomials, splines and wavelets. 
It should be highlighted that if the communication costs are not considered, larger $L$ and consequently smaller $\mu_j $ lead  to better performance of the local approximation.  In Step 4, a Lepskii principle similar as \eqref{stopping-LEP} is introduced for DKRR, which is pretty novel in the realm of distributed learning. Though we have present a detailed value of  $C_{LP}$ in \eqref{stopping-1}, its estimate is a little bit pessimistic. In practice, since $C_{LP}$ is independent of $|D_j|,j=1,\dots,m$, we can select part of the samples, i.e.,  the first $1/10$ samples from $D_j$, to adaptively determine   it by  cross-validation. In Step 5, we use a double weighted averaging scheme rather than the classical single weighted averaging scheme as \eqref{DKRR-1}. The first weighted averaging in \eqref{final-global} is made due to the global-approximation-based parameter selection strategy \eqref{stopping-1} to guarantee  the global approximation can  represent DKRR with fixed $\lambda$ well, while the second weighted averaging is to embody the difference of $D_j$ to show that different data sets correspond  to different regularization parameters.
In particular, if we focus on DKRR with fixed $\lambda_1=\dots=\lambda_m$, then the second weighted averaging can be removed.

\section{Theoretical Verifications}

In this section, we present theoretical verifications on the developed Lepskii principle for distributed learning in terms of its adaptivity  to the kernels, regression functions and error measurements. To show its adaptivity to regression functions, we need a well known regularity assumption, which is based on the 
 integral operator $L_K:L_{\rho_X}^2\rightarrow L_{\rho_X}^2$ (or $\mathcal H_K\rightarrow\mathcal H_K$ if no confusion is made) defined by
$$
           L_Kf=\int_{\mathcal X}f(x)K_xd\rho_X.
$$

\begin{assumption}\label{Assumption:regularity}
For $r>0$, assume
\begin{equation}\label{regularitycondition}
         f_\rho=L_K^r h_\rho,~~{\rm for~some}  ~ h_\rho\in L_{\rho_X}^2,
\end{equation}
where $L_K^r$  is defined by spectral calculus.
\end{assumption}

It is easy to see that Assumption \ref{Assumption:regularity}, abounding in the learning theory literature \cite{smale2007learning,caponnetto2007optimal,blanchard2016convergence,lin2017distributed,mucke2018parallelizing,guo2023optimality,lin2024adaptive}, 
describes the regularity of the regression function $f_\rho$ by showing that larger index $r$ implies better regularity of $f_\rho$. In particular, \eqref{regularitycondition} with $r=1/2$ is equivalent to $f_\rho\in\mathcal H_K$ while
\eqref{regularitycondition} with  $r>1/2$ implies $f_\rho$ is in a proper subset of $\mathcal H_K$. 
Different from the parameter selection strategy presented in \cite{raskutti2014early} that assumes $f_\rho\in\mathcal H_K$, i.e., \eqref{regularitycondition} with $r=1/2$, our purpose is to show that the proposed Lepskii principle for DKRR can adaptively reflect the regularity of $f_\rho$ in terms that it is available to \eqref{regularitycondition} with   $r\geq 1/2$.  

To adaptively embody the information of kernel, the 
effective dimension defined by
$$
        \mathcal{N}(\lambda)={\rm Tr}((\lambda I+L_K)^{-1}L_K),  \qquad \lambda>0 
$$
should be given. The following assumption quantifies the kernel via the effective dimension.

\begin{assumption}\label{Assumption:effective dimension}
 There exists some $s\in(0,1]$ such that
\begin{equation}\label{assumption on effect}
      \mathcal N(\lambda)\leq C_0\lambda^{-s},
\end{equation}
where $C_0\geq 1$ is  a constant independent of $\lambda$.
\end{assumption}

The above assumption widely adopting in the learning theory literature \cite{blanchard2016convergence,blanchard2019lepskii,lin2020distributed} was proved in \cite{fischer2020sobolev} to be equivalent to the well known eigenvalue decaying assumption \cite{caponnetto2007optimal,steinwart2009optimal,raskutti2014early}:
$$
    \sigma_k\leq C_0'k^{-\frac1s},
$$
where $(\sigma_k,\phi_k)$ is the eigen-pairs of $L_K$ with $\sigma_1\geq\sigma_2\geq\cdots$.
It is obvious that \eqref{assumption on effect} always holds for $s=0$ and $C_0'=\kappa$, and the index $s$ 
quantitatively reflects the smoothness of the kernel $K$. Based on the above assumptions, we are in a position to present our main results in the following theorem. 
 
\begin{theorem}\label{Theorem:Optimal-Rate-adaptive}
Under Assumption \ref{Assumption:boundedness}, Assumption \ref{Assumption:regularity} with $\frac12\leq r\leq 1$ and Assumption \ref{Assumption:effective dimension} with $0\leq s\leq 1$, if
  \begin{equation}\label{res.theorem}
      |D_j|\geq \bar{C}_0|D|^{\frac1{2r+s}}\log^3|D|  
  \end{equation}
for some $\bar{C}_0$ depending only $r,s$, and $\mu_j $ satisfies \eqref{Selection-mu} with $L\geq\max\{|D_1|,\dots,|D_m|\}$,
then
  \begin{eqnarray}\label{Final-rho-bound}
      E[\| \overline{f}_{D,\vec{\lambda}^*}-f_\rho\|_\rho^2]
     \leq \bar{C}|D|^{-\frac{2r}{2r+s}} 
\end{eqnarray} 
and
  \begin{eqnarray}\label{Final-K-bound}
      E[\|\overline{f}_{D,\vec{\lambda}^*}-f_\rho\|_K^2]
     \leq \bar{C}|D|^{-\frac{2r-1}{2r+s}}, 
\end{eqnarray} 
where $\bar{C}$   specified in the proof  is a constant depending only on $M,\|h\|_\rho,\gamma,s,r,\kappa$.
\end{theorem}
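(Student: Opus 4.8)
\medskip\noindent\textbf{Proof plan.} The crucial first reduction is that the double weighted average \eqref{final-global} collapses to a single weighted average of ordinary fixed-parameter DKRR estimates: with $\overline{f}_{D,\lambda}$ as in \eqref{DKRR-1}, the inner sum in \eqref{final-global} is $\overline{f}_j=\overline{f}_{D,\lambda_j^*}$, hence $\overline{f}_{D,\vec{\lambda}^*}=\sum_{j=1}^m\frac{|D_j|}{|D|}\,\overline{f}_{D,\lambda_j^*}$. Since the weights are nonnegative and sum to one, convexity of $t\mapsto t^2$ and of $\|\cdot\|_*\in\{\|\cdot\|_\rho,\|\cdot\|_K\}$ give
\begin{equation*}
E\big[\|\overline{f}_{D,\vec{\lambda}^*}-f_\rho\|_*^2\big]\le\sum_{j=1}^m\frac{|D_j|}{|D|}\,E\big[\|\overline{f}_{D,\lambda_j^*}-f_\rho\|_*^2\big],
\end{equation*}
so it suffices to prove the two claimed rates for $\overline{f}_{D,\lambda_j^*}$, a \emph{standard} DKRR estimate evaluated at the data-driven scale $\lambda_j^*$ produced in Step~4, uniformly in $j$.

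\medskip\noindent\textbf{Deterministic bias--variance ingredients.} Writing $f_\lambda:=(L_K+\lambda I)^{-1}L_Kf_\rho$, Assumption~\ref{Assumption:regularity} with $\tfrac12\le r\le1$ yields $\|f_\lambda-f_\rho\|_\rho\lesssim\|h_\rho\|_\rho\lambda^{r}$ and $\|f_\lambda-f_\rho\|_K\lesssim\|h_\rho\|_\rho\lambda^{r-1/2}$. For the stochastic part I would feed the error decomposition of Lemma~\ref{Lemma:classical-error-dec} the operator-perturbation second-moment estimates of \cite{lin2017distributed,guo2017learning}; the size restriction \eqref{res.theorem} is precisely what makes the noise-free term $\sum_j\frac{|D_j|}{|D|}E[\|f_{D_j,\lambda}^\diamond-f_\rho\|_*^2]$ no larger, at $\lambda\asymp|D|^{-1/(2r+s)}$, than the centralized error. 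Together with the high-probability comparisons $\mathcal N_{D_j}(\lambda)\asymp\mathcal N(\lambda)\lesssim\lambda^{-s}$ and $\|(L_{K,D_j}+\lambda I)^{1/2}(L_K+\lambda I)^{-1/2}\|\asymp1$, valid simultaneously over $k\le K^*$ and $j\le m$ (as in \cite{lin2024adaptive}, using \cite{fischer2020sobolev} under Assumption~\ref{Assumption:effective dimension}), this gives $E[\|\overline{f}_{D,\lambda}-f_\rho\|_\rho^2]\lesssim\lambda^{2r}+\mathcal N(\lambda)/|D|$ and $E[\|\overline{f}_{D,\lambda}-f_\rho\|_K^2]\lesssim\lambda^{2r-1}+\mathcal N(\lambda)/(\lambda|D|)$ on the grid, and $\overline{\mathcal W}_{D,\lambda}\asymp\mathcal N(\lambda)/|D|$. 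Both error expressions are balanced by the same oracle scale $\lambda^\dagger\asymp|D|^{-1/(2r+s)}$, which \eqref{res.theorem} forces to lie inside the grid, since it implies $K^*\gtrsim(\lambda^\dagger)^{-1}$.

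\medskip\noindent\textbf{The Lepskii analysis.} It remains to show the stopping rule \eqref{stopping-1}---run on the parametric global approximation $g^{global}_{D,\lambda_k,\mu}$ of \eqref{global-approximation-1} rather than on the true increment $\overline{f}_{D,\lambda_k}-\overline{f}_{D,\lambda_{k-1}}$---outputs $\lambda_j^*\asymp\lambda^\dagger$. Two things are needed. First, the \emph{approximation layer}: with $L\ge\max_j|D_j|$ Nystr\"om centres and $\mu_j$ as in \eqref{Selection-mu}, one bounds $\|(L_{K,D_j}+\lambda_k I)^{1/2}(g^{loc}_{D_j,\lambda_k,\mu_j}-g_{D_j,\lambda_k})\|_K$ by a strictly lower-order multiple of $\lambda_{k-1}\mathcal W_{D_j,\lambda_k}$, uniformly over $k\le K^*$ and $j\le m$ (the quantitative content of the ``non-parameter$\rightarrow$parameter$\rightarrow$non-parameter'' protocol of \cite{wang2023adaptive}), so $g^{global}_{D,\lambda_k,\mu}$ may be replaced by $\overline{f}_{D,\lambda_k}-\overline{f}_{D,\lambda_{k-1}}$ in \eqref{stopping-1} at the cost of an absorbable change of $C_{LP}$. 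Second, a \emph{two-sided balancing estimate for the true increments}: because consecutive scales satisfy $\lambda_{k-1}/\lambda_k=1+(k-1)^{-1}\asymp1$ so that the increment of the estimate is $\asymp b\lambda_{k-1}$ times its logarithmic derivative, one shows on the good event that for grid points $\lambda_k\lesssim\lambda^\dagger$ one has $\|(L_{K,D_j}+\lambda_k I)^{1/2}(\overline{f}_{D,\lambda_k}-\overline{f}_{D,\lambda_{k-1}})\|_K^2<C_{LP}\lambda_{k-1}^2\overline{\mathcal W}_{D,\lambda_k}$ (its stochastic size), so \eqref{stopping-1} fails there, while for $\lambda_k\gtrsim\lambda^\dagger$ the bias-driven part of the increment, of order $\lambda_{k-1}\lambda_k^{r}$, exceeds the threshold $\asymp\lambda_{k-1}(\mathcal N(\lambda_k)/|D|)^{1/2}$, so \eqref{stopping-1} holds. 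Since $k_j^*$ is the largest index satisfying \eqref{stopping-1} (equivalently $\lambda_j^*$ the smallest accepted scale), these two facts pin down $\lambda_j^*\asymp\lambda^\dagger$, and substituting into the bias--variance bounds of the previous paragraph gives the stated rates on the good event. On the complementary event each $f_{D_{j'},\lambda_j^*}$ obeys $\|f_{D_{j'},\lambda_j^*}\|_K\le\mathrm{poly}(|D|)\cdot\frac1{|D_{j'}|}\sum_i|y_{i,j'}|$ because $\lambda_j^*\ge\lambda_{K^*}\gtrsim\log^{-3}|D|/|D_{j'}|$, so the squared error there has expectation $\lesssim\mathrm{poly}(|D|)\sqrt{P(\text{bad})}$ by Cauchy--Schwarz together with the polynomial moment bounds on $y-f_\rho(x)$ implied by Assumption~\ref{Assumption:boundedness}; choosing the confidence level polynomially small in $|D|$ (the choice implicit in $C_{LP}$, cf.\ the footnote) makes this of lower order.

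\medskip\noindent\textbf{Main obstacle.} I expect the crux to be making the approximation-layer bound and the two-sided balancing estimate hold on a \emph{single} high-probability event valid simultaneously for all $m$ agents and all $K^*$ grid scales: this couples the Nystr\"om approximation error of the increments in the $(L_{K,D_j}+\lambda I)^{1/2}$-weighted $\mathcal H_K$ norm, the empirical-versus-population operator and effective-dimension comparisons needed to rewrite \eqref{stopping-1} in clean bias--variance language, and the lower bound showing that \eqref{stopping-1} actually fires near $\lambda^\dagger$ rather than being defeated by stochastic cancellation inside the bias-driven increment. Extracting the explicit constant $C_{LP}$ of \eqref{stopping-1} from this coupling, as stated, is the main bookkeeping burden.
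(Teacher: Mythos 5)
Your first reduction (collapsing \eqref{final-global} to $\overline{f}_{D,\vec{\lambda}^*}=\sum_j\frac{|D_j|}{|D|}\overline{f}_{D,\lambda_j^*}$ and using convexity, as the paper does via Lemma \ref{Lemma:power-synthesization}) and your treatment of the Nystr\"om layer are in the right spirit, but the core of your Lepskii step contains a genuine gap: you claim the stopping rule \eqref{stopping-1} \emph{pins down} $\lambda_j^*\asymp\lambda^\dagger\asymp|D|^{-1/(2r+s)}$, and in particular that for $\lambda_k\gtrsim\lambda^\dagger$ the ``bias-driven part of the increment, of order $\lambda_{k-1}\lambda_k^r$, exceeds the threshold, so \eqref{stopping-1} holds.'' Under Assumption \ref{Assumption:regularity} the quantity $\lambda^r\|h_\rho\|_\rho$ is only an \emph{upper} bound on the bias; no lower bound of this order on $\|(L_{K,D_j}+\lambda_k I)^{1/2}(f_{D,\lambda_k}-f_{D,\lambda_{k-1}})\|_K$ is available (the true bias, hence the increment, can be much smaller than $\lambda^r$ for particular $f_\rho$), so the rule may simply never fire above $\lambda^\dagger$. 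Indeed the algorithm explicitly allows this: if no $k$ satisfies \eqref{stopping-1} one sets $k_j^*=K^*$, i.e.\ $\lambda_j^*=\lambda_{K^*}\ll\lambda^\dagger$, which already contradicts the asserted two-sided localization; and plugging such a small $\lambda_j^*$ naively into your bias--variance bound $\lambda^{2r}+\mathcal N(\lambda)/|D|$ destroys the rate (e.g.\ $r=1/2$, $s=1$ gives variance of order $1$ versus the target $|D|^{-1/2}$). So the plan ``show $\lambda_j^*\asymp\lambda^\dagger$, then substitute into the oracle bound'' cannot be carried out as stated.

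The paper's proof avoids any localization of $\lambda_j^*$ and instead splits the agents into two cases relative to $\lambda_0=\lambda_{k_0}$, $k_0=[|D|^{1/(2r+s)}]$. For under-estimated agents ($\lambda_{k_j^*}\le\lambda_0$) it uses the \emph{firing} inequality \eqref{stopping-1} at $k=k_j^*$ itself, in expectation: combined with Proposition \ref{Prop:global-approximation} and Lemma \ref{Lemma:Q222-111} this yields $\overline{\mathcal W}_{D,\lambda_{k_j^*}}\le\tilde C_1\lambda_{k_j^*}^{2r}$ (Proposition \ref{Proposition:bound on effect}), i.e.\ the variance proxy at the selected scale is dominated by the bias proxy there, and both are $\le\lambda_0^{2r}$; this is how an arbitrarily small selected $\lambda$ is rendered harmless without knowing where it lies. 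For over-estimated agents ($\lambda_{k_j^*}>\lambda_0$) it uses the \emph{failure} of \eqref{stopping-1} for all $k_j^*<k\le k_0$ together with the Nystr\"om transfer bounds (Propositions \ref{Prop:Error-local-KRR}, \ref{Prop:bound-for-difference}) to show that the telescoped difference $\sum_{k=k_j^*+1}^{k_0}\sum_{j'}\frac{|D_{j'}|}{|D|}g_{D_{j'},\lambda_k}$ between $\overline{f}_j$ and $\overline{f}_{D,\lambda_0}$ is of optimal order, and then invokes the fixed-parameter DKRR rate at $\lambda_0$ (Lemma \ref{Lemma:distributed-rate-fixed}). If you replace your two-sided balancing claim by this standard one-sided Lepskii dichotomy (firing inequality when the selected scale is small, telescoping of non-fired increments when it is large), the rest of your outline—convexity reduction, bias--variance bounds on the grid, treatment of $\mu_j$ and the approximation layer, and the expectation/bad-event bookkeeping—can be made to work; as written, however, the argument does not establish \eqref{Final-rho-bound} or \eqref{Final-K-bound}.
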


The proof of Theorem \ref{Theorem:Optimal-Rate-adaptive} can be found in the next section. It can be found in \cite{caponnetto2007optimal,fischer2020sobolev} that the derived learning rates in \eqref{Final-rho-bound} and \eqref{Final-K-bound} are rate-optimal in the sense that there are some $\rho^*$ satisfying Assumption \ref{Assumption:boundedness}, Assumption \ref{Assumption:regularity} and Assumption \ref{Assumption:effective dimension} such that 
$$
    E[\| \overline{f}_{D,\vec{\lambda}^*}-f_{\rho^*}\|_{\rho^*}^2]\geq \bar{C}_2|D|^{-\frac{2r}{2r+s}},\qquad  E[\| \overline{f}_{D,\vec{\lambda}^*}-f_{\rho^*}\|_K^2]\geq \bar{C}_2|D|^{-\frac{2r-1}{2r+s}}.
$$
Recalling   \cite{caponnetto2007optimal,fischer2020sobolev} again that the above lower bounds actually hold for any learning algorithms derived based on $D=\cup_{j=1}^mD_j$, we obtain that the proposed Lep-AdaDKRR is one of the most efficient learning algorithms to tackle such data. Since there is not any privacy information communicated in the learning process, Theorem \ref{Theorem:Optimal-Rate-adaptive} also demonstrates that Lep-AdaDKRR is one of the optimal learning algorithms to address data silos and consequently circumvents the inconsistency between theory and application of DKRR.  

Different from the classical theoretical analysis on distributed learning \cite{zhang2015divide,lin2017distributed,mucke2018parallelizing} that imposes restriction on $m$ and  assumes the equal size of data blocks, i.e.,$|D_1|\sim\dots\sim|D_m|$, \eqref{res.theorem} pays attention to the size of data block and removes the equal size assumption. Such a restriction presents the qualification for nomination of local agent to take part in  the distributed learning process.  Due to \eqref{res.theorem}, it is easy to derive  
\begin{equation}\label{res.m}
   m\leq \frac{|D|^{\frac{2r+s-1}{2r+s}}}{\bar{C}_0\log^3|D|},
\end{equation}
which is almost the same as that for DKRR with theoretically optimal regularization parameter \cite{lin2020distributed} under the equal size assumption. 

Compared with \cite{wang2023adaptive} where a cross-validation approach is proposed to equip DKRR, Lep-AdaDKRR possesses at least four advantages. At first, Lep-AdaDKRR does not require to divide the  sample into training sets and validation sets, which inevitably improves the utilization rate of samples. Then, besides the adaptivity to $r,s$, \eqref{Final-rho-bound} and \eqref{Final-K-bound} illustrate that Lep-AdaDKRR also adapts to the embodying metric, which is beyond the performance of cross-validation. It  should be   highlighted that  analysis under the $\mathcal H_K$ norm actually shows that Lep-AdaDKRR is capable of tackling covariant shifts problem \cite{ma2023optimally}, where the distributions of training and testing samples are different. Thirdly, there is not an explicit relation between the global estimate and local estimates in the approach in \cite{wang2023adaptive}, making it difficult to quantify the role of local agents. Differently,   Lep-AdaDKRR succeeds in derive a global estimate via double weight averaging of local estimates. Finally, there is a truncation operator in the estimate of \cite{wang2023adaptive} which not only imposes strict boundedness assumption of the noise but also deports the global estimate from $\mathcal H_K$. However, such a truncation operator is not needed in Lep-AdaDKRR. 
We finally present two remarks to end this section.

\begin{remark}\label{Remark:1}
There is a restriction $\frac12\leq r\leq 1$ in Theorem \ref{Theorem:Optimal-Rate-adaptive}, showing that Lepskii-AdaDKRR cannot fully exploit all regularity degrees  $0<r<\infty$. The reason is  due to DKRR itself rather than the proposed Lepskii principle. In fact, it can be found in \cite{gerfo2008spectral,guo2017learning} that KRR as well as DKRR  suffers from the so-called saturation phenomenon in the sense that it only  realizes the regularity degree satisfying $0<r\leq 1$.
For $0<r<1/2$, it even remains open to derive optimal learning rates for any distributed learning algorithms without strict restriction on $s$.    
\end{remark}

\begin{remark}
Several learning algorithms   such as the kernel gradient descent \cite{lin2018distributed}, kernel spectral cut-off \cite{dicker2017kernel}, and boosted KRR \cite{lin2019boosted} have been developed to circumvent the saturation phenomenon of KRR. The problem is,  as discussed in \cite{lin2024adaptive}, that the proposed Lepskii principle  is based on special spectral property of KRR  that no longer holds for other algorithms. If  the Lepskii-type principle in \cite{blanchard2019lepskii} that is available to any kernel-based spectral algorithms is used, we believe that it is feasible to design some adaptive version of distributed learning algorithms. However, the learning rates are no more optimal, just as \cite{lin2024adaptive} exhibited for non-distributed learning scheme. It would be interesting to design suitable parameter selection strategy to equip more general learning algorithms rather than DKRR to achieve optimal learning rates.
\end{remark}

\section{Proofs}
 In this section, we prove Theorem \ref{Theorem:Optimal-Rate-adaptive} by adopting the widely used integral operator approach \cite{smale2004shannon,smale2005shannon,lin2017distributed}.  the main novelty of our proof is to divide local agents into two categories: the one  is the type of local agents, called as the under-estimated agents, in which  the  $\lambda_{k^*_j}$ selected via \eqref{stopping-1} is smaller than $\lambda_0:= \lambda_{k_0}$ with $k_0=\left[|D|^\frac{1}{2r+s}\right]$
  and the other is the type of local agents, named as over-estimated agents,  in which   $\lambda_{k^*_j}\geq \lambda_0$. We use \eqref{stopping-1} for $k=k_j^*$ in deriving generalization error of Lep-AdaDKRR in under-estimated agents and
\begin{equation}\label{stopping-negative}
     \|(L_{K,D_j}+\lambda_k I)^{1/2}g^{global}_{D,\lambda_k,\mu}\|_K^2 
    \leq
    C_{LP}  \lambda_{k-1}^2\overline{\mathcal W}_{D,\lambda_k},\qquad k\geq k_j^*
\end{equation}
in deducing generalization error of Lep-AdaDKRR in over-estimated agents.

\subsection{Operator perturbation and operator representation}

We at first recall some basic concepts of operators from \cite{bhatia2013matrix}.
Let $\mathcal H$ be a Hilbert space and $A:\mathcal H\rightarrow\mathcal H$ be a bounded linear operator with its spectral norm defined by 
$$
        \|A\| =\sup_{\|f\|_{\mathcal H}\leq 1}\|Af\|_{\mathcal H}.
$$
If $A=A^T$,  then $A$ is  
self-adjoint, where $A^T$ denotes its adjoint. A self-adjoint operator is said to be positive if $\langle f,Af\rangle_{\mathcal H}\geq 0$ for all $f\in\mathcal H$. 
We then introduce three important properties of positive operators. The first one proved in \cite[Proposition 6]{rudi2015less} provides an upper bounds for operator perturbation. 

\begin{lemma}\label{Lemma:operator inequality general}
Let $\mathcal H,\mathcal K$ be two separable Hilbert spaces,
$A:\mathcal H\rightarrow\mathcal H$ be a positive linear operator,
$V_{H,K}:\mathcal H\rightarrow\mathcal K$ be a partial isometry and
$B:\mathcal K\rightarrow\mathcal K$ be a bounded operator. Then for
all $0\leq r^*,s^*\leq1/2$, there holds
$$
      \|A^{r^*}V_{HS}BV_{HS}^TA^{s^*}\|\leq\|(V_{H,K}^TAV_{H,K})^{r^*}B(V_{H,K}^TAV_{H,K})^{s^*}\|.
$$
\end{lemma}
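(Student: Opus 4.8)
The plan is to reduce this two-sided norm bound to one one-sided operator-norm estimate and then settle that estimate by the Jensen operator inequality. Write $W$ for the partial isometry (so $\|W\|\le 1$ and $W^{T}W$, $WW^{T}$ are orthogonal projections) and $C:=V_{H,K}^{T}AV_{H,K}$ for the positive operator appearing on the right-hand side; positivity of $A$ gives positivity of $C$. Up to the harmless regularization $A\rightsquigarrow A+\varepsilon I$ discussed below (which makes $C$ boundedly invertible on $\operatorname{ran}(W^{T}W)$, so that the pseudo-inverse $C^{-t}$ of $C^{t}$ is bounded there and $W=W(W^{T}W)$), the algebraic heart is the identity
$$A^{r^{*}}WBW^{T}A^{s^{*}}=\bigl(A^{r^{*}}WC^{-r^{*}}\bigr)\bigl(C^{r^{*}}BC^{s^{*}}\bigr)\bigl(A^{s^{*}}WC^{-s^{*}}\bigr)^{T},$$
using that $A$ and $C$ are self-adjoint to rewrite the last factor as an adjoint. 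Submultiplicativity of $\|\cdot\|$ then yields
$$\|A^{r^{*}}WBW^{T}A^{s^{*}}\|\le\|A^{r^{*}}WC^{-r^{*}}\|\cdot\|C^{r^{*}}BC^{s^{*}}\|\cdot\|A^{s^{*}}WC^{-s^{*}}\|,$$
so everything comes down to the key estimate $\|A^{t}WC^{-t}\|\le 1$ for all $t\in[0,1/2]$, applied once with $t=r^{*}$ and once with $t=s^{*}$.

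For the key estimate I would write $\|A^{t}WC^{-t}\|^{2}=\|C^{-t}W^{T}A^{2t}WC^{-t}\|$ and reduce to the operator inequality $W^{T}A^{2t}W\le C^{2t}=(W^{T}AW)^{2t}$. Since $0\le 2t\le 1$, the function $\varphi(u)=u^{2t}$ is operator concave on $[0,\infty)$ with $\varphi(0)=0$, and $W$ is a contraction, so the Hansen--Pedersen form of Jensen's operator inequality gives exactly $W^{T}\varphi(A)W\le\varphi(W^{T}AW)$. Sandwiching this between $C^{-t}$ on the left and right and using $C^{-t}C^{2t}C^{-t}=P$, the orthogonal projection onto $\overline{\operatorname{ran}}\,C$, produces $C^{-t}W^{T}A^{2t}WC^{-t}\le P\le I$, hence $\|A^{t}WC^{-t}\|\le 1$, which closes the chain.

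The step I expect to require the most care is purely functional-analytic: in general the spectrum of $C$ accumulates at $0$, so $C^{-t}$ is unbounded and both the factorization and the sandwiching above are only formal. I would make the argument rigorous by carrying out everything with $A_{\varepsilon}:=A+\varepsilon I$ and $C_{\varepsilon}:=W^{T}A_{\varepsilon}W=C+\varepsilon W^{T}W$, for which $A_{\varepsilon}^{-t}$ and the (extended-by-zero) pseudo-inverse $C_{\varepsilon}^{-t}$ are bounded, so the factorization and the Jensen step hold verbatim and give $\|A_{\varepsilon}^{r^{*}}WBW^{T}A_{\varepsilon}^{s^{*}}\|\le\|C_{\varepsilon}^{r^{*}}BC_{\varepsilon}^{s^{*}}\|$; then I would let $\varepsilon\to 0^{+}$, using $A_{\varepsilon}\to A$ and $C_{\varepsilon}\to C$ in operator norm together with norm-continuity of the continuous functional calculus $u\mapsto u^{r^{*}},u^{s^{*}}$ on a common compact spectral interval. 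As a fallback for the key estimate one can instead invoke a Hadamard three-lines argument for $z\mapsto A_{\varepsilon}^{z}WC_{\varepsilon}^{-z}$ on the strip $0\le\operatorname{Re}z\le\tfrac12$, which has norm $\le 1$ on $\operatorname{Re}z=0$ (imaginary powers are partial isometries) and on $\operatorname{Re}z=\tfrac12$ (because $\|A_{\varepsilon}^{1/2}WC_{\varepsilon}^{-1/2}\|^{2}=\|C_{\varepsilon}^{-1/2}C_{\varepsilon}C_{\varepsilon}^{-1/2}\|\le 1$), hence norm $\le 1$ throughout.
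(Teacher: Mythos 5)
Your proof is correct, but note that the paper itself offers no proof of this lemma: it is imported verbatim as Proposition~6 of the cited reference (Rudi, Camoriano and Rosasco, ``Less is More: Nystr\"om Computational Regularization''), so there is no in-paper argument to compare against. What you have done is reconstruct a self-contained proof, and it is sound: writing $C=W^{T}AW$, the factorization $A^{r^*}WBW^{T}A^{s^*}=(A^{r^*}WC^{-r^*})(C^{r^*}BC^{s^*})(C^{-s^*}W^{T}A^{s^*})$ is legitimate once you observe that with the regularization $A_\varepsilon=A+\varepsilon I$ one has $\ker C_\varepsilon=\ker W$, so $C_\varepsilon^{-t}C_\varepsilon^{t}$ is the projection $Q=W^{T}W$ onto the initial space and $WQ=W$ restores the original operator (without regularization this step can fail when $\ker A$ meets $\operatorname{ran}W$, which is exactly why your $\varepsilon$-argument is needed and not merely cosmetic). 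The key contraction bound $\|A_\varepsilon^{t}WC_\varepsilon^{-t}\|\le 1$ via $W^{T}A_\varepsilon^{2t}W\le (W^{T}A_\varepsilon W)^{2t}$ is a correct application of the Hansen--Pedersen operator Jensen inequality for the operator-concave functions $u\mapsto u^{2t}$, $2t\in[0,1]$, and your Hadamard three-lines fallback gives the same estimate; the passage $\varepsilon\to0^{+}$ is justified since $\|S^{t}-T^{t}\|\le\|S-T\|^{t}$ for positive operators and $t\in(0,1]$, so both sides converge in norm. This is, in substance, the same mechanism used in the cited source (reduce to the one-sided bound $\|A^{t}V(V^{T}AV)^{-t}\|\le1$ and conclude by submultiplicativity), so you have not taken a genuinely different route from the literature, only supplied the details the paper chose to cite; the only points worth tightening are the $t=0$ endpoint convention for $C^{-0}$ (your extension-by-zero convention works, but it should be stated once and used consistently in the factorization) and a remark that the mismatch $V_{HS}$ versus $V_{H,K}$ and the stated direction of $V$ are typographical issues in the lemma as printed, with the intended reading being the one you adopted.
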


 The second  one derived in \cite[Proposition 3]{rudi2015less} presents an important property of the projection operator.

\begin{lemma}\label{Lemma:Projection general}
Let $\mathcal H$, $\mathcal K$ , $\mathcal F$ be three separable
Hilbert spaces. Let $Z:\mathcal H\rightarrow\mathcal K$ be a bounded
linear operator and $P$ be a projection operator on $\mathcal H$
such that $\mbox{range}P=\overline{\mbox{range}Z^T}$. Then for any
bounded linear operator $F:\mathcal F\rightarrow\mathcal H$ and any
$\lambda>0$ we have
$$
     \|(I-P)F\|\leq\lambda^{1/2}\|(Z^TZ+\lambda I)^{-1/2}F\|.
$$
\end{lemma}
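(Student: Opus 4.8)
\textbf{Proof plan for Lemma~\ref{Lemma:Projection general}.}
The plan is to reduce the operator-norm inequality to a pointwise statement about the single operator $Z^TZ$, exploiting that $P$ is the orthogonal projection onto $\overline{\mathrm{range}\,Z^T}=\overline{\mathrm{range}\,(Z^TZ)}$ (the last identity holds because $Z^Tg=0$ iff $\langle ZZ^Tg,g\rangle=\|Z^Tg\|^2=0$, equivalently, since $Z^TZ$ and $ZZ^T$ have the same nonzero spectrum, $\ker Z^T=\ker(ZZ^T)$, whose orthogonal complement matches that of $Z^TZ$). Consequently $I-P$ is the projection onto $\ker Z^T=\ker(Z^TZ)$. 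First I would show that for every $h\in\mathcal H$,
$$
   \|(I-P)h\| \le \lambda^{1/2}\,\|(Z^TZ+\lambda I)^{-1/2}h\|,
$$
which immediately gives the claimed bound after composing with $F$ and taking the supremum over $\|u\|_{\mathcal F}\le 1$, since $(I-P)Fu$ is of this form with $h=Fu$.

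To prove the pointwise inequality I would use the spectral decomposition of the positive self-adjoint operator $T:=Z^TZ$. Writing $T=\int_0^{\|T\|}t\,dE_t$ with $\{E_t\}$ its spectral measure, the projection $I-P$ onto $\ker T$ equals $E_{\{0\}}$, the spectral projection associated with the eigenvalue $0$. For a fixed $h$, decompose $h = E_{\{0\}}h + (I-E_{\{0\}})h =: h_0 + h_1$, so that $(I-P)h = h_0$ and, because $(T+\lambda I)^{-1/2}$ commutes with $E_{\{0\}}$,
$$
  \|(T+\lambda I)^{-1/2}h\|^2 = \|(T+\lambda I)^{-1/2}h_0\|^2 + \|(T+\lambda I)^{-1/2}h_1\|^2 \ge \|(T+\lambda I)^{-1/2}h_0\|^2.
$$
On the range of $E_{\{0\}}$ one has $T=0$, hence $(T+\lambda I)^{-1/2}$ acts there as multiplication by $\lambda^{-1/2}$, giving $\|(T+\lambda I)^{-1/2}h_0\| = \lambda^{-1/2}\|h_0\|$. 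Combining the two displays yields $\|h_0\|\le \lambda^{1/2}\|(T+\lambda I)^{-1/2}h\|$, which is exactly the pointwise inequality. An equivalent, perhaps cleaner, route is the operator inequality $(I-P)^T(I-P) = I-P = E_{\{0\}} \preceq \lambda (T+\lambda I)^{-1}$, valid because the scalar function $\varphi(t)=\mathbf 1_{\{t=0\}}$ satisfies $\varphi(t)\le \lambda/(t+\lambda)$ for all $t\ge 0$; sandwiching this between $F^T$ and $F$ and taking norms gives $\|(I-P)F\|^2 = \|F^T(I-P)F\| \le \lambda\,\|F^T(T+\lambda I)^{-1}F\| = \lambda\,\|(T+\lambda I)^{-1/2}F\|^2$.

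The only genuinely delicate point is the identification $\overline{\mathrm{range}\,Z^T} = \overline{\mathrm{range}\,(Z^TZ)}$, i.e.\ $\ker Z^T=\ker(Z^TZ)$, in the infinite-dimensional setting, together with the observation that $I-P$ is then the spectral projection of $Z^TZ$ at $0$; once this is in hand the rest is routine functional calculus on a positive self-adjoint operator. I expect no further obstacles: boundedness of $Z$ and $F$ ensures all the operators above are well defined, and $(Z^TZ+\lambda I)^{-1}$ is bounded for every $\lambda>0$ regardless of whether $Z^TZ$ is invertible.
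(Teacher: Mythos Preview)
The paper does not actually prove this lemma: it simply cites it as \cite[Proposition~3]{rudi2015less}. Your spectral-calculus argument is essentially the standard proof of that proposition and is correct in substance. The cleaner route you sketch at the end---the operator inequality $I-P=E_{\{0\}}\preceq \lambda(Z^TZ+\lambda I)^{-1}$ followed by conjugation with $F$---is exactly how the cited reference proceeds.

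One small slip to fix: you write that $I-P$ projects onto $\ker Z^T=\ker(Z^TZ)$, but $\ker Z^T$ lives in $\mathcal K$, not $\mathcal H$. What you need (and what you actually use later) is that $I-P$ projects onto $(\overline{\mathrm{range}\,Z^T})^\perp=\ker Z=\ker(Z^TZ)$, the last equality holding because $Z^TZh=0$ implies $\|Zh\|^2=\langle Z^TZh,h\rangle=0$. Once this identification is made, your functional-calculus step $\mathbf 1_{\{0\}}(t)\le \lambda/(t+\lambda)$ on $[0,\infty)$ gives the inequality immediately, and the rest is routine.
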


The third one is the well known Cordes inequality that can be found in Theorem IX.2.1 in \cite{bhatia2013matrix}.
\begin{lemma}\label{Lemma:cordes-11}
     Let $A, B$ be positive definite matrices. Then  
$$ 
     \left\|A^u B^u\right\| \leq\|A B\|^u,\qquad 0\leq u\leq 1.
$$
\end{lemma}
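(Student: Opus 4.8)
The plan is to reduce the operator-norm bound to a positive-semidefinite (Löwner) ordering relation and then invoke the L\"owner--Heinz inequality, i.e.\ the operator monotonicity of $t\mapsto t^{u}$ on $[0,\infty)$ for $0\le u\le 1$, which is itself a classical fact recorded in \cite{bhatia2013matrix}. Since $A$ and $B$ are positive definite matrices they are invertible and self-adjoint, so $(AB)^{*}=BA$ and every fractional power appearing below is well defined; I write $\preceq$ for the Löwner order and set $c:=\|AB\|$.

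First I would note that $c^{2}=\|AB\|^{2}=\|(AB)^{*}(AB)\|=\|BA^{2}B\|$, and since $BA^{2}B$ is positive semidefinite with spectral norm $c^{2}$ this forces $BA^{2}B\preceq c^{2}I$. Conjugating by the self-adjoint matrix $B^{-1}$, which preserves $\preceq$, gives $A^{2}\preceq c^{2}B^{-2}$. Applying the L\"owner--Heinz inequality with exponent $u\in[0,1]$ to the relation $0\preceq A^{2}\preceq c^{2}B^{-2}$ yields $A^{2u}\preceq c^{2u}B^{-2u}$, and conjugating this by $B^{u}$ gives $B^{u}A^{2u}B^{u}\preceq c^{2u}B^{u}B^{-2u}B^{u}=c^{2u}I$, hence $\|B^{u}A^{2u}B^{u}\|\le c^{2u}$. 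Finally, because $A^{u}$ and $B^{u}$ are self-adjoint, $\|A^{u}B^{u}\|^{2}=\|(A^{u}B^{u})^{*}(A^{u}B^{u})\|=\|B^{u}A^{2u}B^{u}\|\le c^{2u}$, so $\|A^{u}B^{u}\|\le c^{u}=\|AB\|^{u}$, which is exactly the assertion.

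The only substantive ingredient is the L\"owner--Heinz inequality, so that is where all the mathematical content sits; but it is classical and is available from the same source, so once the identity $\|A^{u}B^{u}\|^{2}=\|B^{u}A^{2u}B^{u}\|$ is spotted the lemma is essentially a two-line deduction and I do not anticipate any real obstacle. Should a proof independent of operator monotonicity be desired, an alternative is the Stein--Hirschman complex-interpolation argument: consider the entire operator-valued function $G(z)=A^{z}B^{z}$ on the strip $0\le\mathrm{Re}\,z\le1$, check that $\|G(it)\|\le 1$ because $A^{it}$ and $B^{it}$ are unitary and that $\|G(1+it)\|=\|AB\|$ again by unitarity of $B^{\pm it}$, and apply the three-lines theorem at $z=u$ to obtain $\|A^{u}B^{u}\|\le\|AB\|^{u}$; I would only pursue this route if a self-contained proof were required.
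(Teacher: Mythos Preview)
Your argument is correct. The reduction $\|A^{u}B^{u}\|^{2}=\|B^{u}A^{2u}B^{u}\|$ together with the chain $BA^{2}B\preceq c^{2}I\Rightarrow A^{2}\preceq c^{2}B^{-2}\Rightarrow A^{2u}\preceq c^{2u}B^{-2u}\Rightarrow B^{u}A^{2u}B^{u}\preceq c^{2u}I$ is exactly the standard derivation of Cordes' inequality from L\"owner--Heinz, and every step is justified as you wrote it (congruence preserves the order, and operator monotonicity of $t\mapsto t^{u}$ applies since both sides are positive definite). Your alternative three-lines argument is also valid.

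The paper, however, does not supply a proof at all: it simply records the lemma as the ``well known Cordes inequality'' and cites Theorem~IX.2.1 of \cite{bhatia2013matrix}. So rather than matching or diverging from the paper's proof, you have provided a self-contained argument where the paper merely points to the literature. If anything, your write-up is more informative than what appears here; the only dependence you retain is on L\"owner--Heinz, which is itself in the same reference, so nothing is circular.
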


We then present the operator representation for DKRR, local approximation and global approximation.
 For any $j=1,\dots,m$, write the sampling operator
  $S_{D_j}:\mathcal H_K\rightarrow\mathbb R^{|D_j|}$ as
$$
         S_{D_j}f:=\{f(x_{i,j})\}_{(x_{i,j},y_{i,j})\in D_j}.
$$
Its scaled adjoint $S_{D_j}^T:\mathbb R^{|D_j|}\rightarrow \mathcal H_K$ is
$$
       S_{D_j}^T{\bf c}:=\frac1{|D_j|}\sum_{(x_{i,j},y_{i,j})\in D}c_iK_{x_{i,j}},\qquad {\bf c}=(c_1,\dots,c_{|D_j|})^T\in\mathbb R^{|D_j|}.
$$
Therefore, $\frac1{|D_j|}\mathbb K_j=S_{D_j}S_{D_j}^T$, 
$
  L_{K,D_j}=S_{D_j}^TS_{D_j}.
$
 and $L_{K,D_j}:\mathcal H_K\rightarrow\mathcal H_K$ is a positive operator.
Due to these, we have \cite{smale2007learning}
 \begin{equation}\label{KRR-local:operator}
         f_{D_j,\lambda}=(L_{K,D_j}+\lambda I)S^T_{D_j}y_{D_j},\qquad \mbox{and}\quad   f^\diamond_{D_j,\lambda}=(L_{K,D_j}+\lambda I)^{-1}L_{K,D_j}f_\rho.
\end{equation}
The following lemma provides several important bounds concerning $ f_{D_j,\lambda}$ and $ f^\diamond_{D_j,\lambda}$.

\begin{lemma}\label{Lemma:difference-krr}
If Assumption \ref{Assumption:regularity} holds with $\frac12\leq r\leq 1$, then   
\begin{eqnarray}
    \|(L_K+\lambda I)^{1/2}(f_{D_j,\lambda}^\diamond-f_\rho)\|_K
   &\leq&
   \mathcal Q_{D_j,\lambda}^{2r}\lambda^r\|h_\rho\|_\rho,\label{bound.approximation-error}\\
    \|(L_K+\lambda I)^{1/2}(f_{D_j,\lambda}^\diamond-f_{D_j,\lambda})\|_K
    &\leq&
    \mathcal Q_{D_j,\lambda}^2\mathcal P_{D_j,\lambda},\label{bound.sample-error}\\
     \|(L_{K,D_j}+\mu_j  I)^{-1/2}(f_{D_j,\lambda}-f_{D_j,\lambda'})\|_K
   & \leq &
   \mu_j ^{-1/2}\frac{|\lambda-\lambda'|}{\lambda'\sqrt{\lambda}}\mathcal Q_{D_j,\lambda}\mathcal P_{D_j,\lambda}
  +
   \mathcal Q_{D_j,\mu_j }^{2r-1}\frac{|\lambda-\lambda'|}{\lambda} \mu_j ^{r-1}\|h_\rho\|_\rho, \label{bound.error-difference}
\end{eqnarray} 
 where $\lambda,\lambda'>0$, and for some set of points $\Xi$,
\begin{eqnarray}
    \mathcal Q_{\Xi,\lambda} &:=& \|(L_K+\lambda I)^{1/2}(L_{K,\Xi}+\lambda I)^{-1/2}\|,\label{Def.QD}\\
    \mathcal P_{\Xi,\lambda}&:=&
	\left\|(L_K+\lambda
	I)^{-1/2}(L_{K,\Xi}f_\rho-S^T_{\Xi}y_{\Xi})\right\|_K. \label{Def.PD} 
\end{eqnarray}
\end{lemma}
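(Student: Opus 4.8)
The plan is to run everything through the operator representations \eqref{KRR-local:operator} together with the resolvent identity, inserting regularised square roots to manufacture the quantities $\mathcal Q_{\cdot,\cdot}$ and $\mathcal P_{\cdot,\cdot}$ of \eqref{Def.QD}--\eqref{Def.PD}, and using Cordes' inequality (Lemma \ref{Lemma:cordes-11}) together with operator monotonicity of $t\mapsto t^r$ on $[0,1]$ to shuffle fractional powers between $L_K$ and $L_{K,D_j}$.

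For \eqref{bound.approximation-error} I would first write $f^\diamond_{D_j,\lambda}-f_\rho=(L_{K,D_j}+\lambda I)^{-1}L_{K,D_j}f_\rho-f_\rho=-\lambda(L_{K,D_j}+\lambda I)^{-1}f_\rho$ and substitute $f_\rho=L_K^r h_\rho$. Multiplying by $(L_K+\lambda I)^{1/2}$ and inserting $I=(L_{K,D_j}+\lambda I)^{1/2}(L_{K,D_j}+\lambda I)^{-1/2}$ splits the expression into the factor $(L_K+\lambda I)^{1/2}(L_{K,D_j}+\lambda I)^{-1/2}$, whose norm is $\mathcal Q_{D_j,\lambda}$, times $\lambda(L_{K,D_j}+\lambda I)^{-1/2}L_K^r h_\rho$; on the latter I would use $L_K\preceq L_K+\lambda I$ and Lemma \ref{Lemma:cordes-11} to bound $\|(L_{K,D_j}+\lambda I)^{-1/2}L_K^r\|$ by $\mathcal Q_{D_j,\lambda}^{2r-1}\lambda^{r-1}$ (up to the elementary estimate $\|(L_K+\lambda I)^{-a}L_K^a\|\le1$), so that together with the leading $\lambda$ one gets $\mathcal Q_{D_j,\lambda}^{2r}\lambda^{r}\|h_\rho\|_\rho$. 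Inequality \eqref{bound.sample-error} is the cleanest: by \eqref{KRR-local:operator}, $f^\diamond_{D_j,\lambda}-f_{D_j,\lambda}=(L_{K,D_j}+\lambda I)^{-1}(L_{K,D_j}f_\rho-S^T_{D_j}y_{D_j})$, and inserting $(L_{K,D_j}+\lambda I)^{-1/2}(L_K+\lambda I)^{1/2}$ and then $(L_K+\lambda I)^{-1/2}(L_K+\lambda I)^{1/2}$ around the residual, the product of the three operator norms is exactly $\mathcal Q_{D_j,\lambda}\cdot\mathcal Q_{D_j,\lambda}\cdot\mathcal P_{D_j,\lambda}$.

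For \eqref{bound.error-difference} I would start from the resolvent identity $(L_{K,D_j}+\lambda I)^{-1}-(L_{K,D_j}+\lambda' I)^{-1}=(\lambda'-\lambda)(L_{K,D_j}+\lambda I)^{-1}(L_{K,D_j}+\lambda' I)^{-1}$ applied to $S^T_{D_j}y_{D_j}$, and split $S^T_{D_j}y_{D_j}=L_{K,D_j}f_\rho+(S^T_{D_j}y_{D_j}-L_{K,D_j}f_\rho)$. On the noise term, after applying $(L_{K,D_j}+\mu_j I)^{-1/2}$ I would extract the scalar bound $\mu_j^{-1/2}(\lambda')^{-1}\lambda^{-1/2}$ from the three commuting resolvents of $L_{K,D_j}$, pair one surviving $(L_{K,D_j}+\lambda I)^{-1/2}$ with an inserted $(L_K+\lambda I)^{1/2}$ to produce $\mathcal Q_{D_j,\lambda}$, and leave $(L_K+\lambda I)^{-1/2}(S^T_{D_j}y_{D_j}-L_{K,D_j}f_\rho)$, i.e.\ $\mathcal P_{D_j,\lambda}$; times $|\lambda-\lambda'|$ this is the first summand. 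On the signal term I would use $\|(L_{K,D_j}+\lambda' I)^{-1}L_{K,D_j}\|\le1$, $\|(L_{K,D_j}+\lambda I)^{-1}\|\le\lambda^{-1}$ and, again via Lemma \ref{Lemma:cordes-11} and operator monotonicity, $\|(L_{K,D_j}+\mu_j I)^{-1/2}L_K^r\|\le \mathcal Q_{D_j,\mu_j}^{2r-1}\mu_j^{r-1}$, which after multiplying by $|\lambda-\lambda'|$ gives the second summand.

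I expect the bookkeeping of fractional powers to be the only real obstacle. Since $L_K$ and $L_{K,D_j}$ do not commute, every time an exponent ($r$ or $1/2$) is transferred from one to the other it must be paid for with a $\mathcal Q$-factor, and one must check that these do not accumulate beyond the exponents $2r$, $2$ and $2r-1$ appearing in \eqref{bound.approximation-error}--\eqref{bound.error-difference}; this is precisely where Cordes' inequality and the hypothesis $\tfrac12\le r\le1$ are used (the latter makes $L_K^r=L_K^{1/2}L_K^{\,r-1/2}$ with a legitimate nonnegative leftover power and keeps $2r-1\ge0$). The scalar powers of $\lambda,\lambda'$ and $\mu_j$ then drop out of elementary $\sup_{\sigma\ge0}$ estimates on functions of the single operator $L_{K,D_j}$.
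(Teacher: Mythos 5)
Your proposal is correct and follows essentially the same route as the paper: for \eqref{bound.error-difference} you use the same resolvent identity $(L_{K,D_j}+\lambda I)^{-1}-(L_{K,D_j}+\lambda' I)^{-1}=(\lambda'-\lambda)(L_{K,D_j}+\lambda I)^{-1}(L_{K,D_j}+\lambda' I)^{-1}$, the same split of $S_{D_j}^Ty_{D_j}$ into the noise part $S_{D_j}^Ty_{D_j}-L_{K,D_j}f_\rho$ and the signal part $L_{K,D_j}f_\rho=L_{K,D_j}L_K^{r-1/2}L_K^{1/2}h_\rho$, and the same Cordes-based transfer of the power $r-\tfrac12$ paid for by $\mathcal Q_{D_j,\mu_j}^{2r-1}$, arriving at exactly the two summands in the paper's display. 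The only difference is that the paper does not prove \eqref{bound.approximation-error} and \eqref{bound.sample-error} at all (it cites them as standard from the literature), whereas you supply their proofs, and your sketches coincide with the standard arguments in those references.
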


\begin{proof}
The bounds in \eqref{bound.approximation-error} and \eqref{bound.sample-error} are standard and can be found in \cite{caponnetto2007optimal,lin2017distributed,lin2020distributed}.   Since 
\begin{equation}\label{Different-abc}
    A^{-1}-B^{-1}=A^{-1}(B-A)B^{-1}
\end{equation}
for positive operator $A,B$, we get
\begin{eqnarray*} 
    &&f_{D_j,\lambda}-f_{D_j,\lambda'} = ((L_{K,D_j}+\lambda I)^{-1}-(L_{K,D_j}+\lambda'I)^{-1})S_{D_j}^Ty_{D_j} \nonumber\\
  &=&
  (L_{K,D_j}+\lambda I)^{-1}(\lambda'-\lambda)[(L_{K,D_j}+\lambda'
  I)^{-1}(S_{D_j}^Ty_{D_j}-L_{K,D_j}f_\rho) +(L_{K,D_j}+\lambda' I)^{-1}L_{K,D_j}f_\rho].
\end{eqnarray*}
Then,   
Assumption \ref{Assumption:regularity} with $\frac12\leq r\leq 1$, \eqref{Def.PD}, \eqref{Def.QD} and   Lemma \ref{Lemma:cordes-11}  yield
\begin{eqnarray*}
     &&\|(L_{K,D_j}+\mu_j  I)^{-1/2}(f_{D_j,\lambda}-f_{D_j,\lambda'})\|_K\\
      &\leq &
    |\lambda-\lambda'|\|(L_{K,D_j}+\mu_j  I)^{-1/2} (L_{K,D_j}+\lambda I)^{-1/2} (L_{K,D_j}+\lambda'
  I)^{-1}\|\mathcal Q_{D_j,\lambda}\mathcal P_{D_j,\lambda}\\
  &+&
  |\lambda-\lambda'|\|(L_{K,D_j}+\mu_j  I)^{-1/2} (L_{K,D_j}+\lambda I)^{-1}(L_{K,D_j}+\lambda'I)^{-1}L_{K,D_j}L_K^{r-1/2}\| \|h_\rho\|_{\rho}\\
  &\leq&
   \mu_j ^{-1/2}\frac{|\lambda-\lambda'|}{\lambda'\sqrt{\lambda}}\mathcal Q_{D_j,\lambda}\mathcal P_{D_j,\lambda}
  +
   \mathcal Q_{D_j,\mu_j }^{2r-1}\frac{|\lambda-\lambda'|}{\lambda} \mu_j ^{r-1}\|h_\rho\|_\rho.
\end{eqnarray*}
 This completes the proof of Lemma \ref{Lemma:difference-krr}.
$\Box$
\end{proof}

For any $j=1,\dots,m$,  let 
$ T_{\Xi_L,j}:  B_{L,K}\rightarrow \mathbb R^{L}$ be the sampling  operator defined by  $T_{\Xi_L,j}f:=\{f(x_{i,j})\}_{j=1}^{|D_j|}$ such that the range of its adjoint  operator $ T^T_{\Xi_L,j}$ is exactly in $B_{L,K}$. Denote $L_{K,\Xi_L}=T_{\Xi_L,j}^TT_{\Xi_L,j}$ for convenience if no confusion concerning $j$ is made.
Write $
       U_j\Sigma_j V_j^T
$
as the SVD of $  T_{\Xi_L,j}$. Then, it is obvious that $V_j^TV_j=I$ and $P_{\Xi_L,j}=V_jV_j^T$ is the 
  corresponding
   projection operator from $\mathcal H_K$ to $ 
B_{L,K}$, implying
\begin{equation}\label{Projection-property}
    (I-P_{\Xi_L,j})^u=(I-P_{\Xi_L,j}),\qquad\forall u\in\mathbb N.
\end{equation}
Writing 
\begin{equation}\label{spetral definetion}
     h_{\Xi_L,\mu_j ,j}(L_{K,D_j}):=V_j(V_j^TL_{K,D_j}V_j+\mu_j 
     I)^{-1}V_j^T,
\end{equation} 
we obtain  from  \cite[eqs.(39)]{wang2023adaptive} or \cite{rudi2015less} 
that
\begin{equation}\label{Nystrom operator}
    g^{loc}_{D_j,\lambda_k,\mu_j }=h_{\Xi_L,\mu_j ,j}(L_{K,D_j})L_{K,D_j}g_{D_j,\lambda_k}.
\end{equation}
The property of $h_{\Xi_L,\mu_j ,j}(L_{K,D_j})$, exhibited in the following lemma, plays an important role in quantifying the performance of the local approximation. 
\begin{lemma}\label{Lemma:spectral-property}
Let $h_{\Xi_L,\mu_j ,j}(L_{K,D_j})$ be given in \eqref{spetral definetion}.  For an arbitrary bounded linear operator $B$, there holds
\begin{equation}\label{spectral-important-1}
     h_{\Xi_L,\mu_j ,j}(L_{K,D_j})(L_{K,D_j}+\mu_j 
     I)V_jBV_j^T
      =
     V_jBV_j^T.
\end{equation}
Moreover, for any $u,v\in[0,1/2]$   there holds
\begin{equation}\label{spectral-important-2}
    \|(L_{K,D_j}+\mu_j 
      I)^{u}h_{\Xi_L,\mu_j ,j}(L_{K,D_j})(L_{K,D_j}+\mu_j 
      I)^{v}\|\leq  \mu_j ^{-1+u+v}. 
\end{equation}
Furthermore,
\begin{equation}\label{spectral-important-3}
    \|h_{\Xi_L,\mu_j ,j}(L_{K,D_j})(L_{K,D_j}+\mu_j  I)\|\leq 1+\mathcal Q_{D_j,\mu_j }\mathcal Q_{\Xi_L,\mu_j }^*,
 \end{equation}
 where for any set of points $\Xi$,
\begin{equation}\label{Def:Q*}
    \mathcal Q^*_{\Xi,\mu_j }:=\|(L_{K}+\mu_j  I)^{-1/2}(L_{K,\Xi}+\mu_j 
        I)^{1/2}\|.
\end{equation}
\end{lemma}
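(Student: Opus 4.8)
The plan is to establish the three identities \eqref{spectral-important-1}, \eqref{spectral-important-2} and \eqref{spectral-important-3} one after another, each time feeding the definition \eqref{spetral definetion} into one of the three operator facts recalled above. Throughout I write $h:=h_{\Xi_L,\mu_j,j}(L_{K,D_j})=V_jB_jV_j^{T}$ with $B_j:=(V_j^{T}L_{K,D_j}V_j+\mu_j I)^{-1}$, and I use the isometry relation $V_j^{T}V_j=I$, which makes $P_{\Xi_L,j}=V_jV_j^{T}$ an orthogonal projection onto $B_{L,K}$ and gives the key identity $V_j^{T}(L_{K,D_j}+\mu_j I)V_j=V_j^{T}L_{K,D_j}V_j+\mu_j I=B_j^{-1}$. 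With this, \eqref{spectral-important-1} is a one-line computation: $h(L_{K,D_j}+\mu_j I)V_jBV_j^{T}=V_jB_j\bigl(V_j^{T}(L_{K,D_j}+\mu_j I)V_j\bigr)BV_j^{T}=V_jB_jB_j^{-1}BV_j^{T}=V_jBV_j^{T}$. Taking $B=I$ gives the auxiliary identity $h(L_{K,D_j}+\mu_j I)P_{\Xi_L,j}=P_{\Xi_L,j}$, which I will reuse below. For \eqref{spectral-important-2} I apply Lemma~\ref{Lemma:operator inequality general} with the positive operator $A:=L_{K,D_j}+\mu_j I$, the partial isometry $V_j$, the bounded operator $B_j$, and exponents $r^{*}=u$, $s^{*}=v\in[0,1/2]$; since $V_j^{T}AV_j=B_j^{-1}$, the lemma gives $\|A^{u}(V_jB_jV_j^{T})A^{v}\|\le\|(B_j^{-1})^{u}B_j(B_j^{-1})^{v}\|=\|(B_j^{-1})^{u+v-1}\|$, and because $B_j^{-1}=V_j^{T}AV_j\succeq\mu_j I$ and $u+v-1\le0$, spectral calculus bounds the last quantity by $\mu_j^{u+v-1}$, which is \eqref{spectral-important-2}.

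For \eqref{spectral-important-3} I split $L_{K,D_j}+\mu_j I=(L_{K,D_j}+\mu_j I)P_{\Xi_L,j}+(L_{K,D_j}+\mu_j I)(I-P_{\Xi_L,j})$. The first summand contributes $h(L_{K,D_j}+\mu_j I)P_{\Xi_L,j}=P_{\Xi_L,j}$ by the auxiliary identity, of spectral norm at most $1$. For the second summand I use $V_j^{T}(I-P_{\Xi_L,j})=V_j^{T}-V_j^{T}V_jV_j^{T}=0$, which kills the $\mu_j I$ term and leaves $h(L_{K,D_j}+\mu_j I)(I-P_{\Xi_L,j})=V_jB_jV_j^{T}(L_{K,D_j}+\mu_j I)(I-P_{\Xi_L,j})$; then I factor this as $\bigl(V_jB_jV_j^{T}(L_{K,D_j}+\mu_j I)^{1/2}\bigr)\bigl((L_{K,D_j}+\mu_j I)^{1/2}(I-P_{\Xi_L,j})\bigr)$, bound the first factor by $\mu_j^{-1/2}$ via \eqref{spectral-important-2} with $(u,v)=(0,1/2)$, and bound the second factor by Lemma~\ref{Lemma:Projection general}, applied with $Z=T_{\Xi_L,j}$ (so $Z^{T}Z=L_{K,\Xi_L}$ and $\mathrm{range}\,P_{\Xi_L,j}=\overline{\mathrm{range}\,Z^{T}}=B_{L,K}$), $F=(L_{K,D_j}+\mu_j I)^{1/2}$ and $\lambda=\mu_j$, to get $\|(I-P_{\Xi_L,j})(L_{K,D_j}+\mu_j I)^{1/2}\|\le\mu_j^{1/2}\|(L_{K,\Xi_L}+\mu_j I)^{-1/2}(L_{K,D_j}+\mu_j I)^{1/2}\|$ (both factors being self-adjoint, this also bounds $\|(L_{K,D_j}+\mu_j I)^{1/2}(I-P_{\Xi_L,j})\|$). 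The powers of $\mu_j$ cancel, and I re-balance the remaining norm $\|(L_{K,\Xi_L}+\mu_j I)^{-1/2}(L_{K,D_j}+\mu_j I)^{1/2}\|$ by inserting $(L_K+\mu_j I)^{\pm1/2}$, passing to adjoints and invoking the Cordes inequality (Lemma~\ref{Lemma:cordes-11}), so that it is controlled by the product $\mathcal Q_{D_j,\mu_j}\mathcal Q^{*}_{\Xi_L,\mu_j}$ of the quantities in \eqref{Def.QD} and \eqref{Def:Q*}. Adding the norm of the first summand gives \eqref{spectral-important-3}.

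The computations behind \eqref{spectral-important-1} and \eqref{spectral-important-2} are routine once the algebra of $V_jB_jV_j^{T}$ and the single invocation of Lemma~\ref{Lemma:operator inequality general} are in place. The one place that needs care is the second half of \eqref{spectral-important-3}: one must correctly exploit $V_j^{T}(I-P_{\Xi_L,j})=0$ to absorb the $\mu_j I$ term before applying the projection estimate of Lemma~\ref{Lemma:Projection general}, and then track the operator ordering while re-normalising the residual $\|(L_{K,\Xi_L}+\mu_j I)^{-1/2}(L_{K,D_j}+\mu_j I)^{1/2}\|$ through $(L_K+\mu_j I)^{\pm1/2}$ and Cordes so as to land exactly on $\mathcal Q_{D_j,\mu_j}\mathcal Q^{*}_{\Xi_L,\mu_j}$ rather than on a cruder surrogate; this is the technical crux of the lemma.
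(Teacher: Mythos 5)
Your argument follows the paper's proof essentially step for step: the same one-line algebra via $V_j^TV_j=I$ for \eqref{spectral-important-1}, the same invocation of Lemma~\ref{Lemma:operator inequality general} with $A=L_{K,D_j}+\mu_j I$ and the spectral bound $V_j^TAV_j\succeq\mu_j I$ for \eqref{spectral-important-2}, and the same $P_{\Xi_L,j}$ versus $(I-P_{\Xi_L,j})$ split, combined with \eqref{spectral-important-2} and Lemma~\ref{Lemma:Projection general}, for \eqref{spectral-important-3}; so the proposal is correct and takes the paper's route. Two cosmetic remarks only: the final re-balancing needs nothing beyond submultiplicativity after inserting $(L_K+\mu_j I)^{-1/2}(L_K+\mu_j I)^{1/2}$ (Cordes is not actually required), and that insertion naturally yields the product $\mathcal Q^*_{D_j,\mu_j}\mathcal Q_{\Xi_L,\mu_j}$ rather than $\mathcal Q_{D_j,\mu_j}\mathcal Q^*_{\Xi_L,\mu_j}$ — the same labeling slip the paper itself makes, and immaterial since both quantities are bounded identically downstream.
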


\begin{proof}
Due to \eqref{spetral definetion} and $V_j^TV_j=I$, there holds
\begin{eqnarray*}
    &&  h_{\Xi_L,\mu_j ,j}(L_{K,D_j})(L_{K,D_j}+\mu_j 
     I)V_jBV_j^T
     =V_j(V_j^TL_{K,D_j}V_j+\mu_j 
     I)^{-1}V_j^T(L_{K,D_j}+\mu_j 
     I)V_jBV_j^T\\
     &=&
     V_j(V_j^TL_{K,D_j}V_j+\mu_j 
     I)^{-1}(V_j^TL_{K,D_j}V_j+\mu_j 
     I)BV_j^T
     =V_jBV_j^T,
\end{eqnarray*}
which proves \eqref{spectral-important-1}.
Using \eqref{spetral definetion} again, we get from Lemma \ref{Lemma:spectral-property}  with
$V_{HS}=V_j$, $A=L_{K,D_j}+\mu_j  I$, $B=V_j^T(L_{K,D_j}+\mu_j  I)^{-1}V_{j}$, $r^*=u$ and $s^*=v$ that 
\begin{eqnarray*}
       &&\|(L_{K,D_j}+\mu_j 
      I)^{u}h_{\Xi_L,\mu_j ,j}(L_{K,D_j})(L_{K,D_j}+\mu_j 
      I)^{v}\|=
      \|(L_{K,D_j}+\mu_j 
      I)^{u}V_j(V_j^TL_{K,D_j}V_j+\mu_j 
     I)^{-1}V_j^T(L_{K,D_j}+\mu_j 
      I)^{v}\|\\
      &\leq&
      \|(V_j^TL_{K,D_j}V_j+\mu_j 
     I)\|^{-1+u+v}
     \leq \mu_j ^{-1+u+v},
\end{eqnarray*}
where the last inequality is due to the fact that $V_j^TL_{K,D_j}V_j$ is semi-positive definite. 
This proves \eqref{spectral-important-2}. For any $f\in\mathcal H_K$, the triangle inequality yields
\begin{eqnarray*}
     \|h_{\Xi_L,\mu_j ,j}(L_{K,D_j})(L_{K,D_j}+\mu_j  I)\|
    \leq 
    \|h_{\Xi_L,\mu_j ,j}(L_{K,D_j})(L_{K,D_j}+\mu_j  I)P_{\Xi_L,j}\|+\|h_{\Xi_L,\mu_j ,j}(L_{K,D_j})(L_{K,D_j}+\mu_j  I)(I-P_{\Xi_L,j})\|. 
\end{eqnarray*}
Setting $B=I$ in \eqref{spectral-important-1}, we have
$$
     h_{\Xi_L,\mu_j ,j}(L_{K,D_j})(L_{K,D_j}+\mu_j 
     I)P_{\Xi_L,j}
     =P_{\Xi_L,j}. 
$$ 
Then, the definition of $P_{\Xi_L,j}$ yields
$$
   \|h_{\Xi_L,\mu_j ,j}(L_{K,D_j})(L_{K,D_j}+\mu_j  I)P_{\Xi_L,j}\|
   = 
   \|P_{\Xi_L,j}\|\leq 1.
$$
Since the range of $L_{K,\Xi_{L}}$ is exactly $\mathcal B_{\Xi,L}$. 
Then, we get from \eqref{spectral-important-2} with $u=v=1/2$, \eqref{Def.QD}, \eqref{Def:Q*} and 
Lemma \ref{Lemma:Projection general}    with $P=P_{\Xi_L,j}$ and $Z=T_{\Xi_L,j}$   that  
\begin{eqnarray*}
   &&\|h_{\Xi_L,\mu_j ,j}(L_{K,D_j})(L_{K,D_j}+\mu_j  I)(I-P_{\Xi_L,j})\|\\
   &\leq&
   \mu_j ^{-1/2}\|(L_{K,D_j}+\mu_j  I)^{1/2}h_{\Xi_L,\mu_j ,j}(L_{K,D_j})(L_{K,D_j}+\mu_j  I)^{1/2}\|
   \|(L_{K,D_j}+\mu_j  I)^{1/2}(I-P_{\Xi_L,j})\|\\
   &\leq&
   \|(L_{K,D_j}+\mu_j  I)^{1/2}(L_{K,\Xi_L}+\mu_j  I)^{-1/2}\|
   \leq 
   \|(L_{K,D_j}+\mu_j  I)^{1/2}(L_{K}+\mu_j  I)^{-1/2}\|
   \|(L_{K}+\mu_j  I)^{-1/2}(L_{K,\Xi_L}+\mu_j  I)^{-1/2}\|\\
   &\leq&
   \mathcal Q_{D_j,\mu_j }\mathcal Q_{\Xi_L,\mu_j }^*.
\end{eqnarray*}
Therefore, \eqref{spectral-important-3} holds and the 
  proof of Lemma \ref{Lemma:spectral-property} is finished. 
  $\Box$  
\end{proof}

As shown above, the bounds for operator differences $\mathcal P_{D_j,\lambda_j},\mathcal Q_{D_j,\lambda_j}$ and $\mathcal Q_{D_j,\lambda_j}^*$ are crucial in deducing generalization errors for KRR. The following lemma provides some basic results concerning these differences.

\begin{lemma}\label{Lemma:Q-1111}
Let $D$ be a sample drawn independently according to some distribution $\rho^*$, $\lambda>0$ and $0<\delta <1$.  
Under Assumption \ref{Assumption:boundedness}, if $\lambda\geq  \left[\frac{16(C_1^*)^2\log^3(16|D|)}{|D|}\right]$, then
with confidence $1-\delta$ there holds
\begin{eqnarray}
      \mathcal P_{D,\lambda}    &\leq & 
      2(\kappa M +\gamma) \left(\frac{1}{|D|\sqrt{\lambda}}+\frac{\sqrt{\mathcal
        N(\lambda)}}{\sqrt{|D|}}\right) \log
               \bigl(2/\delta\bigr) \label{bound-p-population}\\
               &\leq&
              3(\kappa M +\gamma) \left(\frac{1}{|D|\sqrt{ \lambda}}+\left( 1+ \frac{1}{\sqrt{\lambda|D|}}\right)
       \sqrt{\frac{\max\{\mathcal{N}_D(\lambda),1}{|D|}}\right)\log^2\frac8\delta, \label{bound-p}
\end{eqnarray}
and
\begin{equation}\label{Relation-effection}
    (1+4\eta_{\delta/4})^{-1}
		\sqrt{\max\{\mathcal N(\lambda),1\}}
    \leq 
		\sqrt{\max\{\mathcal N_D(\lambda),1\}} 
    \leq  (1+4\sqrt{\eta_{\delta/4}}\vee\eta_{\delta/4}^2)\sqrt{\max\{\mathcal N(\lambda),1\}}, 
\end{equation}
 where $\eta_\delta:=2\log(4/\delta)/\sqrt{\lambda|D|}$.
If in addition   
\begin{equation}\label{Res.delta}
    C_1^*\sqrt{\frac{  \log \left(1+8 \frac{1}{\sqrt{\lambda|D|}}\max\{1,\mathcal N_D(\lambda)\}\right)}{\lambda|D|}}\log\frac{16}{\delta}=:\mathcal B_{D,\lambda}\log\frac{16}{\delta}\leq \frac14, 
\end{equation}  
then with confidence at least
	$1-\delta$, there  holds
\begin{equation}\label{bound-q}
     \mathcal Q_{D,\lambda} \leq  \sqrt{2} ,
\end{equation}
\begin{equation}\label{bound-r}
    \|(L_K+\lambda I)^{-1/2}(L_K-L_{K,D})(L_K+\lambda I)^{-1/2}\|\leq \frac14,
\end{equation}
and
\begin{equation}\label{bound-q*}
    \mathcal Q_{D,\lambda}^*\leq \sqrt{2}.
\end{equation}
\end{lemma}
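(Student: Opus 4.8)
The plan is to reduce every assertion of the lemma to two concentration inequalities for sums of i.i.d. random elements of a Hilbert space — one vector-valued (for $\mathcal P_{D,\lambda}$) and one operator-valued (for $\mathcal Q_{D,\lambda}$, $\mathcal Q_{D,\lambda}^*$ and the $\mathcal N$ versus $\mathcal N_D$ comparison) — plus the elementary consequences of an operator-norm bound on $(L_K+\lambda I)^{-1/2}(L_K-L_{K,D})(L_K+\lambda I)^{-1/2}$. The moment condition in Assumption \ref{Assumption:boundedness}, equivalently $\int_{\mathcal Y}|y-f_\rho(x)|^{\ell}d\rho(y|x)\le\frac12\ell!\gamma^2M^{\ell-2}$, is exactly what feeds the Bernstein step, and the constant $C_1^*=\max\{(\kappa^2+1)/3,\,2\sqrt{\kappa^2+1}\}$ is the one produced by an operator Bernstein bound applied to self-adjoint operators of spectral norm $\le\kappa^2/\lambda$ with variance proxy $\le\kappa^2\mathcal N(\lambda)/\lambda$ — this explains why the hypothesis $\lambda\gtrsim (C_1^*)^2\log^3(16|D|)/|D|$ shows up.

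\textbf{The term $\mathcal P_{D,\lambda}$.} Set $\xi_i:=(L_K+\lambda I)^{-1/2}(y_i-f_\rho(x_i))K_{x_i}\in\mathcal H_K$, so that $(L_K+\lambda I)^{-1/2}(S_D^Ty_D-L_{K,D}f_\rho)=\frac1{|D|}\sum_i\xi_i$ and $\mathbb E[\xi_i]=0$ by the very definition of $f_\rho$. Using $\|(L_K+\lambda I)^{-1/2}K_x\|_K^2\le\kappa^2/\lambda$ and $\mathbb E_x\|(L_K+\lambda I)^{-1/2}K_x\|_K^2=\mathcal N(\lambda)$ together with the $\ell$-th moment bound above, one estimates $\mathbb E\|\xi_i\|_K^{\ell}\le\frac12\ell!\,\big(\gamma\sqrt{\kappa^2/\lambda}\big)^2\big(M\kappa/\sqrt\lambda\big)^{\ell-2}\cdot(\text{variance factor involving }\mathcal N(\lambda))$ and applies a Pinelis--Bernstein inequality in $\mathcal H_K$; this gives \eqref{bound-p-population}. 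The passage from \eqref{bound-p-population} to \eqref{bound-p} is purely bookkeeping: replace $\mathcal N(\lambda)$ by $\mathcal N_D(\lambda)$ through \eqref{Relation-effection}, control $\sqrt{\mathcal N(\lambda)/|D|}$ by the empirical quantity plus a $\tfrac1{\sqrt{\lambda|D|}}$ remainder, and absorb the numerical factors into the prefactor $3(\kappa M+\gamma)$ and into the extra $\log(8/\delta)$ (the second log power comes from the $\eta_{\delta/4}$ factor appearing in \eqref{Relation-effection}).

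\textbf{The operator concentration and $\mathcal Q_{D,\lambda},\mathcal Q_{D,\lambda}^*$.} Put $\zeta_i:=(L_K+\lambda I)^{-1/2}(K_{x_i}\otimes K_{x_i})(L_K+\lambda I)^{-1/2}$, i.i.d. self-adjoint positive operators with $\mathbb E[\zeta_i]=(L_K+\lambda I)^{-1/2}L_K(L_K+\lambda I)^{-1/2}$, $\|\zeta_i\|\le\kappa^2/\lambda$ and $\|\mathbb E[\zeta_i^2]\|\le\kappa^2\mathcal N(\lambda)/\lambda$. An operator Bernstein inequality yields, with confidence $1-\delta$,
$$
\beta:=\bigl\|(L_K+\lambda I)^{-1/2}(L_K-L_{K,D})(L_K+\lambda I)^{-1/2}\bigr\|\le\frac{2\kappa^2\log(2/\delta)}{3\lambda|D|}+\sqrt{\frac{2\kappa^2\mathcal N(\lambda)\log(2/\delta)}{\lambda|D|}},
$$
and, crudely using $\mathcal N(\lambda)\le\kappa^2/\lambda$ and the hypothesis $\lambda\gtrsim (C_1^*)^2\log^3(16|D|)/|D|$, the right-hand side is $\le\frac14$, which is \eqref{bound-r}. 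The operator sandwich $(1-\beta)(L_K+\lambda I)\preceq L_{K,D}+\lambda I\preceq(1+\beta)(L_K+\lambda I)$ then gives
$$
\mathcal Q_{D,\lambda}^2=\bigl\|(L_K+\lambda I)^{1/2}(L_{K,D}+\lambda I)^{-1}(L_K+\lambda I)^{1/2}\bigr\|\le(1-\beta)^{-1}\le\tfrac43\le2,
$$
$$
\bigl(\mathcal Q_{D,\lambda}^*\bigr)^2=\bigl\|(L_K+\lambda I)^{-1/2}(L_{K,D}+\lambda I)(L_K+\lambda I)^{-1/2}\bigr\|\le1+\beta\le\tfrac54\le2,
$$
namely \eqref{bound-q} and \eqref{bound-q*}; the smallness hypothesis \eqref{Res.delta}, which is just the same $\lambda|D|$-type condition written with $\mathcal N_D(\lambda)$ in place of $\mathcal N(\lambda)$, is precisely what forces $\beta\le\frac14$ once \eqref{Relation-effection} is available. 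For \eqref{Relation-effection} itself one reruns the same operator Bernstein bound but keeps the sharp dependence parametrized through $\eta_{\delta}=2\log(4/\delta)/\sqrt{\lambda|D|}$ and compares the traces $\mathcal N_D(\lambda)={\rm Tr}((L_{K,D}+\lambda I)^{-1}L_{K,D})$ and $\mathcal N(\lambda)$ via the same sandwich, which produces the two-sided estimate with the $(1+4\eta_{\delta/4})^{-1}$ lower factor and the $1+4(\sqrt{\eta_{\delta/4}}\vee\eta_{\delta/4}^2)$ upper factor.

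\textbf{Main obstacle.} The two Bernstein applications are routine; the delicate part is the constant/order bookkeeping. First, one must keep $C_1^*$, the $\log^3$ and $\log^2$ powers, and the factor $3$ in \eqref{bound-p} mutually consistent while translating between population objects ($\mathcal N(\lambda)$, $\log(2/\delta)$) and empirical ones ($\mathcal N_D(\lambda)$, condition \eqref{Res.delta}). Second, there is an apparent circularity: \eqref{bound-p} and \eqref{Res.delta} are phrased with $\mathcal N_D(\lambda)$ whereas the concentration naturally outputs $\mathcal N(\lambda)$; this is resolved by establishing \eqref{Relation-effection} first — it needs only the basic hypothesis $\lambda\gtrsim(C_1^*)^2\log^3(16|D|)/|D|$ — and then substituting. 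Extracting the sharp form of \eqref{Relation-effection}, with the explicit $\eta_{\delta/4}$ and the asymmetric upper factor $\sqrt{\eta_{\delta/4}}\vee\eta_{\delta/4}^2$, requires the refined (second-order) version of the operator Bernstein estimate rather than the crude $\beta\le\frac14$ bound, and that is the technically heaviest point of the proof.
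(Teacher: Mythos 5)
Your overall skeleton (reduce everything to a Pinelis--Bernstein bound for $\mathcal P_{D,\lambda}$ and an operator bound $\beta:=\|(L_K+\lambda I)^{-1/2}(L_K-L_{K,D})(L_K+\lambda I)^{-1/2}\|\le\frac14$, then get $\mathcal Q_{D,\lambda}^2\le(1-\beta)^{-1}$ and $(\mathcal Q^*_{D,\lambda})^2\le 1+\beta$ from the operator sandwich) is sound, and your direct derivation of \eqref{bound-q*} from the sandwich is in fact cleaner than the paper's, which obtains \eqref{bound-q*} from \eqref{bound-q} and \eqref{bound-r} via a resolvent identity and simply cites the literature for everything else. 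However, there is a genuine gap in how you establish \eqref{bound-r} itself. The classical operator Bernstein bound you invoke, $\beta\le\frac{2\kappa^2\log(2/\delta)}{3\lambda|D|}+\sqrt{\frac{2\kappa^2\mathcal N(\lambda)\log(2/\delta)}{\lambda|D|}}$, carries the effective dimension \emph{outside} the logarithm, and it cannot be forced below $\frac14$ under the lemma's hypotheses. Your ``crude'' step is quantitatively false: with $\mathcal N(\lambda)\le\kappa^2/\lambda$ the variance term becomes of order $\kappa^2/(\lambda\sqrt{|D|})$, which diverges in the regime $\lambda\sim\log^3(16|D|)/|D|$ that the hypothesis explicitly allows (and that actually occurs in the paper, since $\lambda_k$ is taken down to $\lambda_{K^*}\sim\log^3(16|D_j|)/|D_j|$). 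Nor does \eqref{Res.delta} rescue this form of the bound: \eqref{Res.delta} only controls $\sqrt{\log\bigl(1+8\mathcal N_D(\lambda)/\sqrt{\lambda|D|}\bigr)/(\lambda|D|)}\,\log(16/\delta)$, i.e.\ the effective dimension enters there only \emph{logarithmically}, which is far weaker than what your $\sqrt{\mathcal N(\lambda)\log(2/\delta)/(\lambda|D|)}$ term requires when $\mathcal N(\lambda)\sim\lambda^{-s}$. To close the argument under exactly \eqref{Res.delta} one needs the refined (intrinsic-dimension, or ``relative'') operator Bernstein inequality in which $\mathcal N_D(\lambda)$ appears only inside the logarithm; this is precisely what the quantity $\mathcal B_{D,\lambda}$ and the constant $C_1^*=\max\{(\kappa^2+1)/3,\,2\sqrt{\kappa^2+1}\}$ encode, and what the paper imports from Lemma 6 of \cite{lin2020distributed} and Lemma 5.1 of \cite{lin2024adaptive}. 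As written, your route proves \eqref{bound-q}, \eqref{bound-r}, \eqref{bound-q*} only under a substantially stronger restriction on $\lambda$ than the lemma states.

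A secondary, smaller issue: for \eqref{Relation-effection} the operator sandwich alone does not compare the traces $\mathcal N_D(\lambda)={\rm Tr}[(L_{K,D}+\lambda I)^{-1}L_{K,D}]$ and $\mathcal N(\lambda)$; one additionally needs a scalar Bernstein bound for ${\rm Tr}[(L_K+\lambda I)^{-1}L_{K,D}]$ around its mean $\mathcal N(\lambda)$ (this is where the parameter $\eta_{\delta/4}$ and the asymmetric factors $(1+4\eta_{\delta/4})^{-1}$ and $1+4(\sqrt{\eta_{\delta/4}}\vee\eta_{\delta/4}^2)$ come from in \cite{blanchard2019lepskii}). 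You flag this as the heavy point but the phrase ``via the same sandwich'' omits the ingredient that actually produces the two-sided estimate. The paper itself proves none of \eqref{bound-p-population}--\eqref{bound-r} from scratch (they are cited), so a self-contained proof is a legitimate alternative route, but it must use the intrinsic-dimension concentration inequality, not the classical one you wrote down.
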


\begin{proof}
 The estimates in \eqref{bound-p-population} \eqref{bound-p} are standard in the learning theory literature which can be found in \cite{lin2018distributed,blanchard2016convergence} and \cite{lin2024adaptive,lu2020balancing}, respectively. The estimates in \eqref{Relation-effection} was derived in \cite{blanchard2019lepskii} and can also be found in \cite{lu2020balancing,lin2024adaptive}.
 The estimate in \eqref{bound-q} can be found in \cite{lin2020distributed,lin2024adaptive}. The estimate in \eqref{bound-r} can be easily derive by combining \cite[Lemma 6]{lin2020distributed} and \cite[Lemma 5.1]{lin2024adaptive}.  
 The  only thing remainder is to prove  \eqref{bound-q*}.    
Since \eqref{bound-q}, \eqref{bound-r}  and \eqref{Def.QD} yield
\begin{eqnarray*}
    \|(L_{K,D}+\lambda I)^{-1/2}(L_K-L_{K,D})(L_K+\lambda I)^{-1/2}\|
     \leq 
    \mathcal Q_{D,\lambda}^2 \|(L_K+\lambda I)^{-1/2}(L_K-L_{K,D})(L_K+\lambda I)^{-1/2}\|
    \leq \frac12,
\end{eqnarray*}
we have from $A^{-1}-B^{-1}=A^{-1}(B-A)B^{-1}$ with $A=(L_{K}+\lambda I)^{-1}$ and $B=(L_{K,D}+\lambda I)^{-1}$ that 
\begin{eqnarray*}
    &&
    \|(L_{K,D}+\lambda I)^{1/2}(L_{K}+\lambda I)^{-1}(L_{K,D}+\lambda I)^{1/2}\|\\
  &\leq  &
  1+\|(L_{K,D}+\lambda I)^{-1/2}(L_{K,D}-L_K)(L_{K,D}+\lambda I)^{-1/2}\| \|(L_{K,D}+\lambda I)^{1/2}(L_{K}+\lambda I)^{-1}(L_{K,D}+\lambda I)^{1/2}\|\\
  &\leq&
  1+\frac12\|(L_{K,D}+\lambda I)^{1/2}(L_{K}+\lambda I)^{-1}(L_K+\lambda I)^{1/2}\|.
\end{eqnarray*}
Therefore,
$$
  \|(L_{K,D}+\lambda I)^{1/2}(L_{K}+\lambda I)^{-1}(L_{K,D}+\lambda I)^{1/2}\|\leq 2,
$$
which follows \eqref{bound-q*}. This completes the proof of Lemma \ref{Lemma:Q-1111}.
 $\Box$
 \end{proof}


\subsection{Generalization error analysis on under-estimated agents}
The derivation of the generalization error on under-estimated agents is divided into four steps: bounds for local approximation, bounds for global approximation, bounds for $\overline{\mathcal W}_{D,\lambda_{k^*_j}}$ and generalization error estimates, which are collected in four propositions. 
The following proposition devotes to the first step.

\begin{proposition}\label{Proposition:noise-free-differences-123}
 If Assumption \ref{Assumption:regularity} holds with $1/2\leq r\leq 1$, then
\begin{eqnarray*}
   \|(L_{K,D_j}+\lambda_k I)^{1/2}g^{loc,\diamond}_{D_j,\lambda_k,\mu_j }\|_K
   &\leq&
 b\|h_\rho\|_\rho(  \mathcal Q_{D_j,\lambda_k}^{2r-1}+\mathcal Q_{D_j,\mu_j }\mathcal Q_{\Xi_L,\mu_j }^*+ 1)\lambda_{k-1}\lambda_k^{r}, \\
  \|(L_{K,D_j}+\lambda_k I)^{1/2}g^{loc}_{D_j,\lambda_k,\mu_j }\|_K
   &\leq&
    b\lambda_{k-1}(  \mathcal Q_{D_j,\lambda_k}^{2r-1}+\mathcal Q_{D_j,\mu_j }\mathcal Q_{\Xi_L,\mu_j }^*+ 2)(\|h_\rho\|_\rho\lambda_k^{r}+\mathcal Q_{D_j,\lambda_k} \mathcal P_{D_j,\lambda_k}),
\end{eqnarray*}
where  $ g^{loc,\diamond}_{D_j,\lambda_k,\mu_j } 
  :=h_{\Xi_L,\mu_j ,j}(L_{K,D_j})L_{K,D_j}g^\diamond_{D_j,\lambda_k}$ and $g^\diamond_{D_j,\lambda_k}:=f^\diamond_{D_j,\lambda_k}-f^\diamond_{D_j,\lambda_{k-1}}$.
\end{proposition}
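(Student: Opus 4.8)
The plan is to push everything through the operator representation \eqref{Nystrom operator}. Applied to the noise-free differences it gives $g^{loc,\diamond}_{D_j,\lambda_k,\mu_j}=h_{\Xi_L,\mu_j,j}(L_{K,D_j})L_{K,D_j}g^\diamond_{D_j,\lambda_k}$, and from $f^\diamond_{D_j,\lambda}=(L_{K,D_j}+\lambda I)^{-1}L_{K,D_j}f_\rho$ together with the resolvent identity \eqref{Different-abc} one has $g^\diamond_{D_j,\lambda_k}=(\lambda_{k-1}-\lambda_k)(L_{K,D_j}+\lambda_kI)^{-1}(L_{K,D_j}+\lambda_{k-1}I)^{-1}L_{K,D_j}f_\rho$; similarly $g_{D_j,\lambda_k}-g^\diamond_{D_j,\lambda_k}=-(\lambda_{k-1}-\lambda_k)(L_{K,D_j}+\lambda_kI)^{-1}(L_{K,D_j}+\lambda_{k-1}I)^{-1}(L_{K,D_j}f_\rho-S^T_{D_j}y_{D_j})$, so $g^{loc}_{D_j,\lambda_k,\mu_j}=g^{loc,\diamond}_{D_j,\lambda_k,\mu_j}+h_{\Xi_L,\mu_j,j}(L_{K,D_j})L_{K,D_j}(g_{D_j,\lambda_k}-g^\diamond_{D_j,\lambda_k})$. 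Hence it is enough to prove the first inequality and then run the same argument with $L_K^rh_\rho$ replaced by $L_{K,D_j}f_\rho-S^T_{D_j}y_{D_j}$ --- so that the role of $\lambda_k^r\|h_\rho\|_\rho$ is played by $\mathcal Q_{D_j,\lambda_k}\mathcal P_{D_j,\lambda_k}$ (recall \eqref{Def.PD}) and the noise-free part of \eqref{bound.error-difference} by its noise part --- and add the two estimates.

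Write $T=L_{K,D_j}$, $h=h_{\Xi_L,\mu_j,j}(L_{K,D_j})$, $P=P_{\Xi_L,j}$. The crucial structural step is the identity \eqref{spectral-important-1} with $B=I$, i.e. $h(T+\mu_jI)P=P$; it yields $hT=h(T+\mu_jI)-\mu_jh=P+h(T+\mu_jI)(I-P)-\mu_jh$, so that $(T+\lambda_kI)^{1/2}g^{loc,\diamond}_{D_j,\lambda_k,\mu_j}$ is the sum of $(T+\lambda_kI)^{1/2}Pg^\diamond_{D_j,\lambda_k}$, $(T+\lambda_kI)^{1/2}h(T+\mu_jI)(I-P)g^\diamond_{D_j,\lambda_k}$ and $-\mu_j(T+\lambda_kI)^{1/2}hg^\diamond_{D_j,\lambda_k}$. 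In bounding these one uses throughout: $\lambda_{k-1}-\lambda_k=b\lambda_k\lambda_{k-1}$ (from \eqref{def:lambda-selection}); $\mu_j\le\lambda_{K^*}\le\lambda_k$ for $1\le k\le K^*$ (from \eqref{DefK*} and \eqref{Selection-mu}), hence $\|(T+\lambda_kI)^{1/2}(T+\mu_jI)^{-1/2}\|\le(\lambda_k/\mu_j)^{1/2}$; Lemma \ref{Lemma:spectral-property}, in particular \eqref{spectral-important-2} with $u=v=\tfrac12$ (so $\|(T+\mu_jI)^{1/2}h(T+\mu_jI)^{1/2}\|\le1$) and \eqref{spectral-important-3}; Lemma \ref{Lemma:Projection general} with $Z=T_{\Xi_L,j}$ in the form $\|(I-P)(L_{K,\Xi_L}+\mu_jI)^{1/2}\|\le\mu_j^{1/2}$, which with Lemma \ref{Lemma:Q-1111} gives $\|(T+\mu_jI)^{1/2}(I-P)\|\lesssim\mu_j^{1/2}$ and hence $\|(T+\lambda_kI)^{1/2}(I-P)\|\lesssim\lambda_k^{1/2}$; Cordes' inequality (Lemma \ref{Lemma:cordes-11}) together with $\|L_K^{1/2}h_\rho\|_K=\|h_\rho\|_\rho$, to write $L_K^rh_\rho=L_K^{r-1/2}(L_K^{1/2}h_\rho)$ and transfer $L_K^{r-1/2}$ onto $(T+\lambda_kI)^{r-1/2}$ at the price of $\mathcal Q_{D_j,\lambda_k}^{2r-1}$ (here $0\le r-\tfrac12\le\tfrac12$); and Lemma \ref{Lemma:Q-1111} to bound all resulting $\mathcal Q$- and $\mathcal Q^*$-factors (including those at $\Xi_L$) by $\sqrt2$ on a high-probability event.

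For the first term, write $Pg^\diamond_{D_j,\lambda_k}=g^\diamond_{D_j,\lambda_k}-(I-P)g^\diamond_{D_j,\lambda_k}$; scalar calculus in $T$ --- where $r\ge\tfrac12$ makes $t\mapsto t(t+\lambda_k)^{-1/2}(t+\lambda_{k-1})^{r-1}$ increasing, so its supremum over $\mathrm{spec}(T)$ is a $\kappa$-constant, while $t\mapsto t(t+\lambda_k)^{-1}(t+\lambda_{k-1})^{r-1}$ peaks at scale $\lambda_{k-1}^{r-1}$ --- combined with the Cordes transfer and $\|(T+\lambda_kI)^{1/2}(I-P)\|\lesssim\lambda_k^{1/2}$ bounds this term by a constant multiple of $b\lambda_{k-1}\lambda_k^r\|h_\rho\|_\rho\mathcal Q_{D_j,\lambda_k}^{2r-1}$. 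For the second term, factor $h(T+\mu_jI)(I-P)=\bigl(h(T+\mu_jI)^{1/2}\bigr)\bigl((T+\mu_jI)^{1/2}(I-P)\bigr)$ to get $\|(T+\lambda_kI)^{1/2}h(T+\mu_jI)(I-P)\|\le\|(T+\lambda_kI)^{1/2}(T+\mu_jI)^{-1/2}\|\,\|(T+\mu_jI)^{1/2}h(T+\mu_jI)^{1/2}\|\,\|(T+\mu_jI)^{1/2}(I-P)\|\lesssim(\lambda_k/\mu_j)^{1/2}\cdot\mu_j^{1/2}=\lambda_k^{1/2}$, then multiply by $\|g^\diamond_{D_j,\lambda_k}\|\lesssim b\lambda_{k-1}\lambda_k^r\|h_\rho\|_\rho\mathcal Q_{D_j,\lambda_k}^{2r-1}$. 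For the third term, $\|(T+\lambda_kI)^{1/2}h(T+\mu_jI)^{1/2}\|\le(\lambda_k/\mu_j)^{1/2}$ and the noise-free part of \eqref{bound.error-difference} give that it is at most $\mu_j(\lambda_k/\mu_j)^{1/2}\cdot b\mathcal Q_{D_j,\mu_j}^{2r-1}\lambda_{k-1}\mu_j^{r-1}\|h_\rho\|_\rho=b\mathcal Q_{D_j,\mu_j}^{2r-1}\lambda_{k-1}\lambda_k^{1/2}\mu_j^{r-1/2}\|h_\rho\|_\rho\le b\mathcal Q_{D_j,\mu_j}^{2r-1}\lambda_{k-1}\lambda_k^r\|h_\rho\|_\rho$, the last step using $\mu_j\le\lambda_k$ and $r-\tfrac12\ge0$. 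Summing the three terms and collecting the constants and the local-approximation factors from \eqref{spectral-important-3} into the stated prefactor gives the first inequality; repeating the whole argument verbatim on $g_{D_j,\lambda_k}-g^\diamond_{D_j,\lambda_k}$, with the noise part of \eqref{bound.error-difference} replacing its noise-free part and $\mathcal Q_{D_j,\lambda_k}\mathcal P_{D_j,\lambda_k}$ replacing $\lambda_k^r\|h_\rho\|_\rho$, and adding, yields the second.

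I expect the main obstacle to be exactly this bookkeeping of $\mu_j$ against $\lambda_k$. Any step that converts $(T+\lambda_kI)^{1/2}$ into $(T+\mu_jI)^{1/2}$ costs the potentially large factor $(\lambda_k/\mu_j)^{1/2}$, so the argument must always pair it with an explicit $\mu_j^{1/2}$ --- supplied either by the coefficient $\mu_j$ in the $-\mu_jh$ summand or by the genuine smallness $\|(T+\mu_jI)^{1/2}(I-P)\|\lesssim\mu_j^{1/2}$ of the projection residual --- leaving only $\lambda_k^{1/2}$, which is then swallowed using $r\ge\tfrac12$ and $\mu_j\le\lambda_k$ together with the smallness already gained from $\lambda_{k-1}-\lambda_k=b\lambda_k\lambda_{k-1}$. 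This is precisely why the split $hT=P+h(T+\mu_jI)(I-P)-\mu_jh$ is the right one: the more naive identity $L_{K,D_j}g^\diamond_{D_j,\lambda_k}=\lambda_{k-1}f^\diamond_{D_j,\lambda_{k-1}}-\lambda_kf^\diamond_{D_j,\lambda_k}$ would pair $(\lambda_k/\mu_j)^{1/2}$ with the large coefficient $\lambda_{k-1}$ and is far too lossy. Everything else --- shepherding the $\mathcal Q_{D_j,\lambda_k}$, $\mathcal Q_{D_j,\mu_j}$, $\mathcal Q^*_{\Xi_L,\mu_j}$ and $\kappa$-dependent constants so that they collapse to the stated prefactors $b\|h_\rho\|_\rho(\mathcal Q_{D_j,\lambda_k}^{2r-1}+\mathcal Q_{D_j,\mu_j}\mathcal Q^*_{\Xi_L,\mu_j}+1)$ and $b(\mathcal Q_{D_j,\lambda_k}^{2r-1}+\mathcal Q_{D_j,\mu_j}\mathcal Q^*_{\Xi_L,\mu_j}+2)$ --- is routine but lengthy.
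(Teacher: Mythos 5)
Your overall architecture (resolvent identity for $g^\diamond_{D_j,\lambda_k}$, reduction of the second bound to the first with $S^T_{D_j}y_{D_j}-L_{K,D_j}f_\rho$ replacing $L_{K,D_j}f_\rho$, Cordes for the source condition) matches the paper, but the core of your argument diverges and has genuine gaps. The decisive one: Proposition \ref{Proposition:noise-free-differences-123} is a purely deterministic, sample-wise inequality whose right-hand side deliberately keeps the random factors $\mathcal Q_{D_j,\lambda_k}^{2r-1}$, $\mathcal Q_{D_j,\mu_j}\mathcal Q^*_{\Xi_L,\mu_j}$ and $\mathcal P_{D_j,\lambda_k}$ explicit, because expectations of products of exactly these factors are taken afterwards (Proposition \ref{Prop:global-approximation}, Lemma \ref{Lemma:Q222-111}). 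Your split $hT=P+h(T+\mu_jI)(I-P)-\mu_j h$ instead produces the transposed combination $\mathcal Q^*_{D_j,\mu_j}\mathcal Q_{\Xi_L,\mu_j}$ (via Lemma \ref{Lemma:Projection general}), exchange factors $(\lambda_k/\mu_j)^{1/2}$ and $\kappa$-dependent constants, none of which occur in the stated bound, and you propose to absorb them by invoking Lemma \ref{Lemma:Q-1111} ``on a high-probability event''. That move is not available here: the proposition carries no probabilistic hypotheses (nothing of the type \eqref{Res.delta} or \eqref{res.theorem} is assumed), and a high-probability version with constants in place of the $\mathcal Q$-factors is a different, weaker statement that cannot feed the later expectation computations. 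A second gap is your repeated reliance on $\mu_j\le\lambda_k$ for all $k\le K^*$: comparing \eqref{Selection-mu} with \eqref{DefK*}, $\mu_j\le\lambda_{K^*}$ amounts to roughly $3b\log(1+8\kappa|D_j|)\le C_1^*\log^3(16|D_j|)$, which holds only for $|D_j|$ large relative to $b$ and $\kappa$; it is neither assumed in the proposition nor needed by the paper.

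The paper's own proof is much more direct and never touches the projection $P$. It writes $g^{loc,\diamond}_{D_j,\lambda_k,\mu_j}=(\lambda_{k-1}-\lambda_k)\,h_{\Xi_L,\mu_j,j}(L_{K,D_j})L_{K,D_j}(L_{K,D_j}+\lambda_kI)^{-1}(L_{K,D_j}+\lambda_{k-1}I)^{-1}L_{K,D_j}f_\rho$, splits $\|(L_{K,D_j}+\lambda_kI)^{1/2}\cdot\|_K\le\|L_{K,D_j}^{1/2}\cdot\|_K+\lambda_k^{1/2}\|\cdot\|_K$, and controls the first piece by \eqref{spectral-important-2} with $u=v=1/2$ (norm at most $1$) and the second by \eqref{spectral-important-3} (norm at most $1+\mathcal Q_{D_j,\mu_j}\mathcal Q^*_{\Xi_L,\mu_j}$), so the stated prefactors appear exactly; combined with Cordes and $|\lambda_{k-1}-\lambda_k|/\lambda_k=b\lambda_{k-1}$ this yields both inequalities deterministically, and the noise term is handled identically with $\mathcal Q_{D_j,\lambda_k}\mathcal P_{D_j,\lambda_k}$ in place of $\lambda_k^r\|h_\rho\|_\rho$. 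Your $P$/$(I-P)$ decomposition is essentially the machinery the paper reserves for Proposition \ref{Prop:Error-local-KRR}, where one must compare $g^{loc}_{D_j,\lambda_k,\mu_j}$ with $g_{D_j,\lambda_k}$ itself; for the present statement it is unnecessary and, as written, cannot be closed to the claimed inequality without the illegitimate probabilistic step.
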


\begin{proof}
Due to \eqref{KRR-local:operator} and \eqref{Different-abc}, there holds
$$
    g^\diamond_{D_j,\lambda_k}=
    (\lambda_{k-1}-\lambda_{k}) (L_{K,D_j}+\lambda_{k-1} I)^{-1}(L_{K,D_j}+\lambda_k I)^{-1}L_{K,D_j}f_\rho,
$$ 
which follows
\begin{eqnarray*}
    g^{loc,\diamond}_{D_j,\lambda_k,\mu_j } 
  =
  (\lambda_{k-1}-\lambda_{k})h_{\Xi_L,\mu_j ,j}(L_{K,D_j})L_{K,D_j}(L_{K,D_j}+\lambda_k I)^{-1}(L_{K,D_j}+\lambda_{k-1} I)^{-1}L_{K,D_j}f_\rho.
\end{eqnarray*}
 Therefore, 
\begin{eqnarray*}
    &&\| (L_{K,D_j}+\lambda_k I)^{1/2}g^{loc,\diamond}_{D_j,\lambda_k,\mu_j }\|_K
    \leq
   \|(L_{K,D_j}^{1/2}g^{loc,\diamond}_{D_j,\lambda_k,\mu_j }\|_K +\lambda_k^{1/2}\| g^{loc,\diamond}_{D_j,\lambda_k,\mu_j }\|_K\\
   &\leq&
  |\lambda_k-\lambda_{k-1}| \left\|  L_{K,D_j}+\mu_j  I)^{1/2}h_{\Xi_L,\mu_j ,j}(L_{K,D_j})L_{K,D_j}(L_{K,D_j}+\lambda_j I)^{-1}(L_{K,D_j}+\lambda_{k-1} I)^{-1}L_{K,D_j}f_\rho\right\|_K\\
  &+&
   |\lambda_k-\lambda_{k-1}|\lambda_k^{1/2} \left\| h_{\Xi_L,\mu_j ,j}(L_{K,D_j})L_{K,D_j}(L_{K,D_j}+\lambda_j I)^{-1}(L_{K,D_j}+\lambda_{k-1} I)^{-1}L_{K,D_j}f_\rho\right\|_K.
\end{eqnarray*}
Recalling \eqref{spectral-important-2} with $u=v=\frac12$ and \eqref{spectral-important-3}, we get
from 
 Assumption \ref{Assumption:regularity} with $\frac12\leq r\leq 1$ and   Lemma \ref{Lemma:cordes-11}   that
\begin{eqnarray*}
    &&\left\|(L_{K,D_j}+\mu_j  I)^{1/2}h_{\Xi_L,\mu_j ,j}(L_{K,D_j})L_{K,D_j}(L_{K,D_j}+\lambda_k I)^{-1}(L_{K,D_j}+\lambda_{k-1} I)^{-1}L_{K,D_j}f_\rho\right\|_K\\
    &\leq &
    \|(L_{K,D_j}+\mu_j  I)^{1/2}h_{\Xi_L,\mu_j ,j}(L_{K,D_j})(L_{K,D_j}+\mu_j  I)^{1/2}\|\\
    &\times&
    \|L_{K,D_j}^{1/2}(L_{K,D_j}+\lambda_k I)^{-1}(L_{K,D_j}+\lambda_{k-1} I)^{-1}L_{K,D_j}L_K^{r-1/2}\|_K\|h_\rho\|_\rho\\
     &\leq&
     \mathcal Q_{D_j,\lambda_k}^{2r-1}   \lambda_k^{r-1}\|h_\rho\|_\rho 
\end{eqnarray*}
and
\begin{eqnarray*}
    &&\left\| h_{\Xi_L,\mu_j ,j}(L_{K,D_j})L_{K,D_j}(L_{K,D_j}+\lambda_j I)^{-1}(L_{K,D_j}+\lambda_{k-1} I)^{-1}L_{K,D_j}f_\rho\right\|_K\\
    &\leq&
    \left\| h_{\Xi_L,\mu_j ,j}(L_{K,D_j})L_{K,D_j}\right\|
    \left\|(L_{K,D_j}+\lambda_j I)^{-1}(L_{K,D_j}+\lambda_{k-1} I)^{-1}L_{K,D_j}L_K^{r-1/2}\right\|\|h_\rho\|_\rho \\
    &\leq&
    (1+\mathcal Q_{D_j,\mu_j }\mathcal Q_{\Xi_L,\mu_j }^*)\|h_\rho\|_\rho\lambda_k^{r-3/2}.
\end{eqnarray*}
Hence,
\begin{eqnarray*}
    &&\|(L_{K,D_j}+\lambda_k I)^{1/2}g^{loc,\diamond}_{D_j,\lambda_k,\mu_j }\|_K
    \leq
    \frac{\lambda_{k-1}-\lambda_k}{\lambda_k}\|h_\rho\|_\rho(  \mathcal Q_{D,\lambda_k}^{2r-1}+\mathcal Q_{D_j,\mu_j }\mathcal Q_{\Xi_L,\mu_j }^*+ 1)\lambda_k^{r}.  
\end{eqnarray*}
Noting $\lambda_k=\frac1{kb}$, we have
\begin{equation}\label{lambda-com}
     \frac{|\lambda_k-\lambda_{k-1}|}{\lambda_k}=b\lambda_{k-1},\qquad \frac{|\lambda_k-\lambda_{k}|}{\lambda_k}=b\lambda_{k}.
\end{equation}
This naturally follows  
\begin{eqnarray*}
    &&\|(L_{K,D_j}+\lambda_k I)^{1/2}g^{loc,\diamond}_{D_j,\lambda_k,\mu_j }\|_K
    \leq
    b\lambda_{k-1}\|h_\rho\|_\rho(  \mathcal Q_{D_j,\lambda_k}^{2r-1}+\mathcal Q_{D_j,\mu_j }\mathcal Q_{\Xi_L,\mu_j }^*+ 1)\lambda_k^{r}.  
\end{eqnarray*}
Based on  definitions of $g^{loc}_{D_j,\lambda_{k},\mu_j }$ and $g^{loc,\diamond}_{D_j,\lambda_{k},\mu_j }$, it is easy to derive
$$
      g^{loc}_{D_j,\lambda_{k},\mu_j }
        - 
       g^{loc,\diamond}_{D_j,\lambda_{k},\mu_j } 
       =
       (\lambda_{k-1}-\lambda_{k})h_{\Xi_L,\mu_j ,j}(L_{K,D_j})L_{K,D_j}(L_{K,D_j}+\lambda_{k} I)^{-1} (L_{K,D_j}+\lambda_{k-1}
  I)^{-1}(S_{D_j}^Ty_{D_j}-L_{K,D_j}f_\rho) .
$$
We then have
\begin{eqnarray*}
    && \|(L_{K,D_j}+\lambda_k I)^{1/2}g^{loc}_{D_j,\lambda_k,\mu_j }\|_K\leq \|(L_{K,D_j}+\lambda_k I)^{1/2}g^{loc,\diamond}_{D_j,\lambda_k,\mu_j }\|_K
    +\|(L_{K,D_j}+\lambda_k I)^{1/2}(g^{loc}_{D_j,\lambda_k,\mu_j }-g^{loc,\diamond}_{D_j,\lambda_k,\mu_j })\|_K\\
    &\leq&
    b\lambda_{k-1}\|h_\rho\|_\rho(  \mathcal Q_{D_j,\lambda_k}^{2r-1}+\mathcal Q_{D_j,\mu_j }\mathcal Q_{\Xi_L,\mu_j }^*+ 1)\lambda_k^{r}\\
    &+&
    (\lambda_{k-1}-\lambda_{k})\|(L_{K,D_j}+\lambda_k I)^{1/2}h_{\Xi_L,\mu_j ,j}(L_{K,D_j})L_{K,D_j}(L_{K,D_j}+\lambda_{k} I)^{-1} (L_{K,D_j}+\lambda_{k-1}
  I)^{-1}(S_{D_j}^Ty_{D_j}-L_{K,D_j}f_\rho)\|_K.
\end{eqnarray*}
 But \eqref{spectral-important-2} with $u=v=1/2$, \eqref{spectral-important-3}, \eqref{Def.QD} and \eqref{Def.PD} yield
 \begin{eqnarray*}
     &&
     \|(L_{K,D_j}+\lambda_k I)^{1/2}h_{\Xi_L,\mu_j ,j}(L_{K,D_j})L_{K,D_j}(L_{K,D_j}+\lambda_{k} I)^{-1} (L_{K,D_j}+\lambda_{k-1}
  I)^{-1}(S_{D_j}^Ty_{D_j}-L_{K,D_j}f_\rho)\|_K\\
  &\leq&
  \|L_{K,D_j}^{1/2}h_{\Xi_L,\mu_j ,j}(L_{K,D_j})L_{K,D_j}(L_{K,D_j}+\lambda_{k} I)^{-1} (L_{K,D_j}+\lambda_{k-1}
  I)^{-1}(S_{D_j}^Ty_{D_j}-L_{K,D_j}f_\rho)\|_K\\
  &+&
  \lambda_k^{1/2}\|h_{\Xi_L,\mu_j ,j}(L_{K,D_j})L_{K,D_j}(L_{K,D_j}+\lambda_{k} I)^{-1} (L_{K,D_j}+\lambda_{k-1}
  I)^{-1}(S_{D_j}^Ty_{D_j}-L_{K,D_j}f_\rho)\|_K\\
  &\leq&
  \|L_{K,D_j}^{1/2}(L_{K,D_j}+\lambda_{k-1}
  I)^{-1}(L_{K,D_j}+\lambda_{k} I)^{-1/2}\|\mathcal Q_{D_j,\lambda_k} \mathcal P_{D_j,\lambda_k}\\
  &+&
  (1+\mathcal Q_{D_j,\mu_j }\mathcal Q_{\Xi_L,\mu_j }^*)\lambda_k^{1/2}\|  (L_{K,D_j}+\lambda_{k-1}
  I)^{-1}(L_{K,D_j}+\lambda_{k} I)^{-1/2}\|\mathcal Q_{D_j,\lambda_k} \mathcal P_{D_j,\lambda_k}\\
  &\leq&
  (2+\mathcal Q_{D_j,\mu_j }\mathcal Q_{\Xi_L,\mu_j }^*)\lambda_{k-1}^{-1}\mathcal Q_{D_j,\lambda_k} \mathcal P_{D_j,\lambda_k}.
 \end{eqnarray*}
Hence,  \eqref{lambda-com} implies
$$
  \|(L_{K,D_j}+\lambda_k I)^{1/2}g^{loc}_{D_j,\lambda_k,\mu_j }\|_K 
    \leq
    b\lambda_{k-1}\|h_\rho\|_\rho(  \mathcal Q_{D_j,\lambda_k}^{2r-1}+\mathcal Q_{D_j,\mu_j }\mathcal Q_{\Xi_L,\mu_j }^*+ 1)\lambda_k^{r}+b\lambda_{k}(2+\mathcal Q_{D_j,\mu_j }\mathcal Q_{\Xi_L,\mu_j }^*) \mathcal Q_{D_j,\lambda_k} \mathcal P_{D_j,\lambda_k}.
$$
  This
completes the proof of Proposition \ref{Proposition:noise-free-differences-123}.
$\Box$
\end{proof}

To derive bounds for global approximation, we need the following lemma to show    the power of synthesization.
\begin{lemma}\label{Lemma:power-synthesization}
For any  $g_j\in\mathcal H_K$, $j=1,\dots,m$  there holds
$$
     E\left[\left\|\sum_{j=1}^m\frac{|D_j|}{|D|}g_j\right\|_K^2\right]\leq 
     \sum_{j=1}^m\frac{|D_j|^2}{|D|^2} E\left[\|g_j\|_K^2\right]
    +
    \left\|\sum_{j=1}^m\frac{|D_j|}{|D|}E[g_j]\right\|_K^2.
$$ 
\end{lemma}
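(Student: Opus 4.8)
The plan is to carry out a bias--variance decomposition of each $g_j$ and then exploit the independence of the data blocks $D_1,\dots,D_m$. Here $g_j$ is understood as a random element of $\mathcal H_K$ determined by $D_j$ (in the applications, by $D_j$ and the shared centers $\Xi_L$, in which case one first conditions on $\Xi_L$ and argues conditionally, since $g_1,\dots,g_m$ are conditionally independent given $\Xi_L$; for clarity I describe the unconditional case). Since the samples in $D_j$ are drawn independently of those in $D_{j'}$ for $j\neq j'$, the random variables $g_1,\dots,g_m$ are mutually independent. Write $g_j=E[g_j]+\tilde g_j$ with $\tilde g_j:=g_j-E[g_j]$, so that $E[\tilde g_j]=0$ and $\tilde g_1,\dots,\tilde g_m$ remain mutually independent; set $\bar g:=\sum_{j=1}^m\frac{|D_j|}{|D|}g_j$ and $\bar m:=\sum_{j=1}^m\frac{|D_j|}{|D|}E[g_j]$, so that $\bar g=\bar m+\sum_{j=1}^m\frac{|D_j|}{|D|}\tilde g_j$.

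First I would expand the squared RKHS norm of $\bar g$ and take expectation. Because $\bar m$ is deterministic and $E\big[\sum_{j=1}^m\frac{|D_j|}{|D|}\tilde g_j\big]=\sum_{j=1}^m\frac{|D_j|}{|D|}E[\tilde g_j]=0$, the cross term $2\,E\big[\big\langle\bar m,\sum_{j=1}^m\frac{|D_j|}{|D|}\tilde g_j\big\rangle_K\big]$ vanishes, giving
$$
   E[\|\bar g\|_K^2]=\|\bar m\|_K^2+E\Big[\Big\|\sum_{j=1}^m\frac{|D_j|}{|D|}\tilde g_j\Big\|_K^2\Big].
$$
Next I would treat the fluctuation term by expanding the inner product:
$$
   E\Big[\Big\|\sum_{j=1}^m\frac{|D_j|}{|D|}\tilde g_j\Big\|_K^2\Big]
   =\sum_{j=1}^m\sum_{j'=1}^m\frac{|D_j||D_{j'}|}{|D|^2}\,E\big[\langle\tilde g_j,\tilde g_{j'}\rangle_K\big].
$$
For $j\neq j'$, the independence of $\tilde g_j$ and $\tilde g_{j'}$ together with $E[\tilde g_{j'}]=0$ yields $E[\langle\tilde g_j,\tilde g_{j'}\rangle_K]=\langle E[\tilde g_j],E[\tilde g_{j'}]\rangle_K=0$, so only the diagonal terms survive; and for the diagonal terms $E[\|\tilde g_j\|_K^2]=E[\|g_j\|_K^2]-\|E[g_j]\|_K^2\leq E[\|g_j\|_K^2]$. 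Combining these two facts with the previous display gives $E[\|\bar g\|_K^2]\leq\|\bar m\|_K^2+\sum_{j=1}^m\frac{|D_j|^2}{|D|^2}E[\|g_j\|_K^2]$, which is exactly the claimed inequality.

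The computation is routine; the only points that deserve care are the vector-valued (Bochner) nature of $E[g_j]$ and the consequent legitimacy of interchanging expectation with the inner product, together with the measurability observation that each $g_j$ depends only on $D_j$ (and, where relevant, on the shared $\Xi_L$, which is handled by conditioning as noted above). With these standard facts in place the proof is complete.
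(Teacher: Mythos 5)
Your proof is correct and follows essentially the same route as the paper: both expand the squared norm, use the independence of the data blocks to turn the off-diagonal cross terms into inner products of expectations, and discard the same nonnegative quantity $\sum_{j}\frac{|D_j|^2}{|D|^2}\|E[g_j]\|_K^2$ (your bias--variance reorganization is only a cosmetic rearrangement of the paper's direct expansion). Your explicit remark that the $g_j$ must be independent (or conditionally independent given $\Xi_L$) --- a hypothesis the lemma's statement leaves implicit --- is a welcome clarification rather than a deviation.
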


\begin{proof}
Since $\sum_{j=1}^m\frac{|D_j|}{|D|}=1$, we have
\begin{eqnarray*}
    \left\|\sum_{j=1}^m\frac{|D_j|}{|D|}g_j\right\|_K^2
    =
   \sum_{j=1}^m\frac{|D_j|^2}{|D|^2} \|g_j\|_K^2
    +
    \sum_{j=1}^m\frac{|D_j|}{|D|}
    \left\langle g_j,\sum_{k\neq j}\frac{|D_k|}{|D|}g_k\right\rangle_K.
\end{eqnarray*}
Taking expectations on both sides, we then get
\begin{eqnarray*}
   && E\left[\left\|\sum_{j=1}^m\frac{|D_j|}{|D|}g_j\right\|_K^2\right]
   =
   \sum_{j=1}^m\frac{|D_j|^2}{|D|^2} E\left[\|g_j\|_K^2\right]
   +
   \sum_{j=1}^m\frac{|D_j|}{|D|}
    \left\langle E_{D_j}[g_j],\sum_{k\neq j}\frac{|D_k|}{|D|}E_{D_k}[g_k]\right\rangle_K\\
    &=&
    \sum_{j=1}^m\frac{|D_j|^2}{|D|^2} E\left[\|g_j\|_K^2\right]
    +
    \sum_{j=1}^m\frac{|D_j|}{|D|}\left\langle E_{D_j}[g_j],\sum_{j=1}^m\frac{|D_j|}{|D|}E_{D_j}[g_j]-\frac{|D_j|}{|D|}E_{D_j}[g_j] 
    \right\rangle_K\\
    &=&
    \sum_{j=1}^m\frac{|D_j|^2}{|D|^2} E\left[\|g_j\|_K^2\right]
    +
    \left\|\sum_{j=1}^m\frac{|D_j|}{|D|}E_{D_j}[g_j]\right\|_K^2
    -
    \sum_{j=1}^m\frac{|D_j|^2}{|D|^2}\|E_{D_j}[g_j]\|_K^2\\
    &\leq&
    \sum_{j=1}^m\frac{|D_j|^2}{|D|^2} E\left[\|g_j\|_K^2\right]
    +
    \left\|\sum_{j=1}^m\frac{|D_j|}{|D|}E_{D_j}[g_j]\right\|_K^2,
\end{eqnarray*}
where $E_{D_j}$ denotes the expectation with respect to $D_j$. 
This completes the proof of Lemma \ref{Lemma:power-synthesization}.
$\Box$
\end{proof}

By the help  of the Proposition \ref{Proposition:noise-free-differences-123} and Lemma \ref{Lemma:power-synthesization}, 
we   present the following bound for global approximation.
\begin{proposition}\label{Prop:global-approximation}
  Under Assumption \ref{Assumption:regularity}  with $1/2\leq r\leq 1$,  for any $j=1,\dots,m$, there holds
\begin{eqnarray*}
    &&E\left[ \|(L_{K,D_j}+\lambda_k I)^{1/2} g^{global}_{D,\lambda_k,\mu}\|_{K}^2
    \right]\\
      &\leq&  
     2 b^2  \lambda_{k-1}^2 \sum_{j'=1}^m\frac{|D_{j'}|^2}{|D|^2}E[(\mathcal Q^*_{D_j,\lambda_k })^2\mathcal Q^4_{D_{j'},\lambda_k } (  \mathcal Q_{D_{j'},\lambda_k}^{2r-1}+\mathcal Q_{D_{j'},\mu_j }\mathcal Q_{\Xi_L,\mu_j }^*+ 2)^2   \mathcal P_{D_{j'},\lambda_k}^2]    \\
    &+&
    3b^2\lambda^2_{k-1}\lambda_k^{2r}\|h_\rho\|_\rho^2\sum_{j'=1}^m\frac{|D_{j'}|}{|D|}
 E[(\mathcal Q^*_{D_j,\lambda_k })^2\mathcal Q^2_{D_{j'},\lambda_k } (  \mathcal Q_{D_{j'},\lambda_k}^{2r-1}+\mathcal Q_{D_{j'},\mu_j }\mathcal Q_{\Xi_L,\mu_j }^*+ 2)^2  ].
\end{eqnarray*}
\end{proposition}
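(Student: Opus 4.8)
The plan is to reduce the statement to the per-block estimates of Proposition~\ref{Proposition:noise-free-differences-123} by two successive one-sided operator comparisons. First I would move the empirical operator $(L_{K,D_j}+\lambda_k I)^{1/2}$, which is attached to the ``wrong'' block $D_j$, onto the population operator: since $(L_{K,D_j}+\lambda_k I)^{1/2}(L_K+\lambda_k I)^{-1/2}$ and its adjoint $(L_K+\lambda_k I)^{-1/2}(L_{K,D_j}+\lambda_k I)^{1/2}$ have equal spectral norm, inserting $(L_K+\lambda_k I)^{-1/2}(L_K+\lambda_k I)^{1/2}$ gives $\|(L_{K,D_j}+\lambda_k I)^{1/2}f\|_K\le\mathcal Q^*_{D_j,\lambda_k}\|(L_K+\lambda_k I)^{1/2}f\|_K$ for every $f\in\mathcal H_K$, by the definition \eqref{Def:Q*}. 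Taking $f=g^{global}_{D,\lambda_k,\mu}$, squaring, and then conditioning on $D_j$ and on the shared centres $\Xi_L$ (so that $\mathcal Q^*_{D_j,\lambda_k}$ becomes a deterministic scalar and $\{g^{loc}_{D_{j'},\lambda_k,\mu_{j'}}\}_{j'\ne j}$ become conditionally independent, each depending only on $D_{j'}$), it remains to bound $E[\,\|(L_K+\lambda_k I)^{1/2}g^{global}_{D,\lambda_k,\mu}\|_K^2\mid D_j,\Xi_L\,]$, then multiply by $(\mathcal Q^*_{D_j,\lambda_k})^2$ and integrate out $D_j,\Xi_L$.

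Next I would expand $g^{global}_{D,\lambda_k,\mu}=\sum_{j'=1}^m\frac{|D_{j'}|}{|D|}g^{loc}_{D_{j'},\lambda_k,\mu_{j'}}$ and apply (the conditional form of) Lemma~\ref{Lemma:power-synthesization} to the vectors $g_{j'}=(L_K+\lambda_k I)^{1/2}g^{loc}_{D_{j'},\lambda_k,\mu_{j'}}$, which splits the bound into a ``variance'' part $\sum_{j'}\frac{|D_{j'}|^2}{|D|^2}E[\|g_{j'}\|_K^2\mid\cdot]$ and a ``bias'' part $\|\sum_{j'}\frac{|D_{j'}|}{|D|}E[g_{j'}\mid\cdot]\|_K^2$. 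The crucial step is that for $j'\ne j$ the bias contribution only sees the noise-free approximation: by \eqref{Nystrom operator}, $g^{loc}_{D_{j'},\lambda_k,\mu_{j'}}-g^{loc,\diamond}_{D_{j'},\lambda_k,\mu_{j'}}=h_{\Xi_L,\mu_{j'},j'}(L_{K,D_{j'}})L_{K,D_{j'}}\bigl((f_{D_{j'},\lambda_k}-f^\diamond_{D_{j'},\lambda_k})-(f_{D_{j'},\lambda_{k-1}}-f^\diamond_{D_{j'},\lambda_{k-1}})\bigr)$, and by \eqref{KRR-local:operator} each $f_{D_{j'},\lambda}-f^\diamond_{D_{j'},\lambda}=(L_{K,D_{j'}}+\lambda I)^{-1}(S^T_{D_{j'}}y_{D_{j'}}-L_{K,D_{j'}}f_\rho)$ has conditional mean zero given the inputs $\{x_{i,j'}\}$ because $E[y\mid x]=f_\rho(x)$; hence $E[g^{loc}_{D_{j'},\lambda_k,\mu_{j'}}\mid\Xi_L]=E[g^{loc,\diamond}_{D_{j'},\lambda_k,\mu_{j'}}\mid\Xi_L]$, so the bias part is controlled by Jensen's inequality $\|\sum_{j'}\frac{|D_{j'}|}{|D|}v_{j'}\|_K^2\le\sum_{j'}\frac{|D_{j'}|}{|D|}\|v_{j'}\|_K^2$ applied to these noise-free means, producing the weight $\frac{|D_{j'}|}{|D|}$. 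The self-block $j'=j$ (where no averaging occurs and its $g^{loc}_{D_j,\lambda_k,\mu_j}$ is frozen by the conditioning) I would split off and bound directly with Proposition~\ref{Proposition:noise-free-differences-123}; it produces terms of the same two shapes, with a larger absolute constant absorbed into the stated ones.

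Finally I would perform the second operator comparison, $\|(L_K+\lambda_k I)^{1/2}h\|_K\le\mathcal Q_{D_{j'},\lambda_k}\|(L_{K,D_{j'}}+\lambda_k I)^{1/2}h\|_K$ from \eqref{Def.QD}, on each summand, and invoke Proposition~\ref{Proposition:noise-free-differences-123}. For the variance part the $g^{loc}$-bound, together with $(\|h_\rho\|_\rho\lambda_k^r+\mathcal Q_{D_{j'},\lambda_k}\mathcal P_{D_{j'},\lambda_k})^2\le 2\|h_\rho\|_\rho^2\lambda_k^{2r}+2\mathcal Q_{D_{j'},\lambda_k}^2\mathcal P_{D_{j'},\lambda_k}^2$, yields a $\mathcal P^2$-piece carrying one extra $\mathcal Q_{D_{j'},\lambda_k}^2$ which combines with the $\mathcal Q_{D_{j'},\lambda_k}^2$ of the transfer into $\mathcal Q_{D_{j'},\lambda_k}^4$, weight $\frac{|D_{j'}|^2}{|D|^2}$ and factor $2$ — exactly the first term of the claim — plus a $\|h_\rho\|_\rho^2\lambda_k^{2r}$-piece with $\mathcal Q_{D_{j'},\lambda_k}^2$, weight $\frac{|D_{j'}|^2}{|D|^2}\le\frac{|D_{j'}|}{|D|}$ and factor $2$; for the bias part the $g^{loc,\diamond}$-bound yields a second $\|h_\rho\|_\rho^2\lambda_k^{2r}$-piece with $\mathcal Q_{D_{j'},\lambda_k}^2$, weight $\frac{|D_{j'}|}{|D|}$ and factor $1$ (after enlarging $(\cdots+1)\le(\cdots+2)$ inside the square). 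Adding the two $\|h_\rho\|_\rho^2\lambda_k^{2r}$-pieces gives the factor $3$ of the second term; multiplying back by $(\mathcal Q^*_{D_j,\lambda_k})^2$ and integrating out $D_j,\Xi_L$ closes the argument. I expect the main obstacle to be precisely this index mismatch between the operator (attached to block $j$) and the aggregated approximation (over all blocks $j'$): it is what forces the two operator comparisons and, because Lemma~\ref{Lemma:power-synthesization} needs conditionally independent summands, forces conditioning on $D_j$ and a separate treatment of the self-block; verifying the zero-mean identity for the noise part is the other delicate point, since that identity is what routes the $\mathcal P^2$-contribution into the sharper $\frac{|D_{j'}|^2}{|D|^2}$ channel rather than the coarser $\frac{|D_{j'}|}{|D|}$ one.
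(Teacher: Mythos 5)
Your proposal is correct and follows essentially the same route as the paper: decompose $g^{global}_{D,\lambda_k,\mu}$ via Lemma~\ref{Lemma:power-synthesization} into a variance part (weight $|D_{j'}|^2/|D|^2$) and a noise-free mean part (weight $|D_{j'}|/|D|$) using $E[g^{loc}_{D_{j'},\lambda_k,\mu_{j'}}\mid x]=g^{loc,\diamond}_{D_{j'},\lambda_k,\mu_{j'}}$, transfer the norm via $\|(L_{K,D_j}+\lambda_k I)^{1/2}f\|_K\le\mathcal Q^*_{D_j,\lambda_k}\mathcal Q_{D_{j'},\lambda_k}\|(L_{K,D_{j'}}+\lambda_k I)^{1/2}f\|_K$ (which is exactly your two one-sided comparisons composed through $L_K$), and invoke Proposition~\ref{Proposition:noise-free-differences-123} with the same $(a+b)^2\le 2a^2+2b^2$ bookkeeping yielding the constants $2$ and $3$. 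Your extra step of conditioning on $D_j$ and $\Xi_L$ and treating the self-block $j'=j$ separately is not a different argument but a more careful justification of the paper's direct (unconditioned) application of Lemma~\ref{Lemma:power-synthesization}, whose proof does implicitly need the summands to be independent across blocks even though the weight operator $(L_{K,D_j}+\lambda_k I)^{1/2}$ depends on $D_j$.
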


\begin{proof}
For any $j=1,\dots,m$,  \eqref{global-approximation-1} implies
$$
   \|(L_{K,D_j}+\lambda_k I)^{1/2} g^{global}_{D,\lambda_k,\mu}\|_K
  =
   \left\|(L_{K,D_j}+\lambda_k I)^{1/2}\sum_{j'=1}^m\frac{|D_{j'}|}{|D|}  g^{loc}_{D_{j'},\lambda_k,\mu_j }\right\|_K.
$$
It then follows from Lemma \ref{Lemma:power-synthesization} with $g_{j'}=  (L_{K,D_j}+\lambda_k I)^{1/2}g^{loc}_{D_{j'},\lambda_k,\mu_j }$, Jensen's inequality  and  $E[g_{D_j,\lambda_k,\mu_j }^{loc}|x]=g_{D_j,\lambda_k,\mu_j }^{loc,\diamond}$   that
\begin{eqnarray*}
   &&E\left[\|(L_{K,D_j}+\lambda_k I)^{1/2}g^{global}_{D,
   \lambda_k,\mu }\|_{K}^2\right]\\
    &\leq&
    \sum_{j'=1}^m\frac{|D_{j'}|}{|D|}E\left[\frac{|D_{j'}|}{|D|}\|(L_{K,D_j}+\lambda_k I)^{1/2}g_{D_{j'},\lambda_k,\mu_j }^{loc}\|_K^2
    +E[\|(L_{K,D_j}+\lambda_k I)^{1/2}g_{D_{j'},\lambda_k,\mu_j }^{loc,\diamond}\|^2_K\right].
\end{eqnarray*}
Recalling  
$$
   \|(L_{K,D_j}+\lambda_k I)^{1/2}f\|_K\leq \mathcal Q_{D_j,\lambda_k }^*\mathcal Q_{D_{j'},\lambda_k }\|(L_{K,D_{j'}}+\lambda_k  I)^{1/2}f\|_K,\qquad\forall f\in\mathcal H_K,
$$
we have from Proposition \ref{Proposition:noise-free-differences-123} that
\begin{eqnarray*}
   && \sum_{j'=1}^m\frac{|D_{j'}|}{|D|}E[\|(L_{K,D_{j}}+\lambda_k I)^{1/2}g_{D_j,\lambda_k,\mu_j }^{loc,\diamond}\|_K^2]
   \leq 
   \sum_{j'=1}^m\frac{|D_{j'}|}{|D|}E[(\mathcal Q_{D_j,\lambda_k }^*\mathcal Q_{D_{j'},\lambda_k })^2\|(L_{K,D_{j'}}+\lambda_k I)^{1/2}g_{D_j,\lambda_k,\mu_j }^{loc,\diamond}\|_K^2]\\
   &\leq&
 b^2\lambda^2_{k-1}\lambda_k^{2r}\|h_\rho\|_\rho^2\sum_{j'=1}^m\frac{|D_{j'}|}{|D|}
 E[(\mathcal Q^*_{D_j,\lambda_k })^2\mathcal Q^2_{D_{j'},\lambda_k } (  \mathcal Q_{D_{j'},\lambda_k}^{2r-1}+\mathcal Q_{D_{j'},\mu_j }\mathcal Q_{\Xi_L,\mu_j }^*+ 1)^2  ]
\end{eqnarray*}
and
\begin{eqnarray*}
      &&\sum_{j'=1}^m\frac{|D_{j'}|^2}{|D|^2}E[\|(L_{K,D_j}+\mu_j   I)^{1/2}g_{D_{j'},\lambda_k}^{loc}\|_K^2]
      \leq
      \sum_{j'=1}^m\frac{|D_{j'}|^2}{|D|^2}E[(\mathcal Q^*_{D_j,\lambda_k })^2\mathcal Q^2_{D_{j'},\lambda_k } \|(L_{K,D_{j'}}+\lambda_k  I)^{1/2}g_{D_{j'},\lambda_k}^{loc}\|_K^2]
      \\
   &\leq&
  b^2  \lambda_{k-1}^2 \sum_{j'=1}^m\frac{|D_{j'}|^2}{|D|^2}E[(\mathcal Q^*_{D_j,\lambda_k })^2\mathcal Q^2_{D_{j'},\lambda_k } (  \mathcal Q_{D_{j'},\lambda_k}^{2r-1}+\mathcal Q_{D_{j'},\mu_j }\mathcal Q_{\Xi_L,\mu_j }^*+ 2)^2 (\|h_\rho\|_\rho\lambda_k^{r}+\mathcal Q_{D_{j'},\lambda_k} \mathcal P_{D_{j'},\lambda_k})^2        ].
\end{eqnarray*}
Therefore,   
\begin{eqnarray*}
    &&E\left[ \|(L_{K,D_j}+\lambda_k I)^{1/2} g^{global}_{D,\lambda_k,\mu}\|_{K}^2
    \right]\\
      &\leq&  
     2 b^2  \lambda_{k-1}^2 \sum_{j'=1}^m\frac{|D_{j'}|^2}{|D|^2}E[(\mathcal Q^*_{D_j,\lambda_k })^2\mathcal Q^4_{D_{j'},\lambda_k } (  \mathcal Q_{D_{j'},\lambda_k}^{2r-1}+\mathcal Q_{D_{j'},\mu_j }\mathcal Q_{\Xi_L,\mu_j }^*+ 2)^2   \mathcal P_{D_{j'},\lambda_k}^2]    \\
    &+&
    3b^2\lambda^2_{k-1}\lambda_k^{2r}\|h_\rho\|_\rho^2\sum_{j'=1}^m\frac{|D_{j'}|}{|D|}
 E[(\mathcal Q^*_{D_j,\lambda_k })^2\mathcal Q^2_{D_{j'},\lambda_k } (  \mathcal Q_{D_{j'},\lambda_k}^{2r-1}+\mathcal Q_{D_{j'},\mu_j }\mathcal Q_{\Xi_L,\mu_j }^*+ 2)^2  ].
\end{eqnarray*}
This completes the proof of Proposition \ref{Prop:global-approximation}.
 $\Box$
\end{proof}
To continue the analysis, we need upper bounds of operator differences in expectation, requiring the following standard lemma.

\begin{lemma}\label{Lemma:prob-to-exp}
Let $0<\delta<1$ and $\xi\in\mathbb R_+$ be a random variable. If $\xi\leq a\log^b\frac{c}{\delta}$ holds with confidence $1-\delta$ for $\delta\geq\delta_0$   for some $a,b,c>0$ and $\delta_0>0$, then
$$
      E[\xi]\leq \delta_0+c\Gamma(b+1) a,
$$
where $\Gamma(\cdot)$ is the Gamma function.
\end{lemma}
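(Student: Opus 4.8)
The plan is to turn the given high‑confidence bound into a pointwise tail estimate for $\xi$ and then integrate it via the layer‑cake identity $E[\xi]=\int_0^{\infty}\mathbb{P}(\xi>t)\,dt$. I would first reparametrize the confidence level: for $\delta\in[\delta_0,1)$ the hypothesis reads $\mathbb{P}(\xi>a\log^b(c/\delta))\le\delta$, so setting $t=a\log^b(c/\delta)$, i.e. $\delta=c\exp(-(t/a)^{1/b})$, gives
$$
\mathbb{P}(\xi>t)\le c\exp(-(t/a)^{1/b}),\qquad 0\le t\le T_0:=a\log^b(c/\delta_0).
$$
On the portion of $[0,T_0]$ where the right‑hand side exceeds $1$ this is just the trivial bound $\mathbb{P}(\xi>t)\le 1$, so the displayed inequality is valid on all of $[0,T_0]$; for $t>T_0$ the hypothesis no longer applies, and monotonicity of $s\mapsto\mathbb{P}(\xi>s)$ together with the instance $\delta=\delta_0$ gives only $\mathbb{P}(\xi>t)\le\mathbb{P}(\xi>T_0)\le\delta_0$.

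Next I would split the layer‑cake integral at $T_0$:
$$
E[\xi]=\int_0^{T_0}\mathbb{P}(\xi>t)\,dt+\int_{T_0}^{\infty}\mathbb{P}(\xi>t)\,dt\le\int_0^{\infty}c\exp(-(t/a)^{1/b})\,dt+\int_{T_0}^{\infty}\mathbb{P}(\xi>t)\,dt.
$$
The first integral is a Gamma integral: the substitution $w=(t/a)^{1/b}$, i.e. $t=aw^{b}$ and $dt=ab\,w^{b-1}\,dw$, turns it into $abc\int_0^{\infty}w^{b-1}e^{-w}\,dw=abc\,\Gamma(b)=ac\,\Gamma(b+1)$. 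The second integral is the residual produced by the restriction $\delta\ge\delta_0$; bounding $\mathbb{P}(\xi>t)\le\delta_0$ on $[T_0,\infty)$ and using an almost‑sure bound on $\xi$ (which in the situations where the lemma is used is a constant of order one, so that the effective integration length is $O(1)$) controls it by $\delta_0$. Summing the two contributions yields $E[\xi]\le\delta_0+ac\,\Gamma(b+1)$.

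The main — and essentially the only — difficulty is the tail region $t>T_0$, where the hypothesis is silent: one has to invoke the boundedness of $\xi$ to keep $\int_{T_0}^{\infty}\mathbb{P}(\xi>t)\,dt$ of order $\delta_0$. Everything else reduces to the elementary substitution above and the identity $b\,\Gamma(b)=\Gamma(b+1)$.
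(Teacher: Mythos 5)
Your core computation is the same as the paper's: both use the layer-cake identity $E[\xi]=\int_0^\infty P(\xi>t)\,dt$, invert the confidence statement into the tail estimate $P(\xi>t)\le c\exp\{-(t/a)^{1/b}\}$ via $t=a\log^b(c/\delta)$, and evaluate the resulting integral by the substitution $w=(t/a)^{1/b}$ to get $ac\,b\,\Gamma(b)=ac\,\Gamma(b+1)$. The difference lies in where the split is made and where the additive $\delta_0$ comes from. In the paper the integral is split at $t=\delta_0$: the piece $\int_0^{\delta_0}P(\xi>t)\,dt\le\delta_0$ is handled by the trivial bound $P\le 1$, and the exponential tail bound is then applied on all of $[\delta_0,\infty)$, so the $\delta_0$ term is the price of the small-$t$ region, not of the far tail.

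You instead split at $T_0=a\log^b(c/\delta_0)$, use the exponential bound only where the hypothesis actually supplies it, and then must control $\int_{T_0}^{\infty}P(\xi>t)\,dt$. This is where your argument has a genuine gap relative to the lemma as stated: you close the tail by invoking an almost-sure bound on $\xi$ ``of order one in the situations where the lemma is used,'' but no such boundedness is among the hypotheses (only $\xi\in\mathbb R_+$ is assumed), and even granting $\xi\le B$ a.s., the bound $P(\xi>t)\le\delta_0$ on $[T_0,B]$ only gives $\delta_0\,(B-T_0)_+$, which is at most $\delta_0$ only if $B-T_0\le 1$ --- a quantitative condition you assert rather than verify. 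So as written you prove a variant of the lemma with additional hypotheses, not the lemma itself. Your observation that the stated hypothesis is silent for $t>T_0$ is fair (the paper's own proof simply applies the exponential bound on all of $[\delta_0,\infty)$ without comment), but to reproduce the stated conclusion along the paper's lines you should obtain the $\delta_0$ term from the trivial estimate on $[0,\delta_0]$ and use the exponential tail bound beyond that point, rather than importing boundedness from the intended applications.
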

\begin{proof}
Direct computation yields
\begin{eqnarray*}
     E[\Xi]=\int_{0}^\infty P[\xi>t]dt
   =\int_{0}^{\delta_0}P[\xi>t]dt
  +\int_{\delta_0}^{\infty}P[\xi>t]dt
  \leq \delta_0+\int_{\delta_0}^{\infty} c\exp\{-(t/a)^{-1/b}\}dt,
\end{eqnarray*}
where the last inequality is derived based on the fact $P[\xi<a\log^b\frac{c}{\delta}]\geq 1-\delta$ for any $\delta\geq \delta_0$.
Therefore, 
$$
    E[\Xi]\leq \delta_0+\int_{0}^{\infty} c\exp\{-(t/a)^{-1/b}\}dt\leq \delta_0+c\Gamma(b+1) a,
$$
which completes the proof of Lemma \ref{Lemma:prob-to-exp}. $\Box$
\end{proof}

Based on the above lemma and Lemma \ref{Lemma:Q-1111}, we derive the following 
 upper bounds for operator differences.

\begin{lemma}\label{Lemma:Q222-111}
Under \eqref{res.theorem} and Assumption \ref{Assumption:boundedness}, if $\Xi_L=\{\xi_\ell\}_{\ell=1}^L$ is a set of i.i.d. random variables with $
     L \geq \max_{j=1,\dots,m} |D_j|
$, $K^*$ is given  by \eqref{DefK*}, and $\mu_j  $ satisfies \eqref{Selection-mu}, then for any $k\leq K^*$, there holds
\begin{eqnarray*}
     E[(\mathcal Q^*_{D_j,\lambda_k })^2\mathcal Q^4_{D_{j'},\lambda_k } (  \mathcal Q_{D_{j'},\lambda_k}^{2r-1}+\mathcal Q_{D_{j'},\mu_j }\mathcal Q_{\Xi_L,\mu_j }^*+ 2)^2   \mathcal P_{D_{j'},\lambda_k}^2] 
     &\leq&
     (1+576\Gamma(5)(\kappa M +\gamma)^2(2^{r-1/2}+4)^2 )\mathcal W_{D_j,\lambda_k}^2,\\
     E[(\mathcal Q^*_{D_j,\lambda_k })^2\mathcal Q^2_{D_{j'},\lambda_k } (  \mathcal Q_{D_{j'},\lambda_k}^{2r-1}+\mathcal Q_{D_{j'},\mu_j }\mathcal Q_{\Xi_L,\mu_j }^*+ 2)^2  ]&\leq& 5(2\sqrt{2}+4)^2,\\
      E[\mathcal Q_{D_{j'},\lambda_{k_j^*}}^{4r}]
      &\leq& 2^{2r+1}, \\
      E[\mathcal Q_{D_{j'},\lambda_{k^*_j}}^4\mathcal P_{D_{j'},\lambda_{k^*_j}}^2]
      &\leq& 19(\kappa M+\gamma)^2\mathcal W_{D_j,\lambda_{k_j^*}}.
\end{eqnarray*}
\end{lemma}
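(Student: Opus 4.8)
The plan is to control each of the four expectations by combining the high-probability bounds of Lemma~\ref{Lemma:Q-1111} with the probability-to-expectation device of Lemma~\ref{Lemma:prob-to-exp}, after verifying that the hypotheses $\lambda_k\ge\left[\frac{16(C_1^*)^2\log^3(16|D|)}{|D|}\right]$ and \eqref{Res.delta} are in force. The key point is that $k\le K^*$ together with \eqref{DefK*} and \eqref{res.theorem} forces $\lambda_k=\frac1{kb}\ge\frac1{K^*b}\gtrsim\frac{\log^3|D|}{|D|}$ uniformly, so Lemma~\ref{Lemma:Q-1111} applies with $D$ replaced by any $D_{j'}$ (indeed $|D_{j'}|\ge\bar C_0|D|^{1/(2r+s)}\log^3|D|$ and $\lambda_k\ge \lambda_{K^*}$ make the relevant thresholds hold), and likewise with $D$ replaced by $\Xi_L$ since $L\ge\max_j|D_j|$ makes $\Xi_L$ even richer; the choice of $\mu_j$ in \eqref{Selection-mu} is exactly what makes \eqref{Res.delta} hold at scale $\mu_j$. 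First I would record, for a fixed $\delta$, the events $\mathcal Q_{D_{j'},\lambda_k}\le\sqrt2$, $\mathcal Q^*_{D_j,\lambda_k}\le\sqrt2$, $\mathcal Q_{D_{j'},\mu_j}\le\sqrt2$, $\mathcal Q^*_{\Xi_L,\mu_j}\le\sqrt2$ (each from \eqref{bound-q}, \eqref{bound-q*}), and $\mathcal P_{D_{j'},\lambda_k}\le 3(\kappa M+\gamma)\mathcal W_{D_{j'},\lambda_k}\log^2\frac8\delta$ (from \eqref{bound-p}), intersect them by a union bound, and then relate $\mathcal W_{D_{j'},\lambda_k}$ to $\mathcal W_{D_j,\lambda_k}$ using the effective-dimension comparison \eqref{Relation-effection} so that a $D_j$-indexed quantity appears on the right-hand side.

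For the first bound I would, on the intersection event, replace every $\mathcal Q$-factor by its constant ($\mathcal Q^{2r-1}\le(\sqrt2)^{2r-1}=2^{r-1/2}$ since $r\le1$, $\mathcal Q_{D_{j'},\mu_j}\mathcal Q^*_{\Xi_L,\mu_j}\le2$, so the parenthesis is at most $2^{r-1/2}+4$; $(\mathcal Q^*_{D_j,\lambda_k})^2\mathcal Q^4_{D_{j'},\lambda_k}\le 2\cdot 4=8$, but I will absorb such absolute constants), leaving $\mathcal P_{D_{j'},\lambda_k}^2\le 9(\kappa M+\gamma)^2\mathcal W_{D_{j'},\lambda_k}^2\log^4\frac8\delta$, so the random variable is at most $a\log^4\frac{8}{\delta}$ with $a$ a constant multiple of $(\kappa M+\gamma)^2(2^{r-1/2}+4)^2\mathcal W_{D_j,\lambda_k}^2$; Lemma~\ref{Lemma:prob-to-exp} with $b=4$, $c=8$, $\delta_0$ chosen polynomially small in $|D|$ (so $\delta_0\le\mathcal W_{D_j,\lambda_k}^2$, using $\mathcal W_{D_j,\lambda_k}\gtrsim|D_j|^{-1/2}$) then yields the stated $(1+576\Gamma(5)(\kappa M+\gamma)^2(2^{r-1/2}+4)^2)\mathcal W_{D_j,\lambda_k}^2$; here the "$1$" collects $\delta_0$ and the "$576$" is $64\cdot9=8\cdot8\cdot9$ matched to the explicit constant in the statement. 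The second bound is the same argument with no $\mathcal P$-factor, so the random variable is bounded by an absolute constant $(2\sqrt2+4)^2$ (note $\mathcal Q^{2r-1}\le\mathcal Q\le\sqrt2$ is used more crudely here, giving $\sqrt2\cdot\sqrt2\cdot\sqrt2+2$, i.e.\ $2\sqrt2+4$ after including the $+2$), times the constant $5$ which again absorbs $\delta_0$ via Lemma~\ref{Lemma:prob-to-exp} with $b=0$. The third bound uses only $\mathcal Q_{D_{j'},\lambda_{k_j^*}}\le\sqrt2$, hence $\mathcal Q^{4r}\le 2^{2r}$, and $\delta_0$ contributes the extra $+2^{2r}$ (roughly), giving $2^{2r+1}$; here one must be slightly careful that $k_j^*\le K^*$ so the same threshold applies. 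The fourth bound is the first argument with exponents $\mathcal Q^4\mathcal P^2$ and with one power of $\mathcal W$ instead of two, using $\mathcal W_{D_j,\lambda_{k_j^*}}\le1$ (which holds once $|D_j|$ is large, since $\lambda_{k_j^*}\ge\lambda_{K^*}\gtrsim\log^3|D|/|D|$) to downgrade $\mathcal W^2$ to $\mathcal W$; the constant $19$ is $4\cdot\bigl(\text{const}\bigr)$ after Lemma~\ref{Lemma:prob-to-exp} with $b=4$, again absorbing $\delta_0$.

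The main obstacle I anticipate is bookkeeping rather than conceptual: one must (i) check uniformly in $k\le K^*$ and in $j,j'$ that all the quantitative hypotheses of Lemma~\ref{Lemma:Q-1111} — the lower bound on $\lambda_k$ relative to $|D_{j'}|$ and on $\mu_j$ via \eqref{Res.delta} — actually hold, which is where \eqref{res.theorem}, \eqref{DefK*}, \eqref{Selection-mu} and $L\ge\max_j|D_j|$ all get used; (ii) handle the cross term $\mathcal Q^*_{D_j,\lambda_k}$ (indexed by $j$) multiplying $D_{j'}$-quantities, and convert $\mathcal W_{D_{j'},\lambda_k}$ to $\mathcal W_{D_j,\lambda_k}$ through \eqref{Relation-effection} and the fact that $\eta_{\delta/4}\le\tfrac12$ on the relevant range, so that the effective-dimension ratios are bounded by absolute constants; and (iii) pin the threshold $\delta_0$ in Lemma~\ref{Lemma:prob-to-exp} to something like $|D|^{-4}$ and verify $\delta_0$ is dominated by $\mathcal W_{D_j,\lambda_k}^2$ (resp.\ $\mathcal W_{D_j,\lambda_{k_j^*}}$, resp.\ an absolute constant) so that it can be folded into the stated constants $1$, $5$, $2^{2r+1}$, $19(\kappa M+\gamma)^2$. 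None of these steps is deep, but getting the explicit constants to match the statement requires care with the Cordes-type and union-bound losses.
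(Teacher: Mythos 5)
Your proposal follows essentially the same route as the paper: the paper's proof also takes the high-probability bounds of Lemma~\ref{Lemma:Q-1111} (with $\mathcal Q$-factors bounded by $\sqrt2$ and $\mathcal P_{D_{j'},\lambda_k}$ bounded via \eqref{bound-p}), checks via \eqref{DefK*}, \eqref{res.theorem}, \eqref{Selection-mu} and $L\geq\max_j|D_j|$ that the admissible confidence level satisfies $\delta_0\leq 1/|D|$ uniformly for $k\leq K^*$, and then converts to expectation with Lemma~\ref{Lemma:prob-to-exp}, absorbing $\delta_0$ into the additive constants exactly as you describe. The one place where you go beyond the paper is step (ii): converting $\mathcal W_{D_{j'},\lambda_k}$ into a constant multiple of $\mathcal W_{D_j,\lambda_k}$ via \eqref{Relation-effection} does not actually work in general, since \eqref{Relation-effection} only compares the empirical effective dimension of a sample with the population quantity $\mathcal N(\lambda_k)$ and leaves the $1/|D_{j'}|$ versus $1/|D_j|$ factors untouched, and the block sizes are not assumed comparable; the paper itself sidesteps this by writing the high-probability bound for the $\mathcal P_{D_{j'},\lambda_k}$-term directly in terms of $|D_j|$ and $\mathcal N_{D_j}(\lambda_k)$, i.e.\ the index change is made without the intermediate comparison you propose, so your plan is more explicit than the source on this point but the comparison step as stated would not close by itself.
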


\begin{proof} 
Due to Lemma \ref{Lemma:Q-1111}, with confidence $1-\delta$ for $\delta$ satisfying 
\begin{equation}\label{res.11334}
        \max\{\mathcal B_{D_j,\lambda_k},\mathcal B_{D_j,\mu_j},\mathcal B_{\Xi_L,\mu_j}\}\log\frac{16}{\delta}\leq \frac14 , 
\end{equation}
there holds 
\begin{eqnarray*}
   && (\mathcal Q^*_{D_j,\lambda_k })^2\mathcal Q^4_{D_{j'},\lambda_k } (  \mathcal Q_{D_{j'},\lambda_k}^{2r-1}+\mathcal Q_{D_{j'},\mu_j }\mathcal Q_{\Xi_L,\mu_j }^*+ 2)^2   \mathcal P_{D_{j'},\lambda_k}^2\\
    &\leq&
      72(\kappa M +\gamma)^2(2^{r-1/2}+4)^2 \left(\frac{1}{|D_j|\sqrt{ \lambda_k}}+\left( 1+ \frac{1}{\sqrt{\lambda_k|D_j|}}\right)
       \sqrt{\frac{\max\{\mathcal{N}_{D_j}(\lambda_k),1}{|D_j|}}\right)^2\log^4\frac{8}\delta,
\end{eqnarray*}
Since  \eqref{res.11334} implies
$$
     \delta\geq 16 \exp\left\{-4\max\{\mathcal B_{D_j,\lambda_k},\mathcal B_{D_j,\mu_j},\mathcal B_{\Xi_L,\mu_j}\}\right\}
$$
while  \eqref{DefK*} together with $\lambda_k=\frac1{kb}$ and \eqref{res.theorem}, \eqref{Selection-mu} and $L\geq\max_{j=1,\dots,m}\{|D_j|\}$ yields 
$$
   16 \exp\left\{-4\max\{\mathcal B_{D_j,\lambda_k},\mathcal B_{D_j,\mu_j},\mathcal B_{\Xi_L,\mu_j}\}\right\}\leq 1/|D|,\qquad,\forall k=1,\dots,K^*,j=1,\dots,m.
$$
We then have from Lemma \ref{Lemma:prob-to-exp} and \eqref{Def.WD}  that
\begin{eqnarray*}
    && E[(\mathcal Q^*_{D_j,\lambda_k })^2\mathcal Q^4_{D_{j'},\lambda_k } (  \mathcal Q_{D_{j'},\lambda_k}^{2r-1}+\mathcal Q_{D_{j'},\mu_j }\mathcal Q_{\Xi_L,\mu_j }^*+ 2)^2   \mathcal P_{D_{j'},\lambda_k}^2]\\  
    &\leq&
    216\exp\{-4\mathcal B_{D,\lambda}\}+576\Gamma(5)(\kappa M +\gamma)^2(2^{r-1/2}+4)^2 \mathcal W_{D_j,\lambda_k}^2
    \leq 
   (1+576\Gamma(5)(\kappa M +\gamma)^2(2^{r-1/2}+4)^2 )\mathcal W_{D_j,\lambda_k}^2.
\end{eqnarray*}
Furthermore, Lemma \ref{Lemma:Q-1111} also shows
$$
   (\mathcal Q^*_{D_j,\lambda_k })^2\mathcal Q^2_{D_{j'},\lambda_k } (  \mathcal Q_{D_{j'},\lambda_k}^{2r-1}+\mathcal Q_{D_{j'},\mu_j }\mathcal Q_{\Xi_L,\mu_j }^*+ 2)^2 
   \leq  4(2\sqrt{2}+4)^2\log^v\frac{1}{\delta}
$$
for any $v>0$. 
Then the same method as above derives 
$$ 
E[(\mathcal Q^*_{D_j,\lambda_k })^2\mathcal Q^2_{D_{j'},\lambda_k } (  \mathcal Q_{D_{j'},\lambda_k}^{2r-1}+\mathcal Q_{D_{j'},\mu_j }\mathcal Q_{\Xi_L,\mu_j }^*+ 2)^2  ]\leq 5(2\sqrt{2}+4)^2 
$$  
directly. The proof the remaining two terms are almost the same as above. We remove them for the sake of brevity.
This completes the 
proof of Lemma \ref{Lemma:Q222-111}. $\Box$ 
\end{proof}

Our third step, showing in the following proposition,  derives an upper bound for averaged effective dimension by using the above proposition and the
definition of $k^*_j$.

\begin{proposition}\label{Proposition:bound on effect}
Let $k^*$ be given in \eqref{stopping-1}. Under Assumption \ref{Assumption:boundedness} and Assumption \ref{Assumption:regularity}  with $1/2\leq r\leq 1$,  if $\Xi_L=\{\xi_\ell\}_{\ell=1}^L$ is a set of i.i.d. random variables with $
     L \geq \max_{j=1,\dots,m} |D_j|
$, \eqref{res.m} holds and $\mu_j $ satisfies \eqref{Selection-mu},
then 
\begin{equation}\label{Bound.w}
    \overline{\mathcal W}_{D,\lambda_{k^*_j}} \leq
    \tilde{C}_1 \lambda_{k^*_j}^{2r},
\end{equation}
where  $\tilde{C}_1:=30(2\sqrt{2}+4)^2 b^2\|h_\rho\|_\rho^2/C_{LP}.$
\end{proposition}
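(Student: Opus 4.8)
The plan is to read the claimed bound off from the fact that the stopping rule \eqref{stopping-1} is \emph{active} at the chosen index $k^*_j$, to control the resulting norm $\|(L_{K,D_j}+\lambda_{k^*_j} I)^{1/2}g^{global}_{D,\lambda_{k^*_j},\mu}\|_K^2$ by Proposition \ref{Prop:global-approximation} together with Lemma \ref{Lemma:Q222-111}, and then to resolve the self-referential inequality that this produces. For an under-estimated agent the index $k^*_j$ genuinely triggers \eqref{stopping-1}, so $C_{LP}\lambda_{k^*_j-1}^2\overline{\mathcal W}_{D,\lambda_{k^*_j}}\le \|(L_{K,D_j}+\lambda_{k^*_j} I)^{1/2}g^{global}_{D,\lambda_{k^*_j},\mu}\|_K^2$, and dividing by $C_{LP}\lambda_{k^*_j-1}^2$ reduces the statement to an upper bound for the right-hand side of this inequality.

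Next I would apply Proposition \ref{Prop:global-approximation} at $k=k^*_j$ and substitute the moment estimates of Lemma \ref{Lemma:Q222-111}, using $\sum_{j'}\frac{|D_{j'}|^2}{|D|^2}\mathcal W_{D_{j'},\lambda_k}^2=\overline{\mathcal W}_{D,\lambda_k}$ and $\sum_{j'}\frac{|D_{j'}|}{|D|}=1$. Working on the event $\mathcal E$ on which the concentration bounds of Lemma \ref{Lemma:Q-1111} underlying these estimates hold simultaneously (of probability at least $1-1/|D|$, as in the proof of Lemma \ref{Lemma:Q222-111}), this yields
$$
  \|(L_{K,D_j}+\lambda_{k^*_j} I)^{1/2}g^{global}_{D,\lambda_{k^*_j},\mu}\|_K^2
  \le 2b^2 C'\,\lambda_{k^*_j-1}^2\,\overline{\mathcal W}_{D,\lambda_{k^*_j}}
  +15b^2(2\sqrt{2}+4)^2\|h_\rho\|_\rho^2\,\lambda_{k^*_j-1}^2\lambda_{k^*_j}^{2r},
$$
with $C'=1+576\Gamma(5)(\kappa M+\gamma)^2(2^{r-1/2}+4)^2$. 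Combining this with the stopping-rule inequality and cancelling the common factor $\lambda_{k^*_j-1}^2$ gives $(1-2b^2C'/C_{LP})\overline{\mathcal W}_{D,\lambda_{k^*_j}}\le (15b^2(2\sqrt{2}+4)^2\|h_\rho\|_\rho^2/C_{LP})\lambda_{k^*_j}^{2r}$. Since $r\le1$ forces $2^{r-1/2}\le\sqrt{2}$, the definition $C_{LP}=4b^2(1+576\Gamma(5)(\kappa M+\gamma)^2(\sqrt{2}+4)^2)$ gives $2b^2C'\le C_{LP}/2$, so the coefficient on the left is at least $1/2$; dividing by it produces exactly $\overline{\mathcal W}_{D,\lambda_{k^*_j}}\le\tilde C_1\lambda_{k^*_j}^{2r}$ with $\tilde C_1=30(2\sqrt{2}+4)^2 b^2\|h_\rho\|_\rho^2/C_{LP}$.

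The main obstacle I expect is precisely this self-referential structure: $\overline{\mathcal W}_{D,\lambda_{k^*_j}}$ is simultaneously the quantity to be bounded, the threshold in the stopping rule, and—via the noise terms $\mathcal P_{D_{j'},\lambda_k}$ inside Proposition \ref{Prop:global-approximation}—the dominant contribution to the upper bound for $\|g^{global}\|_K^2$, so the recursion only closes because $C_{LP}$ has been fixed larger than twice the coefficient of this recurrent term, and this in turn hinges on the restriction $r\le1$ through $2^{r-1/2}\le\sqrt2$. Two subsidiary points also need care: Proposition \ref{Prop:global-approximation} is stated in expectation whereas the stopping rule is pathwise, which is reconciled by running the whole argument on the good event $\mathcal E$ (on which the pathwise analogues of the relevant concentration bounds are available and the complement has probability at most $1/|D|$); and the degenerate case in which \eqref{stopping-1} is never satisfied and $k^*_j=K^*$ is not handled by the displayed reasoning, and is excluded here because it does not occur for under-estimated agents.
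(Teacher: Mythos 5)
Your proposal follows essentially the same route as the paper's own proof: invoke the activation of \eqref{stopping-1} at $k^*_j$, bound $\|(L_{K,D_j}+\lambda_{k^*_j}I)^{1/2}g^{global}_{D,\lambda_{k^*_j},\mu}\|_K^2$ via Proposition \ref{Prop:global-approximation} combined with Lemma \ref{Lemma:Q222-111}, and close the self-referential inequality through the choice of $C_{LP}$ (using $2^{r-1/2}\le\sqrt2$ so that the recurrent term is absorbed with coefficient $C_{LP}/2$), arriving at exactly $\tilde C_1=30(2\sqrt2+4)^2b^2\|h_\rho\|_\rho^2/C_{LP}$. The only deviation is bookkeeping: you run the argument pathwise on a good event, whereas the paper simply takes expectations of the stopping inequality (treating the data-dependent index $k^*_j$ as if fixed) and plugs in the expectation bounds; neither version fully resolves the randomness of $k^*_j$, and your version is at least as defensible as the paper's.

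One small correction to your closing remark: your reason for dismissing the degenerate case is backwards. If no $k$ triggers \eqref{stopping-1}, then $k^*_j=K^*$, and since \eqref{res.theorem} forces $K^*\gtrsim|D|^{1/(2r+s)}\geq k_0$, one has $\lambda_{K^*}\leq\lambda_0$, so this default case lands precisely among the under-estimated agents rather than being excluded by them; in that case the activation inequality is unavailable and \eqref{Bound.w} does not follow from the displayed argument. The paper's proof shares this gap (it silently assumes \eqref{stopping-1} holds at $k^*_j$), so it does not distinguish your argument from the paper's, but the justification you give for setting the case aside is not the right one.
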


\begin{proof}
Due to \eqref{stopping-1}, there holds
$$
  \|(L_{K,D_j}+\lambda_{k_j^*} I)^{1/2}g^{global}_{D,\lambda_{k^*_j},\mu }\|_K^2 
    \geq
    C_{LP}  \lambda_{k^*_j-1}^2\overline{\mathcal W}_{D,\lambda_{k^*_j}}.
$$
Taking expectation on both side, we obtain
$$
     C_{LP}  \lambda_{k^*_j-1}^2\overline{\mathcal W}_{D,\lambda_{k^*_j}}
     \leq 
     E[\|(L_{K,D_j}+\lambda_{k_j^*} I)^{1/2}g^{global}_{D,\lambda_{k^*_j},\mu }\|_K^2].
$$
But Proposition \ref{Prop:global-approximation} together with Lemma \ref{Lemma:Q222-111} yields
\begin{eqnarray*}
    E\left[ \|(L_{K,D_j}+\lambda_{k_j^*} I)^{1/2} g^{global}_{D,\lambda_{k^*_j},\mu }\|_{K}^2
    \right]
     &\leq& 
     2 b^2 (1+576\Gamma(5)(\kappa M +\gamma)^2(\sqrt{2}+4)^2 ) \lambda_{k_{j}^*-1}^2 \sum_{j'=1}^m\frac{|D_{j'}|^2}{|D|^2}
     \mathcal W_{D_j,\lambda_{k_j^*}}^2 \\
   & +&
   15(2\sqrt{2}+4)^2 b^2\lambda^2_{k_j^*-1}\lambda_{k_j^*}^{2r}\|h_\rho\|_\rho^2.
\end{eqnarray*}
Hence, the definition of $C_{LP}$ yileds
$$
   C_{LP}  \lambda_{k^*_j-1}^2\overline{\mathcal W}_{D,\lambda_{k^*_j}}
   \leq 
   15(2\sqrt{2}+4)^2 b^2\lambda^2_{k_j^*-1}\lambda_{k_j^*}^{2r}\|h_\rho\|_\rho^2
      +
      \frac{C_{LP}}{2}
      \lambda_{k^*_j-1}^2\overline{\mathcal W}_{D,\lambda_{k^*_j}}.
$$
This means
$$
      \overline{\mathcal W}_{D,\lambda_{k^*_j}}
      \leq 
    30(2\sqrt{2}+4)^2 b^2\|h_\rho\|_\rho^2/C_{LP}
       \lambda_{k^*_j}^{2r}.
$$
and completes the proof of Proposition \ref{Proposition:bound on effect} with $\tilde{C}_1:=30(2\sqrt{2}+4)^2 b^2\|h_\rho\|_\rho^2/C_{LP}$.
 $\Box$   
\end{proof}

Based on the above proposition, we present  the following crucial result  for generalization error analysis.
\begin{proposition}\label{Prop:small-lambda}
Let $1\leq j\leq m$ and $\lambda_0=\lambda_{k_0}$ for $k_0=[|D|^{1/(2r+s)}]$.  
Under Assumption \ref{Assumption:boundedness}, Assumption \ref{Assumption:regularity}   with $\frac12\leq r\leq1$ and Assumption \ref{Assumption:effective dimension} with $0\leq s\leq 1$, if   $\Xi_L=\{\xi_\ell\}_{\ell=1}^L$ is a set of i.i.d. random variables with $
     L \geq \max_{j=1,\dots,m} |D_j|,
$
$\lambda_{k_j^*}\leq\lambda_0$, \eqref{res.m} holds, and $\mu_j $ satisfies \eqref{Selection-mu}, then
\begin{equation}\label{small-rho-norm}
     \max\{E[\|(\overline{f}_j-f_\rho)\|^2_\rho],E[\|(\overline{f}_j^\diamond-f_\rho)\|^2_\rho]\}
  \leq 
  \tilde{C}_2|D|^{-\frac{2r}{2r+s}}, 
\end{equation}
and
\begin{equation}\label{small-K-norm}
     \max\{E[\|(\overline{f}_j-f_\rho)\|^2_K],E[\|(\overline{f}_j^\diamond-f_\rho)\|^2_K]\} 
  \leq 
  \tilde{C}_2|D|^{-\frac{2r}{2r+s}},
\end{equation}
where
$\tilde{C}_2:=(16\|h\|_\rho^2+19(\kappa M+\gamma)^2\tilde{C}_1)$
and  
$ 
 \overline{f}_j^\diamond:= \sum_{j'=1}^m \frac{|D_{j'}|}{|D|}
      f^\diamond_{D_{j'},\lambda_j^*}.
$
\end{proposition}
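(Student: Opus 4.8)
The plan is to bound, for each fixed $j$ with $\lambda_{k_j^*}\le\lambda_0$, the global estimate $\overline f_j=\sum_{j'}\frac{|D_{j'}|}{|D|}f_{D_{j'},\lambda_j^*}$ and its noise-free counterpart $\overline f_j^\diamond=\sum_{j'}\frac{|D_{j'}|}{|D|}f^\diamond_{D_{j'},\lambda_j^*}$ against $f_\rho$ in both the $\|\cdot\|_\rho$ and $\|\cdot\|_K$ norms, using the classical error decomposition (Lemma~\ref{Lemma:classical-error-dec} / Lemma~\ref{Lemma:power-synthesization}). First I would split, following Lemma~\ref{Lemma:power-synthesization} with $g_{j'}=f_{D_{j'},\lambda_j^*}-f_\rho$,
\begin{equation*}
  E\big[\|\overline f_j-f_\rho\|_*^2\big]\le
  \sum_{j'=1}^m\frac{|D_{j'}|^2}{|D|^2}E\big[\|f_{D_{j'},\lambda_j^*}-f_\rho\|_*^2\big]
  +\Big\|\sum_{j'=1}^m\tfrac{|D_{j'}|}{|D|}E_{D_{j'}}[f_{D_{j'},\lambda_j^*}-f_\rho]\Big\|_*^2 ,
\end{equation*}
and note the second term is $\le E[\|\overline f_j^\diamond-f_\rho\|_*^2]$ by Jensen (since $E_{D_{j'}}[f_{D_{j'},\lambda}|\xx]=f^\diamond_{D_{j'},\lambda}$), so it suffices to control the noise-free global term and the per-agent sample term. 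For each summand I would pass to the operator norm $\|(L_K+\lambda I)^{1/2}(\cdot)\|_K$, which dominates both $\|\cdot\|_\rho$ (taking the $L_K^{1/2}$ part) and $\lambda^{1/2}\|\cdot\|_K$, then split $f_{D_{j'},\lambda_j^*}-f_\rho=(f_{D_{j'},\lambda_j^*}-f^\diamond_{D_{j'},\lambda_j^*})+(f^\diamond_{D_{j'},\lambda_j^*}-f_\rho)$ and invoke \eqref{bound.approximation-error}--\eqref{bound.sample-error} of Lemma~\ref{Lemma:difference-krr}, converting $L_{K,D_{j'}}$ to $L_K$ with factors $\mathcal Q_{D_{j'},\lambda}$ via Lemma~\ref{Lemma:cordes-11}. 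This yields, schematically,
\begin{equation*}
  \|(L_K+\lambda_j^* I)^{1/2}(f_{D_{j'},\lambda_j^*}-f_\rho)\|_K
  \ \lesssim\ \mathcal Q_{D_{j'},\lambda_j^*}^{2r}(\lambda_j^*)^r\|h_\rho\|_\rho
  +\mathcal Q_{D_{j'},\lambda_j^*}^2\mathcal P_{D_{j'},\lambda_j^*},
\end{equation*}
so the $\rho$-norm error of $\overline f_j$ and $\overline f_j^\diamond$ is controlled by $(\lambda_j^*)^{2r}\|h_\rho\|_\rho^2$ plus $\sum_{j'}\frac{|D_{j'}|^2}{|D|^2}E[\mathcal Q_{D_{j'},\lambda_j^*}^4\mathcal P_{D_{j'},\lambda_j^*}^2]$, and similarly for the $K$-norm after dividing by $\lambda_j^*$.

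Next I would take expectations and feed in Lemma~\ref{Lemma:Q222-111}: the bounds $E[\mathcal Q_{D_{j'},\lambda_{k_j^*}}^{4r}]\le 2^{2r+1}$ and $E[\mathcal Q_{D_{j'},\lambda_{k_j^*}}^4\mathcal P_{D_{j'},\lambda_{k_j^*}}^2]\le 19(\kappa M+\gamma)^2\mathcal W_{D_j,\lambda_{k_j^*}}$ turn the right-hand side into a constant times $(\lambda_{k_j^*})^{2r}\|h_\rho\|_\rho^2$ plus $19(\kappa M+\gamma)^2\overline{\mathcal W}_{D,\lambda_{k_j^*}}$, where I recall $\overline{\mathcal W}_{D,\lambda_k}=\sum_{j'}\frac{|D_{j'}|^2}{|D|^2}\mathcal W_{D_{j'},\lambda_k}^2$. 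Here Proposition~\ref{Proposition:bound on effect} supplies the key estimate $\overline{\mathcal W}_{D,\lambda_{k_j^*}}\le\tilde C_1\lambda_{k_j^*}^{2r}$, collapsing everything to $C\,\lambda_{k_j^*}^{2r}$ with $C$ essentially $\tilde C_2=16\|h\|_\rho^2+19(\kappa M+\gamma)^2\tilde C_1$. Finally, the hypothesis $\lambda_{k_j^*}\le\lambda_0=\lambda_{k_0}$ with $k_0=[|D|^{1/(2r+s)}]$, i.e. $\lambda_{k_j^*}\le\frac{1}{k_0 b}\le |D|^{-1/(2r+s)}$, gives $\lambda_{k_j^*}^{2r}\le|D|^{-2r/(2r+s)}$, which is exactly the claimed rate for \eqref{small-rho-norm}; for \eqref{small-K-norm} the extra $1/\lambda_{k_j^*}$ from the $K$-norm part is absorbed because the noise-free contribution carries $(\lambda_j^*)^{2r-1}$ and $2r-1\ge 0$, and — using $\lambda_{k_j^*}\le\lambda_0$ together with the effective-dimension Assumption~\ref{Assumption:effective dimension} to keep $\mathcal W$-type terms in check — one again arrives at $|D|^{-2r/(2r+s)}$ (indeed a slightly better exponent $-(2r-... )$, but $-2r/(2r+s)$ suffices).

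The main obstacle I anticipate is not any single estimate but the bookkeeping of the two-level weighted average together with the conditioning argument: one must be careful that $E_{D_{j'}}[f_{D_{j'},\lambda_j^*}\mid\xx]=f^\diamond_{D_{j'},\lambda_j^*}$ legitimately identifies the cross-term in Lemma~\ref{Lemma:power-synthesization} with the noise-free global estimate, and that the random parameter $\lambda_j^*$ (which depends on all of $D$ through the Lepskii stopping rule \eqref{stopping-1}) does not break the independence used in Lemma~\ref{Lemma:Q222-111}; this is handled precisely because the bounds there hold uniformly over $k\le K^*$, so one can bound $E[\cdot]$ by $\max_k E[\cdot\text{ at }\lambda_k]$ before invoking the $\lambda_{k_j^*}\le\lambda_0$ restriction. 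The secondary technical point is verifying that converting $L_{K,D_{j'}}$-norms to $L_K$-norms across two \emph{different} agents $j$ and $j'$ only costs the product $\mathcal Q^*_{D_j,\lambda}\mathcal Q_{D_{j'},\lambda}$ of well-controlled quantities, which is already the device used in Proposition~\ref{Prop:global-approximation}; the rest is routine substitution and the arithmetic $\lambda_{k_j^*}\le |D|^{-1/(2r+s)}$.
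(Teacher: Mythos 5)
Your proposal follows essentially the same route as the paper's proof: decompose $\overline f_j-f_\rho$ into the noise-free part and the sample-error part via Lemma \ref{Lemma:power-synthesization} (i.e.\ the structure of Lemma \ref{Lemma:classical-error-dec}), control the two parts with \eqref{bound.approximation-error}--\eqref{bound.sample-error}, pass to expectations using Lemma \ref{Lemma:Q222-111}, absorb the $\mathcal W$-terms through Proposition \ref{Proposition:bound on effect}, and conclude with $\lambda_{k_j^*}\le\lambda_0$. The only caveat, which your write-up shares with the paper's own proof, is the $\|\cdot\|_K$ step: what the argument actually yields is $\tilde C_2\lambda_{k_j^*}^{2r-1}\le\tilde C_2|D|^{-(2r-1)/(2r+s)}$, so the exponent $-\frac{2r}{2r+s}$ in \eqref{small-K-norm} is not a consequence of $\lambda_{k_j^*}\le\lambda_0$ alone (though the weaker exponent is all that is needed when the proposition is fed into the proof of Theorem \ref{Theorem:Optimal-Rate-adaptive}).
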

  
\begin{proof}
It follows from Jensen's inequality, \eqref{bound.approximation-error} and  Lemma \ref{Lemma:Q222-111} that
$$
         E[\|(L_K+\lambda_{k_j^*} I)^{1/2}(\overline{f}_j^\diamond-f_\rho)\|_K^2]
         \leq
         \sum_{j'=1}^{m}\frac{|D_{j'}|}{|D|}E[\mathcal Q_{D_{j'},\lambda_{k_j^*}}^{4r}]\lambda_{k^*_j}^{2r}\|h_\rho\|^2_\rho
         \leq
         8\lambda_{k^*_j}^{2r}\|h_\rho\|^2_\rho.
$$
Moreover, we obtain from \eqref{bound.sample-error}, Lemma \ref{Lemma:Q222-111} and Proposition \ref{Proposition:bound on effect} that for any $j=1,\dots,m$, 
\begin{eqnarray*}
     &&\sum_{j'=1}^m\frac{|D_{j'}|^2}{|D|^2} E[\|(L_K+\lambda_{k_j^*} I)(f_{D_{j'},\lambda_{j}^*}-f_\rho)\|_K^2]  
    \leq
   \sum_{j'=1}^m\frac{|D_{j'}|^2}{|D|^2}E[\mathcal Q_{D_{j'},\lambda_{k^*_j}}^4\mathcal P_{D_{j'},\lambda_{k^*_j}}^2]\\
   &\leq&
   19(\kappa M+\gamma)^2\sum_{j'=1}^m\frac{|D_{j'}|^2}{|D|^2}\mathcal W_{D_{j'},\lambda_{k_j^*}}
   =19(\kappa M+\gamma)^2\overline{\mathcal W}_{D,\lambda_{k_j^*}}\\
   &\leq &
   19(\kappa M+\gamma)^2\tilde{C}_1\lambda_{k_j^*}^{2r}.
\end{eqnarray*}
Hence,
  Lemma \ref{Lemma:power-synthesization}   with $g_{j'}:=(L_K+\lambda_{k_j^*} I)(f_{D_{j'},\lambda_j^*}-f_\rho)$  shows
\begin{eqnarray*}
   &&E[\|(L_K+\lambda_{k_j^*} I)(\overline{f}_j-f_\rho)\|^2_K]\\
   &\leq&
   2\sum_{j'=1}^{m}\frac{|D_{j'}|}{|D|}E[\|(L_K+\lambda_{k_j^*} I)(f_{D_{j'},\lambda_{j}^*}^\diamond-f_\rho)\|_K^2]
   +
   \sum_{j'=1}^m\frac{|D_{j'}|^2}{|D|^2} E[\|(L_K+\lambda_{k_j^*} I)(f_{D_{j'},\lambda_{j}^*}-f^\diamond_{D_{j'},\lambda_{j}^*})\|_K^2] \\
   &\leq&
     (16\|h\|_\rho^2+19(\kappa M+\gamma)^2\tilde{C}_1)\lambda_{k^*_j}^{2r}. 
\end{eqnarray*}
Noting further $\lambda_{k^*_j}\leq\lambda_0$, we have
$$
  \max\{E[\|(\overline{f}_j-f_\rho)\|^2_\rho],E[\|(\overline{f}_j^\diamond-f_\rho)\|^2_\rho]\}
  \leq
 \tilde{C}_2\lambda_{k^*_j}^{2r} 
  \leq 
  \tilde{C}_2|D|^{-\frac{2r}{2r+s}},
$$
while 
\begin{eqnarray*}
    \max\{E[\|(\overline{f}_j-f_\rho)\|^2_K],E[\|(\overline{f}_j^\diamond-f_\rho)\|^2_K]\} 
  \leq 
  \tilde{C}_2\lambda_{k^*_j}^{2r-1} 
 \leq  
  \tilde{C}_2|D|^{-\frac{2r}{2r+s}},   
\end{eqnarray*}
where $\tilde{C}_2:=(16\|h\|_\rho^2+19(\kappa M+\gamma)^2\tilde{C}_1)$.
 This completes the proof of Proposition \ref{Prop:small-lambda}.    $\Box$
\end{proof}

\subsection{Generalization error analysis on over-estimated agents}
This parts devotes to generalization error analysis concerning $\overline{f}_j$ for $j$ satisfying $\lambda_{k_j^*}>\lambda_0$. The analysis is divided into three steps: approximation performance of the local approximation,   bounds concerning $g_{D_j,\lambda_j}$ and generalization error analysis. Similar as above, each step is described in a proposition, and the first one is as follows.

\begin{proposition}\label{Prop:Error-local-KRR}
If Assumption \ref{Assumption:regularity} holds with $\frac12\leq r\leq 1$, then
\begin{eqnarray*} 
    &&\|(L_K+\mu_j  I)^{1/2}( g^{loc}_{D_j,\lambda_k,\mu_j }-g_{D_j,\lambda_k})\|_K
    \leq
    b\|h_\rho\|_\rho\lambda_{k-1}\mu_j ^{r}\left( \mathcal Q_{D_j,\mu_j }^{2r}+  \mathcal Q^*_{D_j,\mu_j }\mathcal Q_{\Xi_L,\mu_j }(1+\mathcal Q_{D_j,\mu_j }\mathcal Q^*_{D_j,\mu_j })\mathcal Q_{D_j,\lambda_k}^{2r-1}    \right) \nonumber\\
    &+&
      b\mu_j ^{1/2}\lambda_k^{1/2}\mathcal P_{D_j,\lambda_k}\left(\mathcal Q_{D_j,\mu_j } \mathcal Q_{D,\lambda_k}+\mathcal Q^*_{D_j,\mu_j }\mathcal Q_{\Xi_L,\mu_j }(1+\mathcal Q_{D_j,\mu_j }\mathcal Q^*_{D_j,\mu_j })\mathcal Q_{D_j,\lambda_k}\right),
\end{eqnarray*}
and
\begin{eqnarray*} 
    \|(L_K+\mu_j  I)^{1/2}( g^{loc,\diamond}_{D_j,\lambda_k,\mu_j }-g_{D_j,\lambda_k}^\diamond)\|_K
    \leq 
  b\|h_\rho\|_\rho\lambda_{k-1}\mu_j ^{r}\left( \mathcal Q_{D_j,\mu_j }^{2r}+  \mathcal Q^*_{D_j,\mu_j }\mathcal Q_{\Xi_L,\mu_j }(1+\mathcal Q_{D_j,\mu_j }\mathcal Q^*_{D_j,\mu_j })\mathcal Q_{D_j,\lambda_k}^{2r-1}    \right).
\end{eqnarray*}
\end{proposition}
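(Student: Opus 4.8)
The two inequalities both estimate a Nyström projection error --- for $g_{D_j,\lambda_k}$ in the first case and for its noise-free version $g^\diamond_{D_j,\lambda_k}$ in the second --- so the plan is to start from the operator representation \eqref{Nystrom operator}, which gives $g^{loc}_{D_j,\lambda_k,\mu_j}-g_{D_j,\lambda_k}=\bigl(h_{\Xi_L,\mu_j,j}(L_{K,D_j})L_{K,D_j}-I\bigr)g_{D_j,\lambda_k}$ and the analogue with $g^\diamond_{D_j,\lambda_k}$. Writing $L_{K,D_j}=(L_{K,D_j}+\mu_j I)-\mu_j I$ and combining the identity $h_{\Xi_L,\mu_j,j}(L_{K,D_j})P_{\Xi_L,j}=h_{\Xi_L,\mu_j,j}(L_{K,D_j})$ with \eqref{spectral-important-1} for $B=I$, one obtains the operator identity
$$
  h_{\Xi_L,\mu_j,j}(L_{K,D_j})L_{K,D_j}-I=-(I-P_{\Xi_L,j})-\mu_j h_{\Xi_L,\mu_j,j}(L_{K,D_j})+h_{\Xi_L,\mu_j,j}(L_{K,D_j})(L_{K,D_j}+\mu_j I)(I-P_{\Xi_L,j}),
$$
which splits the error into a Nyström-regularization term, a projection term, and a mixed term. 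For the noise-free bound I would then substitute the closed form $g^\diamond_{D_j,\lambda_k}=(\lambda_{k-1}-\lambda_k)(L_{K,D_j}+\lambda_{k-1}I)^{-1}(L_{K,D_j}+\lambda_kI)^{-1}L_{K,D_j}f_\rho$ obtained in the proof of Proposition \ref{Proposition:noise-free-differences-123} together with $f_\rho=L_K^rh_\rho$ (Assumption \ref{Assumption:regularity}); for the noisy bound I would additionally write $g_{D_j,\lambda_k}=g^\diamond_{D_j,\lambda_k}+(\lambda_{k-1}-\lambda_k)(L_{K,D_j}+\lambda_kI)^{-1}(L_{K,D_j}+\lambda_{k-1}I)^{-1}(S^T_{D_j}y_{D_j}-L_{K,D_j}f_\rho)$, the last summand being exactly what produces the $\mathcal P_{D_j,\lambda_k}$-terms in the first inequality.

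Next I would bound $\|(L_K+\mu_j I)^{1/2}(\cdot)\|_K$ of the three pieces separately. For the Nyström-regularization term I would sandwich $h_{\Xi_L,\mu_j,j}(L_{K,D_j})$ between $(L_{K,D_j}+\mu_j I)^{\pm1/2}$ (absorbing it via \eqref{spectral-important-2} with $u=v=1/2$), transport $(L_K+\mu_j I)^{1/2}$ to $(L_{K,D_j}+\mu_j I)^{1/2}$ and the regularity power $L_K^{r-1/2}$ to $(L_{K,D_j}+\mu_j I)^{r-1/2}$ using \eqref{Def.QD} and the Cordes inequality (Lemma \ref{Lemma:cordes-11}), which brings out the factor $\mathcal Q_{D_j,\mu_j}^{2r}$; the leftover scalar function of $L_{K,D_j}$ is estimated using $\mu_j\le\lambda_k\le\lambda_{k-1}$ and, after the prefactor $\mu_j(\lambda_{k-1}-\lambda_k)$, gives the $b\|h_\rho\|_\rho\lambda_{k-1}\mu_j^r\mathcal Q_{D_j,\mu_j}^{2r}$ contribution (and, in the noisy case, the $\mathcal Q_{D_j,\mu_j}\mathcal Q_{D_j,\lambda_k}\mathcal P_{D_j,\lambda_k}$ one). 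For the projection and mixed terms I would trade $(I-P_{\Xi_L,j})$ for $\mu_j^{1/2}(L_{K,\Xi_L}+\mu_j I)^{-1/2}$ via Lemma \ref{Lemma:Projection general} with $Z=T_{\Xi_L,j}$ (so $Z^TZ=L_{K,\Xi_L}$ and $\mathrm{range}\,P_{\Xi_L,j}=\overline{\mathrm{range}\,Z^T}$), pass between $(L_K+\mu_j I)^{1/2}$, $(L_{K,D_j}+\mu_j I)^{1/2}$ and $(L_{K,\Xi_L}+\mu_j I)^{1/2}$ to bring out $\mathcal Q_{\Xi_L,\mu_j}$ and $\mathcal Q^*_{D_j,\mu_j}$, estimate $h_{\Xi_L,\mu_j,j}(L_{K,D_j})(L_{K,D_j}+\mu_j I)$ as in the proof of \eqref{spectral-important-3} --- this is where the compound factor $\mathcal Q^*_{D_j,\mu_j}\mathcal Q_{\Xi_L,\mu_j}(1+\mathcal Q_{D_j,\mu_j}\mathcal Q^*_{D_j,\mu_j})$ arises --- and handle the $L_K^{r-1/2}$ power as above to produce $\mathcal Q_{D_j,\lambda_k}^{2r-1}$.

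Finally I would collect the three contributions, use the elementary identity $\lambda_{k-1}-\lambda_k=b\lambda_{k-1}\lambda_k$ (from $\lambda_k=1/(kb)$) and the fact that the choices \eqref{Selection-mu}, \eqref{DefK*} force $\mu_j\le\lambda_k$ for every $k\le K^*$, to line up the powers of $\mu_j$ and $\lambda_k$ in the stated form; the noise-free inequality then follows simply by dropping the $\mathcal P_{D_j,\lambda_k}$-summand (equivalently, by starting directly from $g^\diamond_{D_j,\lambda_k}$). I expect the main obstacle to be the bookkeeping in the projection and mixed terms: one has to repeatedly sandwich the error with square roots of the three regularized operators $L_K+\lambda I$, $L_{K,D_j}+\lambda I$, $L_{K,\Xi_L}+\lambda I$ at the two levels $\lambda\in\{\mu_j,\lambda_k\}$ and apply Cordes to move the regularity power, while making sure that exactly the capacity quantities $\mathcal Q_{D_j,\mu_j}$, $\mathcal Q^*_{D_j,\mu_j}$, $\mathcal Q_{\Xi_L,\mu_j}$, $\mathcal Q_{D_j,\lambda_k}$ survive and no leftover negative power of $\mu_j$ remains; it is precisely the comparison $\mu_j\le\lambda_k$ that keeps every scalar spectral factor bounded and collapses the $\mu_j$-powers to $\mu_j^r$.
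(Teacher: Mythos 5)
Your proposal is correct and follows essentially the same route as the paper: your operator identity is exactly the paper's split of the error into the part on $P_{\Xi_L,j}$ (handled via \eqref{spectral-important-1} with $B=I$ and \eqref{spectral-important-2}) and the part on $I-P_{\Xi_L,j}$ (handled via Lemma \ref{Lemma:Projection general} and the \eqref{spectral-important-3}-type estimate), and your plan of substituting the closed forms of $g_{D_j,\lambda_k}$ and $g^\diamond_{D_j,\lambda_k}$ just reproves inline what the paper packages as \eqref{bound.error-difference} in Lemma \ref{Lemma:difference-krr}. The one caveat is your reliance on $\mu_j\leq\lambda_k$ to ``collapse the $\mu_j$-powers to $\mu_j^r$'': this comparison is neither used nor established in the paper (whether \eqref{Selection-mu} and \eqref{DefK*} guarantee it depends on the relative constants and on $b$), and it is unnecessary --- keeping the factor $(L_{K,D_j}+\mu_j I)^{r-1/2}$ paired with $(L_{K,D_j}+\mu_j I)^{-1/2}$ inside the spectral supremum, as in the proof of \eqref{bound.error-difference}, already yields the $\mu_j^{r-1}$ (hence, after the prefactor $\mu_j$, the $\mu_j^{r}\lambda_{k-1}$) without any comparison between $\mu_j$ and $\lambda_k$.
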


\begin{proof}
 The triangle inequality together with \eqref{Nystrom operator} yields  
\begin{eqnarray}\label{Error-decomposition-local-app}
   \| (L_K+\mu_j  I)^{1/2}((g^{loc}_{D_j,\lambda_k,\mu_j }-g_{D_j,\lambda_k})\|_K
   \leq
  \mathcal A_L(D_j,\lambda_k,\mu_j )+\mathcal C_L(D_j,\lambda_k,\mu_j ),
\end{eqnarray}
where  
\begin{eqnarray*}
   \mathcal A_L(D_j,\lambda_k,\mu_j )&=&\| (L_K+\mu_j  I)^{1/2} (h_{\Xi_L,\mu_j ,j}(L_{K,D_j})L_{K,D_j}-I)P_{\Xi_L,j}g_{D_j,\lambda_k}\|_K,\\
  \mathcal C_{L}(D_j,\lambda_k,\mu_j )&=&\|  (L_K+\mu_j  I)^{1/2}(h_{\Xi_{L},\mu_j ,j}(L_{K,D_j})L_{K,D_j}-I)(I-P_{\Xi_L,j})g_{D_j,\lambda_k}\|_K.
\end{eqnarray*} 
It
  follows from    \eqref{spectral-important-1} with $B=I$ that
$$ 
    P_{\Xi_L,j}=h_{\Xi_L,\mu_j ,j}(L_{K,D_j})(L_{K,D_j}+\mu_j    I)P_{\Xi_L,j}=h_{\Xi_L,\mu_j ,j}(L_{K,D_j})L_{K,D_j}P_{\Xi_L,j}
    +\mu_j  h_{\Xi_L,\mu_j ,j}(L_{K,D_j})P_{\Xi_L,j},
$$ 
which together with  \eqref{spetral definetion},
$P_{\Xi_L,j}=V_jV_j^T$  and $V_j^TV_j=I$ implies
\begin{equation*} 
  (h_{\Xi_L,\mu_j ,j}(L_{K,D_j})L_{K,D_j}-I)P_{\Xi_L,j}
  =\mu_j  h_{\Xi_L,\mu_j ,j}(L_{K,D_j})P_{\Xi_L,j}=\mu_j  h_{\Xi_L,\mu_j ,j}(L_{K,D_j}).
\end{equation*}
Therefore,  $\|f\|_\rho\leq\|(L_K+\mu_j  I)^{1/2}f_\rho\|_K$, \eqref{Def.QD} and \eqref{spectral-important-2} yield
\begin{eqnarray}\label{Bound.A}
      \mathcal A_L(D_j,\lambda_k,\mu_j )
      &\leq&
      \mu_j  \|(L_K+\mu_j  I)^{1/2}h_{\Xi_L,\mu_j ,j}(L_{K,D_j})g_{D_j,\lambda_k}\|_K
      \leq 
      \mu_j  \mathcal Q_{D_j,\mu_j }\|(L_{K,D_j}+\mu_j  I)^{-1/2}g_{D_j,\lambda_k}\|_K \nonumber\\
      &\leq&
      b\mathcal Q_{D_j,\mu_j }^{2r}\mu_j ^r\lambda_{k-1} \|h_\rho\|_\rho
    +
    b\mu_j ^{1/2}\lambda_k^{1/2}\mathcal Q_{D_j,\mu_j } \mathcal Q_{D,\lambda_k}\mathcal P_{D_j,\lambda_k}.
\end{eqnarray}
Due to Lemma \ref{Lemma:Projection general},  there holds
$$ 
   \|(I-P_{\Xi_L,j})(L_K+\mu_j  I)^{1/2}\|\leq 
   \|(L_{K,\Xi_L}+\mu_j  I)^{-1/2}(L_{K}+\mu_j 
        I)^{1/2}\|
        \leq \mathcal Q_{\Xi_L,\mu_j }\mu_j ^{1/2}  
$$ 
and
$$ 
        \|(I-P_{\Xi_L,j})(L_{K,D_j}+\mu_j 
        I)^{1/2}\|
       \leq
        \mu_j ^{1/2}\mathcal Q^*_{D_j,\mu_j }\|(L_{K,\Xi_L}+\mu_j  I)^{-1/2}(L_{K}+\mu_j 
        I)^{1/2}\|\leq \mu_j ^{1/2}\mathcal Q^*_{D_j,\mu_j }\mathcal Q_{\Xi_L,\mu_j }.
$$
The above estimates together with \eqref{Def.QD}, \eqref{spectral-important-2}, \eqref{bound.error-difference} and \eqref{Projection-property} with $u=2$ yields
\begin{eqnarray*}
    &&\mathcal C_L(D_j,\lambda_k,\mu_j )
     \leq 
    \|(L_K+\mu_j  I)^{1/2} h_{\Xi_{L},\mu_j ,j}(L_{K,D_j})L_{K,D_j}   (I-P_{\Xi_L,j})g_{D_j,\lambda_k}\|_K
    +
    \|(L_K+\mu_j  I)^{1/2}(I-P_{\Xi_L,j})g_{D_j,\lambda_k}\|_K\\
    &\leq&
   \mu_j ^{1/2} \mathcal Q_{\Xi_L,\mu_j }(1+\mathcal Q_{D_j,\mu_j }\mathcal Q^*_{D_j,\mu_j })  \|(I-P_{\Xi_L,j})g_{D_j,\lambda_k}\|_K\\
   &\leq&
   \mu_j  \mathcal Q^*_{D_j,\mu_j }\mathcal Q_{\Xi_L,\mu_j }(1+\mathcal Q_{D_j,\mu_j }\mathcal Q^*_{D_j,\mu_j }) \|(L_{K,D_j}+\mu_j  I)^{-1/2}g_{D_j,\lambda_k}\|_K
   \\
   &\leq&
   b  \mathcal Q^*_{D_j,\mu_j }\mathcal Q_{\Xi_L,\mu_j }(1+\mathcal Q_{D_j,\mu_j }\mathcal Q^*_{D_j,\mu_j })
   ( \mu_j ^{1/2}\lambda_k^{1/2}\mathcal Q_{D_j,\lambda_k}\mathcal P_{D_j,\lambda_k}
  +
   \mathcal Q_{D_j,\mu_j }^{2r-1}\lambda_{k-1}\mu_j ^{r}\|h_\rho\|_\rho).
\end{eqnarray*}
Inserting the above estimate and \eqref{Bound.A} into \eqref{Error-decomposition-local-app}, we get
\begin{eqnarray*} 
    &&\|(L_K+\mu_j  I)^{1/2}(g^{loc}_{D_j,\lambda_k,\mu_j }-g_{D_j,\lambda_k})\|_\rho
    \leq
    b\|h_\rho\|_\rho\lambda_{k-1}\mu_j ^{r}\left( \mathcal Q_{D_j,\mu_j }^{2r}+  \mathcal Q^*_{D_j,\mu_j }\mathcal Q_{\Xi_L,\mu_j }(1+\mathcal Q_{D_j,\mu_j }\mathcal Q^*_{D_j,\mu_j })\mathcal Q_{D_j,\lambda_k}^{2r-1}    \right)  \\
    &+&
      b\mu_j ^{1/2}\lambda_k^{1/2}\mathcal P_{D_j,\lambda_k}\left(\mathcal Q_{D_j,\mu_j } \mathcal Q_{D,\lambda_k}+\mathcal Q^*_{D_j,\mu_j }\mathcal Q_{\Xi_L,\mu_j }(1+\mathcal Q_{D_j,\mu_j }\mathcal Q^*_{D_j,\mu_j })\mathcal Q_{D_j,\lambda_k}\right).
\end{eqnarray*}

The same method as above can deduce 
\begin{eqnarray*} 
    \|(L_K+\mu_j  I)^{1/2}( g^{loc,\diamond}_{D_j,\lambda_k,\mu_j }-g_{D_j,\lambda_k}^\diamond)\|_K
    \leq 
  b\|h_\rho\|_\rho\lambda_{k-1}\mu_j ^{r}\left( \mathcal Q_{D_j,\mu_j }^{2r}+  \mathcal Q^*_{D_j,\mu_j }\mathcal Q_{\Xi_L,\mu_j }(1+\mathcal Q_{D_j,\mu_j }\mathcal Q^*_{D_j,\mu_j })\mathcal Q_{D_j,\lambda_k}^{2r-1}    \right) 
\end{eqnarray*}
directly.
This finishes the proof of Proposition \ref{Prop:Error-local-KRR}.
$\Box$
\end{proof}
To present the next proposition, we need the following lemma on bounds of operator difference, whose proof  is given in Appendix.   
\begin{lemma}\label{Lemma:Q333-111}
Under  Assumption \ref{Assumption:boundedness}, Assumption \ref{Assumption:regularity} with $\frac12\leq r\leq 1$ and Assumption \ref{Assumption:effective dimension} with $0\leq s\leq 1$, if  $\Xi_L=\{\xi_\ell\}_{\ell=1}^L$ is a set of i.i.d. random variables with $
     L \geq \max_{j=1,\dots,m} |D_j|,
$
\eqref{res.theorem} holds and $\mu_j $ satisfies \eqref{Selection-mu}, then 
\begin{eqnarray*}
     E\left[\left( \sum_{k=k_j^*+1}^{k_0}\lambda_{k-1}\left( \mathcal Q_{D_j,\mu_j }^{2r}+  \mathcal Q^*_{D_j,\mu_j }\mathcal Q_{\Xi_L,\mu_j }(1+\mathcal Q_{D_j,\mu_j }\mathcal Q^*_{D_j,\mu_j })\mathcal Q_{D_j,\lambda_k}^{2r-1}    \right)\right)^2 \right]
      \leq 
     2b^{-2}(2+6\sqrt{2})^2 \log^2 |D|, 
\end{eqnarray*}
\begin{eqnarray*}
   &&E\left[\left(\sum_{k=k_j^*+1}^{k_0} \lambda_k^{1/2}\mathcal P_{D_j,\lambda_k}\left(\mathcal Q_{D_j,\mu_j } \mathcal Q_{D,\lambda_k}+\mathcal Q^*_{D_j,\mu_j }\mathcal Q_{\Xi_L,\mu_j }(1+\mathcal Q_{D_j,\mu_j }\mathcal Q^*_{D_j,\mu_j })\mathcal Q_{D_j,\lambda_k}\right)\right)^2\right] \\
   &\leq&
   48(2+6\sqrt{2})^2(1+C_0b)^2 (\kappa M +\gamma)^2   \left(|D|^{\frac1{4r+2s}}|D_j|^{-1}+|D|^\frac{s+1}{4r+2s}|D_j|^{-\frac12}\right)^2,
\end{eqnarray*}
$$
E\left[\sum_{j=1}^m\frac{|D_j|}{|D|}   \left(\sum_{k=k_j^*+1}^{k_0}\mathcal Q_{D_j,\lambda_k}^*\lambda_{k-1}\overline{\mathcal W}_{D,\lambda_k}^{1/2}\right)^2\right]
\leq
  16(40\sqrt{2}b^{-1})^2\Gamma(5) |D|^{-\frac{2r}{2r+s}}, 
$$
and
$$
E\left[\sum_{j=1}^m\frac{|D_j|}{|D|}   \left(\sum_{k=k_j^*+1}^{k_0}\mathcal Q_{D_j,\lambda_k}^*\lambda_{k-1}\lambda_k^{1/2}\overline{\mathcal W}_{D,\lambda_k}^{1/2}\right)^2\right]
\leq
  16(40\sqrt{2/b})^2\Gamma(5) |D|^{-\frac{2r-1}{2r+s}}. 
$$
\end{lemma}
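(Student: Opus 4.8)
The four estimates share the same anatomy---a (possibly random) sum over $k$ of products of operator-norm quantities---so the plan is to treat them in parallel along four steps: (i) exhibit one high-probability event on which every factor $\mathcal Q_{D_{j'},\mu_{j'}}$, $\mathcal Q^{*}_{D_{j'},\mu_{j'}}$, $\mathcal Q_{\Xi_L,\mu_{j'}}$, $\mathcal Q_{D_{j'},\lambda_k}$, $\mathcal Q_{D,\lambda_k}$ with $k\le k_0$ is at most $\sqrt{2}$, each $\mathcal N_{D_{j'}}(\lambda_k)$ is comparable to $\mathcal N(\lambda_k)\le C_0\lambda_k^{-s}$, and $\mathcal P_{D_{j'},\lambda_k}\le 3(\kappa M+\gamma)\mathcal W_{D_{j'},\lambda_k}\log^2(8/\delta)$; (ii) on that event bound each summand by a deterministic expression times a power of $\log(8/\delta)$; (iii) evaluate the resulting sums over $k$ by comparison with integrals, using $\lambda_k=1/(kb)$, $k_0=[\,|D|^{1/(2r+s)}\,]$ and the size constraints \eqref{res.theorem} and \eqref{res.m}; and (iv) pass to expectation via Lemma~\ref{Lemma:prob-to-exp}.

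For step (i), the key observation is that the threshold condition \eqref{Res.delta} (equivalently \eqref{res.11334}) holds \emph{simultaneously} for $D_{j'}$, $D$ and $\Xi_L$ at every level $\lambda_k$ with $k\le k_0$ and at $\mu_{j'}$, for all $j'$ and all $\delta\ge\delta_0$ with $\delta_0\le|D|^{-c}$: since $\mathcal B_{D,\lambda}$ is decreasing in $\lambda$, it suffices to check the condition at the smallest level $\lambda_0=\lambda_{k_0}$ and at $\mu_{j'}$, which is precisely what the lower bound \eqref{res.theorem} on $|D_{j'}|$, the choice \eqref{DefK*} of $k_0$, and the choice \eqref{Selection-mu} of $\mu_{j'}$ guarantee; a union bound over the polynomially-many (in $|D|$) pairs $(j',k)$ only degrades $\delta_0$ by a polynomial factor, so Lemma~\ref{Lemma:Q-1111} delivers all the asserted controls on the resulting event.

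For steps (ii)--(iii): in the first estimate the bracket is at most $2^{r}+6\cdot 2^{r-1/2}\le 2+6\sqrt{2}$ by $r\le 1$, and $\sum_{k=k_j^*+1}^{k_0}\lambda_{k-1}\le b^{-1}(1+\log k_0)\le b^{-1}\log|D|$, which yields the claimed bound once the negligible $\delta_0$-term is absorbed into the constant. In the second estimate the parenthetical is again $\le 2+6\sqrt{2}$, while $\lambda_k^{1/2}\mathcal P_{D_j,\lambda_k}\lesssim(\kappa M+\gamma)(1+C_0 b)(|D_j|^{-1}+\lambda_k^{(1-s)/2}|D_j|^{-1/2}+\lambda_k^{-s/2}|D_j|^{-1})\log^2(8/\delta)$ after inserting $\mathcal N_{D_j}(\lambda_k)\lesssim C_0\lambda_k^{-s}$ and $\lambda_k\le 1$; summing $\lambda_k^{(1-s)/2}=(kb)^{-(1-s)/2}$ over $k\le k_0$ produces the factor $|D|^{(s+1)/(4r+2s)}$, and the remaining (constant-in-$k$ and $\lambda_k^{-s/2}$) pieces are subsumed into $|D|^{1/(4r+2s)}|D_j|^{-1}$ thanks to $|D_j|\ge\bar C_0|D|^{1/(2r+s)}\log^3|D|$. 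For the last two estimates one uses $\mathcal Q^{*}_{D_j,\lambda_k}\le\sqrt{2}$ (making the weights $\sum_j|D_j|/|D|$ harmless and allowing $k_j^*$ to be replaced by $1$ in the summation range), then bounds $\overline{\mathcal W}_{D,\lambda_k}=\sum_{j'}\tfrac{|D_{j'}|^2}{|D|^2}\mathcal W_{D_{j'},\lambda_k}^2$ by expanding the square and using $\sum_{j'}|D_{j'}|=|D|$ together with \eqref{res.m}, which isolates the leading term $C_0\lambda_k^{-s}/|D|$ (all remaining terms being dominated under \eqref{res.theorem}); hence $\overline{\mathcal W}_{D,\lambda_k}^{1/2}\lesssim\sqrt{C_0}\,\lambda_k^{-s/2}|D|^{-1/2}$ and the two sums reduce to $\lesssim|D|^{-1/2}\sum_{k\le k_0}\lambda_{k-1}\lambda_k^{-s/2}$ and $\lesssim|D|^{-1/2}\sum_{k\le k_0}\lambda_{k-1}\lambda_k^{(1-s)/2}$, which by integral comparison and $k_0=[\,|D|^{1/(2r+s)}\,]$ give, after squaring, $|D|^{-2r/(2r+s)}$ and $|D|^{-(2r-1)/(2r+s)}$; step (iv), with log-power $4$ so that $\Gamma(5)$ appears, then finishes.

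I expect step (i) to be the main obstacle: verifying that the single threshold condition \eqref{Res.delta} is met uniformly over all $k\le k_0$, over all $m$ agents, and over the three sample sets $D_{j'}$, $D$, $\Xi_L$ at once is exactly where the quantitative hypotheses \eqref{res.theorem}, \eqref{DefK*} and \eqref{Selection-mu} have to be balanced against one another, and the logarithmic factors must be tracked with care. A secondary difficulty is the decomposition of $\overline{\mathcal W}_{D,\lambda_k}$ into its leading effective-dimension term and its lower-order pieces and the check, via \eqref{res.m}, that the latter are negligible; the power counting in the integral comparisons---whether $\sum_{k\le k_0}k^{-\alpha}$ is bounded, logarithmic, or of order $k_0^{1-\alpha}$ according to the sign of $1-\alpha$---is routine but must be carried out case by case in $r$ and $s$.
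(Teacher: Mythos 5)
Your proposal follows essentially the same route as the paper's proof: high-probability bounds on $\mathcal Q_{D_j,\mu_j}$, $\mathcal Q^*_{D_j,\mu_j}$, $\mathcal Q_{\Xi_L,\mu_j}$, $\mathcal Q_{D_j,\lambda_k}$ and $\mathcal P_{D_j,\lambda_k}$ from Lemma \ref{Lemma:Q-1111} together with Assumption \ref{Assumption:effective dimension}, followed by deterministic summation over $k$ using $\lambda_k=1/(kb)$, $k_0=[|D|^{1/(2r+s)}]$, \eqref{res.theorem} and \eqref{res.m}, and finally conversion to expectation via Lemma \ref{Lemma:prob-to-exp} with log-power $4$ producing $\Gamma(5)$. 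The steps, intermediate bounds (bracket $\leq 2+6\sqrt{2}$, harmonic sum $\leq b^{-1}\log|D|$, $\overline{\mathcal W}_{D,\lambda_k}^{1/2}\lesssim \sqrt{C_0}\lambda_k^{-s/2}|D|^{-1/2}$ after invoking \eqref{res.m}) and conclusions coincide with the paper's, so the proposal is correct and not materially different.
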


\begin{proof} 
Due to Lemma \ref{Lemma:Q-1111}, with confidence $1-\delta$ for $\delta$ satisfying \eqref{res.11334}, there holds
\begin{eqnarray*}
    \mathcal Q_{D_j,\mu_j }^{2r}+  \mathcal Q^*_{D_j,\mu_j }\mathcal Q_{\Xi_L,\mu_j }(1+\mathcal Q_{D_j,\mu_j }\mathcal Q^*_{D_j,\mu_j })\mathcal Q_{D_j,\lambda_k}^{2r-1}  \leq
    (2+6\sqrt{2}),
\end{eqnarray*}
which means
\begin{eqnarray*}
      && \sum_{k=k_j^*+1}^{k_0}\lambda_{k-1}\left( \mathcal Q_{D_j,\mu_j }^{2r}+  \mathcal Q^*_{D_j,\mu_j }\mathcal Q_{\Xi_L,\mu_j }(1+\mathcal Q_{D_j,\mu_j }\mathcal Q^*_{D_j,\mu_j })\mathcal Q_{D_j,\lambda_k}^{2r-1}    \right)
   \leq 
   b^{-1}(2+6\sqrt{2})\sum_{k=k_j^*+1}^{k_0}\frac{1}{k-1}\\
   &\leq&
    b^{-1}(2+6\sqrt{2}) \log k_0
    \leq
    b^{-1}(2+6\sqrt{2})\frac{2r}{2r+s} \log |D|.
\end{eqnarray*}
This together with Lemma \ref{Lemma:prob-to-exp} and the definitions of $K^*$ and $k_0$ shows
\begin{eqnarray*}
  && E\left[ \left(\sum_{k=k_j^*+1}^{k_0}\lambda_{k-1}\left( \mathcal Q_{D_j,\mu_j }^{2r}+  \mathcal Q^*_{D_j,\mu_j }\mathcal Q_{\Xi_L,\mu_j }(1+\mathcal Q_{D_j,\mu_j }\mathcal Q^*_{D_j,\mu_j })\mathcal Q_{D_j,\lambda_k}^{2r-1} \right)   \right)^2 \right] 
  \leq
   2b^{-2}(2+6\sqrt{2})^2 \log^2 |D|.
\end{eqnarray*}
Similarly, we get from   Lemma \ref{Lemma:Q-1111} and Assumption \ref{Assumption:effective dimension} that with confidence $1-\delta$ for $\delta$ satisfying \eqref{res.11334}, there holds
\begin{eqnarray*}
      &&\mathcal P_{D_j,\lambda_k}\left(\mathcal Q_{D_j,\mu_j } \mathcal Q_{D,\lambda_k}+\mathcal Q^*_{D_j,\mu_j }\mathcal Q_{\Xi_L,\mu_j }(1+\mathcal Q_{D_j,\mu_j }\mathcal Q^*_{D_j,\mu_j })\mathcal Q_{D_j,\lambda_k}\right)\\
      &\leq&
      2(2+6\sqrt{2}) (\kappa M +\gamma) \left(\frac{1}{|D_j|\sqrt{\lambda_k}}+\frac{\sqrt{\mathcal
        N(\lambda_k)}}{\sqrt{|D_j|}}\right) \log
               \bigl(2/\delta\bigr)\\
            &\leq&
       2(2+6\sqrt{2}) (\kappa M +\gamma) \left(\frac{1}{|D_j|\sqrt{\lambda_k}}+\frac{C_0\lambda_k^{-\frac{s}{2}}}{\sqrt{|D_j|}}\right)\log
               \bigl(2/\delta\bigr),    
\end{eqnarray*}
 which implies
\begin{eqnarray*}
    &&\sum_{k=k_j^*+1}^{k_0} \lambda_k^{1/2}\mathcal P_{D_j,\lambda_k}\left(\mathcal Q_{D_j,\mu_j } \mathcal Q_{D,\lambda_k}+\mathcal Q^*_{D_j,\mu_j }\mathcal Q_{\Xi_L,\mu_j }(1+\mathcal Q_{D_j,\mu_j }\mathcal Q^*_{D_j,\mu_j })\mathcal Q_{D_j,\lambda_k}\right)\log
               \bigl(2/\delta\bigr)\\
    &\leq&
   2(2+6\sqrt{2})(1+C_0b) (\kappa M +\gamma)   \left(|D|^{\frac1{4r+2s}}|D_j|^{-1}+|D|^\frac{s+1}{4r+2s}|D_j|^{-\frac12}\right)\log
               \bigl(2/\delta\bigr).
\end{eqnarray*} 
Hence, similar proof skills as those in Lemma \ref{Lemma:Q222-111} together with Lemma \ref{Lemma:prob-to-exp} yield
\begin{eqnarray*}
   &&E\left[\left(\sum_{k=k_j^*+1}^{k_0} \lambda_k^{1/2}\mathcal P_{D_j,\lambda_k}\left(\mathcal Q_{D_j,\mu_j } \mathcal Q_{D,\lambda_k}+\mathcal Q^*_{D_j,\mu_j }\mathcal Q_{\Xi_L,\mu_j }(1+\mathcal Q_{D_j,\mu_j }\mathcal Q^*_{D_j,\mu_j })\mathcal Q_{D_j,\lambda_k}\right)\right)^2\right] \\
   &\leq&
   48(2+6\sqrt{2})^2(1+C_0b)^2 (\kappa M +\gamma)^2   \left(|D|^{\frac1{4r+2s}}|D_j|^{-1}+|D|^\frac{s+1}{4r+2s}|D_j|^{-\frac12}\right)^2.
\end{eqnarray*}
Due to \eqref{Def.WD}, Lemma \ref{Lemma:Q-1111} and   Assumption \ref{Assumption:effective dimension}, 
 with confidence $1-\delta$, there holds
\begin{eqnarray*} 
	\mathcal W_{D_{j'},\lambda_k}&\leq&
 \frac{1}{|D_{j'}|\sqrt{ \lambda_k}}+\left( 1+ \frac{1}{\sqrt{\lambda_k|D_{j'}|}}\right) \sqrt{\frac{(1+4\sqrt{\eta_{\delta/4}}\vee\eta_{\delta/4}^2)\sqrt{\max\{\mathcal N(\lambda_j),1\}}}{|D_{j'}|}}\nonumber\\
 &\leq& 
 2\left(\frac{1}{\sqrt{\lambda_k}|D_{j'}|}+\frac{(1+4(1+1/(\lambda_k|D_{j}|)))\sqrt{C_0}\lambda_k^{-s/2}(1+8\sqrt{1/\lambda_k|D_{j'}|})}{\sqrt{|D_{j'}|}}\right)\log^2\frac{16}\delta.
\end{eqnarray*}
Moreover, $\lambda_k>\lambda_0$ and \eqref{res.theorem} yield $\lambda_k|D_{j'}|\geq   8$.
Therefore, we have from \eqref{res.theorem}, the definition of $\overline{\mathcal W}_{D,\lambda_k}$ in \eqref{global-111} and Lemma \ref{Lemma:Q-1111} again that 
\begin{eqnarray*}
	&&\sum_{k=k^*+1}^{k_0}\mathcal Q_{D_j,\lambda_k}^* \lambda_{k-1}\overline{\mathcal W}_{D,\lambda_k}^{1/2}
  \leq 20\sqrt{2}b^{-1} \sum_{k=k^*+1}^{k_0} \left(\sum_{{j'=1}}^m\frac{|D_{j'}|^2}{|D|^2}  \left(\frac{k^{-3/2}}{|D_{j'}|^2}+\frac{k^{s-2}}{|D_{j'}|} \right)\right)^{1/2} \log^2\frac{16}\delta\\
 &\leq&
20\sqrt{2}b^{-1} \sum_{k=k^*+1}^{k_0} 
 \left(\frac{k^{-3/4}\sqrt{m}}{|D|}+\frac{k^{s/2-1}}{|D|^{1/2}}\right)\log^2\frac{16}\delta
 \leq
 20\sqrt{2}b^{-1}\left(m^{1/2}|D|^{-\frac{2r+s-1}{2r+s}}+
 |D|^{-\frac{r}{2r+s}}\right)\\
 &\leq&
 40\sqrt{2}b^{-1}|D|^{-\frac{r}{2r+s}} 
\end{eqnarray*}
holds for any $j=1,\dots,m$.
Therefore, we obtain from Lemma \ref{Lemma:prob-to-exp} that
$$
E\left[\sum_{j=1}^m\frac{|D_j|}{|D|}   \left(\sum_{k=k_j^*+1}^{k_0}\mathcal Q_{D_j,\lambda_k}^*\lambda_{k-1}\overline{\mathcal W}_{D,\lambda_k}^{1/2}\right)^2\right]
\leq
  16(40\sqrt{2}b^{-1})^2\Gamma(5) |D|^{-\frac{2r}{2r+s}}. 
$$
Similarly, we can derive
$$
E\left[\sum_{j=1}^m\frac{|D_j|}{|D|}   \left(\sum_{k=k_j^*+1}^{k_0}\mathcal Q_{D_j,\lambda_k}^*\lambda_{k-1}\lambda_k^{1/2}\overline{\mathcal W}_{D,\lambda_k}^{1/2}\right)^2\right]
\leq
  16(40\sqrt{2}b^{-1/2})^2\Gamma(5) |D|^{-\frac{2r-1}{2r+s}}. 
$$
This completes the proof of Lemma \ref{Lemma:Q333-111}. $\Box$
\end{proof}

Based on Proposition \ref{Prop:Error-local-KRR} and Lemma \ref{Lemma:Q333-111}, we are in a position to present the following proposition.

\begin{proposition}\label{Prop:bound-for-difference}
Under Assumption \ref{Assumption:boundedness}, Assumption  \ref{Assumption:regularity} with $\frac12\leq r\leq 1$ and Assumption \ref{Assumption:effective dimension}  with $0<s\leq 1$, if  $\Xi_L=\{\xi_\ell\}_{\ell=1}^L$ is a set of i.i.d. random variables with $
     L \geq \max_{j=1,\dots,m} |D_j|,
$
\eqref{res.theorem} holds,   $\mu_j $ satisfies \eqref{Selection-mu}, and $\lambda_{k_j^*}>\lambda_0$, then  
\begin{eqnarray*}
 E\left[\left\|\sum_{j=1}^m\frac{|D_j|}{|D|}\sum_{k=k_j^*+1}^{k_0}
  g_{D_j,\lambda_k}
   \right\|_K^2\right]&\leq&
   \tilde{C}_3 |D|^{-\frac{2r}{2r+s}},\\
   E\left[\left\|
    \sum_{j=1}^m\frac{|D_j|}{|D|}\sum_{k=k_j^*+1}^{k_0}g_{D_j,\lambda_k}
   \right\|_\rho^2\right]
   &\leq&
   \tilde{C}_3|D|^{-\frac{2r-1}{2r+s}},
\end{eqnarray*}
where $ \tilde{C}_3$ is a constant depending only on  $\|h_\rho\|_\rho,b,\kappa,C_0,M,\gamma$.
\end{proposition}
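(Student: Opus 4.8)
The plan is to reduce the quantity to two effects that are already controlled upstream: the gap between the exact increments $g_{D_j,\lambda_k}$ and their Nyström surrogates $g^{loc}_{D_j,\lambda_k,\mu_j}$, handled by Proposition~\ref{Prop:Error-local-KRR}, and the smallness of the synthesized surrogates past the stopping index $k_j^*$, handled by \eqref{stopping-negative}. Observe first that the inner sum is empty whenever $\lambda_{k_j^*}\leq\lambda_0$, so only over-estimated agents contribute, and that $\sum_{k=k_j^*+1}^{k_0}g_{D_j,\lambda_k}=f_{D_j,\lambda_0}-f_{D_j,\lambda_j^*}$ by telescoping. Writing $g_{D_j,\lambda_k}=(g_{D_j,\lambda_k}-g^{loc}_{D_j,\lambda_k,\mu_j})+g^{loc}_{D_j,\lambda_k,\mu_j}$ splits the target into $\Delta_1+\Delta_2$, where $\Delta_1$ gathers the surrogate errors; and since, by \eqref{global-approximation-1}, $\sum_{j'}\frac{|D_{j'}|}{|D|}g^{loc}_{D_{j'},\lambda_k,\mu_{j'}}=g^{global}_{D,\lambda_k,\mu}$, the contribution $\Delta_2$ is built from the telescoped global approximations $\sum_{k=k_j^*+1}^{k_0}g^{global}_{D,\lambda_k,\mu}$, to which \eqref{stopping-negative} applies for each $k>k_j^*$.

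For $\Delta_1$ I would use the triangle inequality in $j$ and $k$, then Proposition~\ref{Prop:Error-local-KRR} together with $\|f\|_\rho\leq\|(L_K+\mu_j I)^{1/2}f\|_K$ (and $\|f\|_K\leq\mu_j^{-1/2}\|(L_K+\mu_j I)^{1/2}f\|_K$ for the $\|\cdot\|_K$ bound), Jensen's inequality in $j$, and the first two estimates of Lemma~\ref{Lemma:Q333-111}, which turn the expectation into $\mu_j^{2r}\log^2|D|$ (resp. $\mu_j^{2r-1}\log^2|D|$) plus $\mu_j\bigl(|D|^{1/(4r+2s)}|D_j|^{-1}+|D|^{(s+1)/(4r+2s)}|D_j|^{-1/2}\bigr)^2$; inserting $\mu_j$ from \eqref{Selection-mu} and the block-size bound \eqref{res.theorem} collapses all of this to the asserted rates. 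For $\Delta_2$, \eqref{stopping-negative} gives $\|(L_{K,D_j}+\lambda_k I)^{1/2}g^{global}_{D,\lambda_k,\mu}\|_K\leq\sqrt{C_{LP}}\,\lambda_{k-1}\overline{\mathcal W}_{D,\lambda_k}^{1/2}$ for every $k>k_j^*$; converting back to $\|\cdot\|_K$ (which costs an extra $\lambda_k^{-1/2}$ via $\|(L_{K,D_j}+\lambda_k I)^{-1/2}\|\leq\lambda_k^{-1/2}$) and to $\|\cdot\|_\rho$ (which costs only the perturbation factors $\mathcal Q_{D_j,\lambda_k},\mathcal Q^*_{D_j,\lambda_k}$ of Lemma~\ref{Lemma:Q-1111}), summing over $k$, and passing to expectation through Lemma~\ref{Lemma:prob-to-exp} and the last two estimates of Lemma~\ref{Lemma:Q333-111}, yields the bounds on $\Delta_2$. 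Combining $\Delta_1$ and $\Delta_2$ and using $\sum_j|D_j|/|D|=1$ then gives the claim, with $\tilde C_3$ the resulting explicit constant.

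The main obstacle I anticipate is the bookkeeping tying $\Delta_2$ to the stopping inequality: one must push the global approximation through the Nyström identities \eqref{spectral-important-1}--\eqref{spectral-important-3} and the operator-difference quantities $\mathcal Q_{D_j,\mu_j},\mathcal Q^*_{D_j,\mu_j},\mathcal Q_{\Xi_L,\mu_j}$, and then check that the hypothesis \eqref{res.theorem} is \emph{precisely} strong enough that the extra powers of $|D|$ produced by the $k_0\sim|D|^{1/(2r+s)}$-long summations in Lemma~\ref{Lemma:Q333-111} cancel; the $\|\cdot\|_K$ estimate is the more delicate one because of the additional $\lambda_k^{-1/2}$ it carries. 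A secondary, purely technical, point is keeping the confidence-level restriction \eqref{res.11334} valid uniformly over all $k\leq K^*$ and all $j$ (done exactly as in Lemma~\ref{Lemma:Q222-111}), and watching the borderline exponents $s\in\{0,1\}$ in the harmonic-type sums $\sum_k\lambda_k^{(1\pm s)/2}$, where a $\log|D|$ factor can replace a power.
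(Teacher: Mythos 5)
Your proposal follows essentially the same route as the paper's proof: the same split of each increment into the Nystr\"{o}m surrogate error (handled by Proposition~\ref{Prop:Error-local-KRR} together with the first two estimates of Lemma~\ref{Lemma:Q333-111}) plus the synthesized global approximation controlled for $k>k_j^*$ by \eqref{stopping-negative} and the last two estimates of Lemma~\ref{Lemma:Q333-111}, with Lemma~\ref{Lemma:power-synthesization}, Jensen's inequality and Lemma~\ref{Lemma:prob-to-exp} doing the bookkeeping. Note that your assignment of rates (the $\|\cdot\|_\rho$ bound decaying like $|D|^{-2r/(2r+s)}$ and the $\|\cdot\|_K$ bound, which carries the extra $\lambda_k^{-1/2}$, like $|D|^{-(2r-1)/(2r+s)}$) agrees with the bounds actually derived at the end of the paper's proof; the proposition statement displays these two rates interchanged, which appears to be a typo.
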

\begin{proof}
Since $k_j^*$ is the first (or largest) $k$ 
  satisfying \eqref{stopping-1}, for any $k>k_j^*$, there holds
$$
   \|(L_{K,D_j}+\lambda_k I)^{1/2}g^{global}_{D,\lambda_k,\mu}\|_K^2 
    <
    C_{LP}  \lambda_{k-1}^2\overline{\mathcal W}_{D,\lambda_k}.
$$ 
Moreover, for any $k\in\mathbb N$, we have
$$
   \sum_{j=1}^m\frac{|D_j|}{|D|}g_{D_j,\lambda_k}=\sum_{j=1}^m\frac{|D_j|}{|D|}(g_{D_j,\lambda_k}-g_{D_j,\lambda_k,\mu_j }^{loc})+\sum_{j=1}^m\frac{|D_j|}{|D|}g_{D,\lambda_k,\mu}^{global}.
$$
Then, we get from \eqref{Def:Q*}, Lemma \ref{Lemma:power-synthesization} with $g_j:=(L_{K}+\mu_j  I)^{1/2}\sum_{k=k_j^*+1}^{k_0}(g_{D_j,\lambda_k}-g_{D_j,\lambda_k,\mu_j }^{loc})$ and Jensen' inequality 
  that 
\begin{eqnarray}\label{large.3.1-rho-error-dec}
    &&E\left[\left\|
    \sum_{j=1}^m\frac{|D_j|}{|D|}\sum_{k=k_j^*+1}^{k_0}g_{D_j,\lambda_k}
   \right\|_\rho^2\right]
   \leq
   2E\left[
\left\|\sum_{k=k_j^*+1}^{k_0}\sum_{j=1}^{m}\frac{|D_j|}{|D|}(L_{K}+\mu_j  I)^{1/2}(g_{D_j,\lambda_k}-g_{D_j,\lambda_k,\mu_j }^{loc})\right\|^2_K\right] \nonumber\\
   &+&
   2E\left[\sum_{j=1}^{m} \frac{|D_j|}{|D|} \left(\sum_{k=k_j^*+1}^{k_0}\mathcal Q_{D_j,\lambda_k}^*\|(L_{K,D_j}+\lambda_k I)^{1/2} g^{global}_{D,\lambda_k,\mu}\|_{K}\right)^2
    \right]\nonumber\\
    &\leq&
    2E\left[
\sum_{j=1}^{m}\frac{|D_j|^2}{|D|^2}\left\|\sum_{k=k_j^*+1}^{k_0}(L_{K}+\mu_j  I)^{1/2}(g_{D_j,\lambda_k}-g_{D_j,\lambda_k,\mu_j }^{loc})\right\|^2_K\right] \nonumber\\
&+&
   4E\left[
  \sum_{j=1}^{m}\frac{|D_j|}{|D|}\left\|\sum_{k=k_j^*+1}^{k_0}(L_{K}+\mu_j  I)^{1/2}(g_{D_j,\lambda_k}^\diamond-g_{D_j,\lambda_k,\mu_j }^{loc,\diamond})\right\|^2_K\right]\nonumber\\
  &+&2C_{LP}E\left[\sum_{j=1}^m\frac{|D_j|}{|D|}   \left(\sum_{k=k_j^*+1}^{k_0}\mathcal Q_{D_j,\lambda_k}^*\lambda_{k-1}\overline{\mathcal W}_{D,\lambda_k}^{1/2}\right)^2\right].  
\end{eqnarray}
Similarly, we have
\begin{eqnarray}\label{large.3.1-K-error-dec}
    &&E\left[\left\|
    \sum_{j=1}^m\frac{|D_j|}{|D|}\sum_{k=k_j^*+1}^{k_0}g_{D_j,\lambda_k}
   \right\|_K^2\right]
   \leq
    2\mu_j ^{-1}E\left[
\sum_{j=1}^{m}\frac{|D_j|^2}{|D|^2}\left\|\sum_{k=k_j^*+1}^{k_0}(L_{K}+\mu_j  I)^{1/2}(g_{D_j,\lambda_k}-g_{D_j,\lambda_k,\mu_j }^{loc})\right\|^2_K\right]\nonumber\\
&+&
   4\mu_j ^{-1}E\left[
  \sum_{j=1}^{m}\frac{|D_j|}{|D|}\left\|\sum_{k=k_j^*+1}^{k_0}(L_{K}+\mu_j  I)^{1/2}(g_{D_j,\lambda_k}^\diamond-g_{D_j,\lambda_k,\mu_j }^{loc,\diamond})\right\|^2_K\right]\nonumber\\
  &+&
  2C_{LP}E\left[ \sum_{j=1}^m\frac{|D_j|}{|D|}  \left(\sum_{k=k_j^*+1}^{k_0}\mathcal Q_{D_j,\lambda_k}^*\lambda_k^{-1/2}\lambda_{k-1}\overline{\mathcal W}_{D,\lambda_k}^{1/2}\right)^2\right].
\end{eqnarray}
However, Proposition \ref{Prop:Error-local-KRR} together with Lemma \ref{Lemma:Q333-111} yields that 
\begin{eqnarray}\label{large.3.3}
    &&E\left[
  \sum_{j=1}^{m}\frac{|D_j|}{|D|}\left\|\sum_{k=k_j^*+1}^{k_0}(L_{K}+\mu_j  I)^{1/2}(g_{D_j,\lambda_k}^\diamond-g_{D_j,\lambda_k,\mu_j }^{loc,\diamond})\right\|^2_K\right]\nonumber\\
  &\leq&
  b^2\|h_\rho\|_\rho^2\mu_j ^{2r}\sum_{j=1}^{m}\frac{|D_j|}{|D|} E\left[ \left(\sum_{k=k_j^*+1}^{k_0}\lambda_{k-1}\left( \mathcal Q_{D_j,\mu_j }^{2r}+  \mathcal Q^*_{D_j,\mu_j }\mathcal Q_{\Xi_L,\mu_j }(1+\mathcal Q_{D_j,\mu_j }\mathcal Q^*_{D_j,\mu_j })\mathcal Q_{D_j,\lambda_k}^{2r-1}    \right)\right)^2 \right]\nonumber\\
  &\leq&
  2(2+6\sqrt{2})^2  \|h_\rho\|_\rho^2\mu_j ^{2r}  \log^2 |D|
\end{eqnarray}
and
\begin{eqnarray}\label{large.3.4}
    &&E\left[
  \sum_{j=1}^{m}\frac{|D_j|}{|D|}\left\|\sum_{k=k_j^*+1}^{k_0}(L_{K}+\mu_j  I)^{1/2}(g_{D_j,\lambda_k}^\diamond-g_{D_j,\lambda_k,\mu_j }^{loc,\diamond})\right\|^2_K\right]\nonumber\\
  &\leq&
  \sum_{j=1}^{m}\frac{|D_j|}{|D|} 
   E\left[
  \left(
   \sum_{k=k_j^*+1}^{k_0}
   b\|h_\rho\|_\rho\lambda_{k-1}\mu_j ^{r}\left( \mathcal Q_{D_j,\mu_j }^{2r}+  \mathcal Q^*_{D_j,\mu_j }\mathcal Q_{\Xi_L,\mu_j }(1+\mathcal Q_{D_j,\mu_j }\mathcal Q^*_{D_j,\mu_j })\mathcal Q_{D_j,\lambda_k}^{2r-1}    \right) \right.\right.\nonumber\\
    &+&
     \left.\left. \sum_{k=k_j^*+1}^{k_0}b\mu_j ^{1/2}\lambda_k^{1/2}\mathcal P_{D_j,\lambda_k}\left(\mathcal Q_{D_j,\mu_j } \mathcal Q_{D,\lambda_k}+\mathcal Q^*_{D_j,\mu_j }\mathcal Q_{\Xi_L,\mu_j }(1+\mathcal Q_{D_j,\mu_j }\mathcal Q^*_{D_j,\mu_j })\mathcal Q_{D_j,\lambda_k}\right)
\right)^2\right]\nonumber\\
  &\leq&
  2(2+6\sqrt{2})^2  \|h_\rho\|_\rho^2\mu_j ^{2r}  \log^2 |D|\nonumber\\
  &+&96b^2 (2+6\sqrt{2})^2(1+C_0b)^2 (\kappa M +\gamma)^2 \mu_j   \sum_{j=1}^{m}\frac{|D_j|}{|D|} \left(|D|^{\frac1{4r+2s}}|D_j|^{-1}+|D|^\frac{s+1}{4r+2s}|D_j|^{-\frac12}\right)^2.
\end{eqnarray}
Plugging \eqref{large.3.3} and \eqref{large.3.4} into \eqref{large.3.1-rho-error-dec} and noting Lemma \ref{Lemma:Q333-111}, \eqref{res.theorem} and \eqref{Selection-mu}, we obtain
\begin{eqnarray*}
   E\left[\left\|
    \sum_{j=1}^m\frac{|D_j|}{|D|}\sum_{k=k_j^*+1}^{k_0}g_{D_j,\lambda_k}
   \right\|_\rho^2\right]
   \leq
   \tilde{C}_3  |D|^{-\frac{2r}{2r+s}},
\end{eqnarray*}
where 
$
   \tilde{C}_3:=8(2+6\sqrt{2})^2  \|h_\rho\|_\rho^2
   +768 b^2 (2+6\sqrt{2})^2(1+C_0b)^2 (\kappa M +\gamma)^2+16(40\sqrt{2}b^{-1})^2\Gamma(5). 
$
Moreover, inserting \eqref{large.3.3} and \eqref{large.3.4} into \eqref{large.3.1-K-error-dec}, we obtain from Lemma \ref{Lemma:Q333-111} that
\begin{eqnarray*}
   E\left[\left\|
    \sum_{j=1}^m\frac{|D_j|}{|D|}\sum_{k=k_j^*+1}^{k_0}g_{D_j,\lambda_k}
   \right\|_K^2\right]
   \leq
   \tilde{C}_3|D|^{-\frac{2r-1}{2r+s}}.
\end{eqnarray*}
This completes the proof of Proposition \ref{Prop:bound-for-difference}.
$\Box$
\end{proof}

To derive the generalization error for $\overline{f}_j$, we need the following standard results in distributed learning, whose proof can be found in \cite{wang2023adaptive}.
\begin{lemma}\label{Lemma:distributed-rate-fixed}
Under Assumption  \ref{Assumption:boundedness}, Assumption \ref{Assumption:regularity} with $\frac12\leq r\leq 1$ and Assumption \ref{Assumption:effective dimension} with $0<s\leq 1$, if \eqref{res.theorem} holds and $\lambda_0=\lambda_{k_0}$ with $k_0=\left[|D|^{\frac{1}{2r+s}} \right]$, then
$$
    E\left[  \left\|(L_K+\lambda_0 I)^{1/2} \left(\sum_{j=1}^{m}\frac{|D_j|}{|D|}f_{D_j,\lambda_0}-f_\rho\right)\right\|_K^2\right] 
       \leq \tilde{C}_4 |D|^{-\frac{2r}{2r+s}},
$$
where $\tilde{C}_4$ is a constant depending only on $r,s,\kappa,\gamma,M,b$.
\end{lemma}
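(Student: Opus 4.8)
The plan is to prove Lemma~\ref{Lemma:distributed-rate-fixed} by the classical bias--variance splitting of distributed kernel ridge regression, carried out in the weighted norm $\|(L_K+\lambda_0 I)^{1/2}\cdot\|_K$. Writing $\overline{f}_{D,\lambda_0}=\sum_{j=1}^m\frac{|D_j|}{|D|}f_{D_j,\lambda_0}$ for the estimate appearing in the statement and inserting the noise-free local estimates $f^\diamond_{D_j,\lambda_0}$ of \eqref{KRR:noise-free}, I would write
\begin{align*}
  (L_K+\lambda_0 I)^{1/2}(\overline{f}_{D,\lambda_0}-f_\rho)
  ={}&\sum_{j=1}^m\tfrac{|D_j|}{|D|}(L_K+\lambda_0 I)^{1/2}(f_{D_j,\lambda_0}-f^\diamond_{D_j,\lambda_0})\\
  &{}+\sum_{j=1}^m\tfrac{|D_j|}{|D|}(L_K+\lambda_0 I)^{1/2}(f^\diamond_{D_j,\lambda_0}-f_\rho),
\end{align*}
and bound the squared $\|\cdot\|_K$-norm by $2(\mathrm{I}+\mathrm{II})$, with $\mathrm{I}$ the variance (sample) term and $\mathrm{II}$ the bias (approximation) term. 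The preliminary observation is that, for $\lambda_0=\lambda_{k_0}=1/(k_0 b)$ with $k_0=[|D|^{1/(2r+s)}]$ and under the admissibility restriction \eqref{res.theorem}, one has $\lambda_0|D_j|\gtrsim\log^3|D|$ for every $j$, so $\lambda_0\ge 16(C_1^*)^2\log^3(16|D_j|)/|D_j|$ and the smallness condition \eqref{Res.delta} hold at the single scale $\lambda=\lambda_0$; consequently every operator-concentration bound of Lemma~\ref{Lemma:Q-1111} and every moment bound of Lemma~\ref{Lemma:Q222-111} is available for $\lambda_0$.

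For $\mathrm{II}$, since $\sum_j\frac{|D_j|}{|D|}=1$, Jensen's inequality reduces it to $\sum_j\frac{|D_j|}{|D|}E[\|(L_K+\lambda_0 I)^{1/2}(f^\diamond_{D_j,\lambda_0}-f_\rho)\|_K^2]$, which by \eqref{bound.approximation-error} and $E[\mathcal Q_{D_j,\lambda_0}^{4r}]\lesssim1$ (Lemma~\ref{Lemma:Q222-111}) is $\lesssim\|h_\rho\|_\rho^2\lambda_0^{2r}\asymp|D|^{-2r/(2r+s)}$. For $\mathrm{I}$, the key point is that $E[f_{D_j,\lambda_0}\mid x]=f^\diamond_{D_j,\lambda_0}$ by \eqref{KRR-local:operator}, so $f_{D_j,\lambda_0}-f^\diamond_{D_j,\lambda_0}$ is centered; applying Lemma~\ref{Lemma:power-synthesization} with $g_j=(L_K+\lambda_0 I)^{1/2}(f_{D_j,\lambda_0}-f^\diamond_{D_j,\lambda_0})$ annihilates the synthesization cross term and gives $\mathrm{I}\le\sum_j\frac{|D_j|^2}{|D|^2}E[\|g_j\|_K^2]$. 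Then \eqref{bound.sample-error} and the last estimate of Lemma~\ref{Lemma:Q222-111} give $E[\|g_j\|_K^2]\le E[\mathcal Q_{D_j,\lambda_0}^4\mathcal P_{D_j,\lambda_0}^2]\lesssim(\kappa M+\gamma)^2\mathcal W_{D_j,\lambda_0}^2$, hence $\mathrm{I}\lesssim(\kappa M+\gamma)^2\overline{\mathcal W}_{D,\lambda_0}$ with $\overline{\mathcal W}$ as in \eqref{global-111}; bounding $\mathcal W_{D_j,\lambda_0}^2\lesssim\max\{\mathcal N(\lambda_0),1\}/|D_j|\le C_0\lambda_0^{-s}/|D_j|$ (using $\lambda_0|D_j|\gtrsim1$ to absorb the lower-order terms in \eqref{Def.WD}, then \eqref{Relation-effection}, Assumption~\ref{Assumption:effective dimension}, and $\lambda_0\le1\le C_0$) and $\sum_j|D_j|=|D|$ yields $\mathrm{I}\lesssim C_0\lambda_0^{-s}/|D|\asymp|D|^{s/(2r+s)-1}=|D|^{-2r/(2r+s)}$. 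Adding the two contributions gives the claimed bound.

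I do not expect a genuine obstacle, since this is essentially the argument of \cite{wang2023adaptive}; the one step that must be done carefully is the variance estimate, where one has to exploit the centering of $f_{D_j,\lambda_0}-f^\diamond_{D_j,\lambda_0}$ together with Lemma~\ref{Lemma:power-synthesization} to get the quadratic weights $|D_j|^2/|D|^2$ rather than the linear weights $|D_j|/|D|$ produced by a naive triangle or Jensen bound — the latter would leave a spurious factor $m$ and destroy the rate. The restriction \eqref{res.theorem} is precisely what is needed to keep all concentration inequalities valid at the single scale $\lambda_0$ and, at the same time, to make both $\lambda_0^{2r}$ and $C_0\lambda_0^{-s}/|D|$ collapse to the optimal order $|D|^{-2r/(2r+s)}$.
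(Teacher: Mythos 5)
Your argument is correct, but note that the paper itself does not prove this lemma at all: it is stated as a ``standard result'' with the proof deferred to the cited reference \cite{wang2023adaptive}, so there is no in-paper proof to match. What you supply is a self-contained derivation along the classical lines that also underlie Lemma~\ref{Lemma:classical-error-dec}: split $\sum_j\frac{|D_j|}{|D|}f_{D_j,\lambda_0}-f_\rho$ through the noise-free estimates $f^\diamond_{D_j,\lambda_0}$, bound the bias by Jensen together with \eqref{bound.approximation-error}, and exploit the conditional centering $E[f_{D_j,\lambda_0}\mid x_{D_j}]=f^\diamond_{D_j,\lambda_0}$ via Lemma~\ref{Lemma:power-synthesization} so that the variance term carries the weights $|D_j|^2/|D|^2$; combined with \eqref{bound.sample-error}, the moment bounds of Lemma~\ref{Lemma:Q-1111}--Lemma~\ref{Lemma:Q222-111} (valid at the single scale $\lambda_0$ because \eqref{res.theorem} guarantees $\lambda_0|D_j|\gtrsim\log^3|D|$), and Assumption~\ref{Assumption:effective dimension}, both contributions collapse to $|D|^{-2r/(2r+s)}$ as required. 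This is exactly the mechanism used in the distributed-KRR literature the paper cites, so your route buys self-containedness rather than novelty. One step worth tightening: $\mathcal W_{D_j,\lambda_0}$ in \eqref{Def.WD} involves the \emph{empirical} effective dimension, hence is random, so bounding $E[\mathcal Q_{D_j,\lambda_0}^4\mathcal P_{D_j,\lambda_0}^2]$ by a multiple of $\mathcal W_{D_j,\lambda_0}^2$ and then replacing $\mathcal W_{D_j,\lambda_0}^2$ by $C_0\lambda_0^{-s}/|D_j|$ requires either an extra high-probability step through \eqref{Relation-effection} followed by Lemma~\ref{Lemma:prob-to-exp}, or, more cleanly, a direct appeal to the population bound \eqref{bound-p-population} together with Lemma~\ref{Lemma:prob-to-exp}; this is a presentational detail (the paper handles Lemma~\ref{Lemma:Q222-111} at the same level of informality) and does not affect the final rate.
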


We then derive the generalization error for $\overline{f}_j$ in the following proposition. 
\begin{proposition}\label{Prop:large-lambda}
Let $1\leq j\leq m$ and $\lambda_0=\lambda_{k_0}$ for $k_0=[|D|^{1/(2r+s)}]$.  
Under Assumption \ref{Assumption:boundedness}, Assumption \ref{Assumption:regularity}   with $\frac12\leq r\leq1$ and Assumption \ref{Assumption:effective dimension} with $0\leq s\leq 1$, if $\lambda_{k_j^*}>\lambda_0$, \eqref{res.theorem} holds,  $\mu_j $ satisfies \eqref{Selection-mu} and $\Xi_L$ is a set of i.i.d. random variables with $L\geq \max\{|D_1|,\dots,|D_m|\}$, then
\begin{equation}\label{large-rho-norm}
    E[\|(L_K+\lambda_0)^{1/2}(\overline{f}_j-f_\rho)\|^2_\rho] 
  \leq 
  \tilde{C}_5|D|^{-\frac{2r}{2r+s}}, 
\end{equation}
 where $\tilde{C}_5$ is a constant depending only on $M,\kappa,b,\|h\|_\rho,\gamma,r,s$.
\end{proposition}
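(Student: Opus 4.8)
The plan is to reduce the over-estimated case to two facts we already possess: the fixed-parameter rate of classical DKRR at $\lambda_0$ (Lemma~\ref{Lemma:distributed-rate-fixed}) and the control of the telescoping sum of successive differences (Proposition~\ref{Prop:bound-for-difference}). Since $\lambda_k=\frac1{kb}$ is strictly decreasing in $k$, the hypothesis $\lambda_{k_j^*}>\lambda_0=\lambda_{k_0}$ is equivalent to $k_j^*<k_0$, so for every $j'$ the telescoping identity $f_{D_{j'},\lambda_{k_j^*}}=f_{D_{j'},\lambda_0}-\sum_{k=k_j^*+1}^{k_0}g_{D_{j'},\lambda_k}$ is available. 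Multiplying by $|D_{j'}|/|D|$, summing over $j'$, and using $\sum_{j'}|D_{j'}|/|D|=1$ yields
$$
   \overline{f}_j-f_\rho
   =\Big(\sum_{j'=1}^m\frac{|D_{j'}|}{|D|}f_{D_{j'},\lambda_0}-f_\rho\Big)
   -\sum_{j'=1}^m\frac{|D_{j'}|}{|D|}\sum_{k=k_j^*+1}^{k_0}g_{D_{j'},\lambda_k}.
$$
First I would apply $(L_K+\lambda_0 I)^{1/2}$ to both sides, use the triangle inequality together with $(a+b)^2\le 2a^2+2b^2$, and take expectation, splitting the bound into twice a ``fixed-$\lambda_0$ DKRR'' term and twice a ``telescoping'' term.

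For the fixed-$\lambda_0$ term, Lemma~\ref{Lemma:distributed-rate-fixed} gives $E[\|(L_K+\lambda_0 I)^{1/2}(\sum_{j'}\frac{|D_{j'}|}{|D|}f_{D_{j'},\lambda_0}-f_\rho)\|_K^2]\le\tilde{C}_4|D|^{-\frac{2r}{2r+s}}$, and since $\|L_K^{1/2}\|\le\kappa$ we have $\|(L_K+\lambda_0 I)^{1/2}g\|_\rho\le\kappa\|(L_K+\lambda_0 I)^{1/2}g\|_K$, which converts this into the required $\rho$-norm bound up to a factor $\kappa^2$. For the telescoping term I would likewise pass to the $K$-inner product, writing $\|(L_K+\lambda_0 I)^{1/2}g\|_\rho^2\le\kappa^2\big(\|g\|_\rho^2+\lambda_0\|g\|_K^2\big)$, and then invoke Proposition~\ref{Prop:bound-for-difference}, applied with the fixed over-estimated index $j$ and summation variable $j'$: it controls the $\rho$-norm of that telescoping sum at the rate $|D|^{-\frac{2r}{2r+s}}$ and its $K$-norm at the weaker rate $|D|^{-\frac{2r-1}{2r+s}}$ (these are the estimates established in its proof). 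Multiplying the latter by $\lambda_0=\frac1{k_0 b}\le 2|D|^{-\frac1{2r+s}}$ restores the optimal rate, so both pieces are $O(|D|^{-\frac{2r}{2r+s}})$. Summing the two contributions and absorbing constants yields \eqref{large-rho-norm} with $\tilde{C}_5$ depending only on $M,\kappa,b,\|h\|_\rho,\gamma,r,s$.

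The proof is essentially an assembly step, so there is no deep obstacle; the two points that require care are organizational. One is the $\lambda_0$ book-keeping just described — checking that multiplying the weaker $K$-norm telescoping rate $|D|^{-\frac{2r-1}{2r+s}}$ by $\lambda_0$ recovers $|D|^{-\frac{2r}{2r+s}}$, which hinges on $k_0=[|D|^{1/(2r+s)}]\ge\tfrac12|D|^{1/(2r+s)}$. The other is purely notational: in Proposition~\ref{Prop:bound-for-difference} the lower summation limit $k_j^*$ belongs to the \emph{fixed} over-estimated agent $j$, whereas $g_{D_{j'},\lambda_k}$ and the weight $|D_{j'}|/|D|$ range over $j'$; one must read that proposition with exactly this pairing, which is the one used in its own proof (where $\mu_j$ appears as a fixed quantity outside the $j'$-sum). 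Everything else — the triangle inequality, the $(a+b)^2$ split, and tracking the $\kappa$ factors — is routine.
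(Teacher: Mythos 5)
Your proposal is correct and follows essentially the same route as the paper's proof: telescoping $f_{D_{j'},\lambda_{k_j^*}}$ against $f_{D_{j'},\lambda_0}$, bounding the fixed-$\lambda_0$ term by Lemma \ref{Lemma:distributed-rate-fixed} and the telescoping term by Proposition \ref{Prop:bound-for-difference}, with $\lambda_0\sim|D|^{-1/(2r+s)}$ absorbing the weaker $K$-norm rate of the telescoping sum. The only differences are cosmetic: the paper bounds the $(L_K+\lambda_0 I)^{1/2}$-weighted $K$-norm directly and is silent on the $\rho$/$K$ subscript and on the $j$ versus $j'$ indexing you flag, while you insert a harmless factor $\kappa^2$ to recover the $\rho$-subscripted form exactly as stated.
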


\begin{proof}
 The triangle inequality follows
\begin{eqnarray*} 
        &&\|(L_K+\lambda_0 I)^{1/2}(\overline{f}_j-f_\rho)\|_K\\
      &\leq&
     \left\|(L_K+\lambda_0 I)^{1/2}\left(\sum_{j'=1}^m \frac{|D_{j'}|}{|D|}
      f_{D_{j'},\lambda_j^*}-\sum_{j'=1}^m \frac{|D_{j'}|}{|D|}
      f_{D_{j'},\lambda_0}\right)\right\|_K 
      +
     \left\|(L_K+\lambda_0 I)^{1/2}\left(\sum_{j'=1}^m \frac{|D_{j'}|}{|D|}
      f_{D_{j'},\lambda_0}-f_\rho\right)\right\|_K \\
       &\leq&
       \left\|(L_K+\lambda_0 I)^{1/2}\left(\sum_{j'=1}^m \frac{|D_{j'}|}{|D|}
      f_{D_{j'},\lambda_0}-f_\rho\right)\right\|_K
      +
      \left\|(L_K+\lambda_0 I)^{1/2} \sum_{k=k^*_j+1}^{k_0} \left(\sum_{j'=1}^m \frac{|D_{j'}|}{|D|}g_{D_{j'},\lambda_{k}}\right)\right\|_K.
\end{eqnarray*}
Combining the above inequality with Lemma \ref{Lemma:distributed-rate-fixed} and Proposition \ref{Prop:bound-for-difference}, we obtain
$$
    E\left[\|(L_K+\lambda_0 I)^{1/2}(\overline{f}_j-f_\rho)\|_K^2\right]
    \leq
    (4\tilde{C}_3+2\tilde{C}_4)|D|^{-\frac{2r}{2r+s}}.
$$
This completes the proof of Proposition \ref{Prop:large-lambda} with $\tilde{C}_5:=24\tilde{C}_3+2\tilde{C}_4$. $\Box$
\end{proof}

\subsection{Proof of Theorem \ref{Theorem:Optimal-Rate-adaptive}}
Theorem \ref{Theorem:Optimal-Rate-adaptive} is then a direct consequence of Proposition \ref{Prop:large-lambda} and Proposition \ref{Prop:small-lambda}.

\begin{proof}[Proof of Theorem \ref{Theorem:Optimal-Rate-adaptive}]
 It follows from Lemma \ref{Lemma:power-synthesization}, \eqref{KRR-local:operator} and Jensen's inequality   that 
\begin{eqnarray*} 
      && E[\|(L_K+\lambda_0 I)^{1/2}(\overline{f}_{D,\vec{\lambda}^*}-f_\rho)\|_K^2]\\
      &\leq &
     \sum_{j=1}^m\frac{|D_j|}{|D|}
      \|(L_K+\lambda_0 I)^{1/2}(E[\overline{f}_j]-f_\rho)\|_K^2]
     +
     \sum_{j=1}^m\frac{|D_j|^2}{|D|^2}E[\|(L_K+\lambda_0 I)^{1/2}(\overline{f}_j-f_\rho)\|_K^2] \nonumber \\
     &\leq&
 \sum_{j:\lambda_j^*\leq\lambda_0}  \frac{|D_j|}{|D|}
      \left(\|(L_K+\lambda_0 I)^{1/2}(E[\overline{f}_j]-f_\rho)\|_K^2]
     +   
     \frac{|D_j|}{|D|}E[\|(L_K+\lambda_0 I)^{1/2}(\overline{f}_j-f_\rho)\|^2_K]\right)\\
     &+&
     2\sum_{j:\lambda_j^*>\lambda_0}\frac{|D_j|}{|D|}
     E[\|(L_K+\lambda_0 I)^{1/2}(\overline{f}_j-f_\rho)\|_K^2],
\end{eqnarray*}
 But Proposition \ref{Prop:small-lambda} yields
$$
   \sum_{j:\lambda_j^*\leq\lambda_0}  \frac{|D_j|}{|D|}
      \left(\|E[(L_K+\lambda_0 I)^{1/2} (\overline{f}_j]-f_\rho)\|_K^2]
     +   
     \frac{|D_j|}{|D|}E[\|(L_K+\lambda_0 I)^{1/2}(\overline{f}_j-f_\rho)\|^2_K]\right)
     \leq 2\tilde{C}_2|D|^{-\frac{2r}{2r+s}},
$$
while Proposition \ref{Prop:large-lambda} implies
$$
  2\sum_{j:\lambda_j^*>\lambda_0}\frac{|D_j|}{|D|}
     E[\|(L_K+\lambda_0 I)^{1/2}(\overline{f}_j-f_\rho)\|_K^2]
     \leq
     2\tilde{C}_5|D|^{-\frac{2r}{2r+s}}.
$$
Hence, we get 
\begin{eqnarray*} 
      E[\|(L_K+\lambda_0 I)^{1/2}(\overline{f}_{D,\vec{\lambda}^*}-f_\rho)\|_K^2]
     \leq \bar{C}|D|^{-\frac{2r}{2r+s}}.
\end{eqnarray*} 
This together with $\|f\|_\rho=\|L_K^{1/2}f\|_K$ yields
$$
    E\left[\|(\overline{f}_j-f_\rho)\|_\rho^2\right]
    \leq
    (4\tilde{C}_3+2\tilde{C}_4)|D|^{-\frac{2r}{2r+s}} 
$$
and together with $\|f\|_K\leq \lambda_0^{-1/2}\|(L_K+\lambda_0)^{1/2}f\|_K$ with $\lambda_0\sim|D|^{-\frac{1}{2r+s}}$ follows
$$
    E\left[\|(\overline{f}_j-f_\rho)\|_K^2\right]
    \leq
    (4\tilde{C}_3+2\tilde{C}_4)|D|^{-\frac{2r-1}{2r+s}}.
$$
This completes the proof of Theorem \ref{Theorem:Optimal-Rate-adaptive}.
$\Box$
\end{proof}

\bibliographystyle{plain}
\bibliography{dis}
\end{document}